\documentclass[twocolumn]{IEEEtran}
\IEEEoverridecommandlockouts                              
\bstctlcite{IEEEexample:BSTcontrol}

\usepackage[utf8]{inputenc} 
\usepackage[T1]{fontenc}    
\usepackage{hyperref}       
\usepackage{url}            
\usepackage{booktabs}       
\usepackage{amsfonts}       
\usepackage{nicefrac}       
\usepackage{comment}
\usepackage{lmodern}
\usepackage{scalefnt}
\usepackage{setspace}
\usepackage{romannum}
\usepackage[dvipsnames]{xcolor}
\usepackage{tkz-berge}
\usetikzlibrary{fit,shapes}
\usepackage{calrsfs}
\DeclareMathAlphabet{\pazocal}{OMS}{zplm}{m}{n}
\usepackage{graphicx}
\usepackage{graphics}
\usepackage{epsfig}
\usepackage{mathptmx}
\usepackage{caption}
\usepackage{subcaption}
\usepackage{tikz}
\usetikzlibrary{positioning}
\usepackage{amsmath, amssymb}
\usetikzlibrary{arrows}
\usepackage{dsfont}
\newcommand{\indic}{\mathds{1}}
\usepackage{amsthm}
\usepackage{graphicx}
\DeclareMathOperator*{\argmax}{arg\,max}
\DeclareMathOperator*{\argmin}{arg\,min}

\usepackage{kantlipsum}
\usepackage{float}
\usepackage{amsfonts}
\usepackage{setspace}
\usepackage{multirow}
\usepackage[english]{babel}
\usepackage{float}
\usepackage{algorithmicx}
\usepackage[section]{algorithm}
\usepackage{algpascal}
\usepackage{algc}
\usepackage{algcompatible}
\usepackage{algpseudocode}
\let\oldReturn\Return
\renewcommand{\Return}{\State\oldReturn}
\usepackage{mathtools}
\usepackage{mathptmx}
\usepackage{mathtools, cuted}
\usepackage{textcomp}
\usepackage[english]{babel}
\usepackage{rotating}
\usepackage{pgfplots}
\pgfplotsset{compat=1.5}
\usepackage{siunitx}
\usepackage{pgfplotstable}
\usepackage{bbold}
\usepackage{float}

\newtheorem{theorem}{Theorem}

\newtheorem{lemma}{Lemma}
\newtheorem{prop}{Proposition}

\newtheorem{defi}{Definition}
\newtheorem{assume}{Assumption}

\newtheorem{remark}{Remark}

\renewcommand{\footnotesize}{\scriptsize}

\DeclareMathAlphabet{\pazocal}{OMS}{zplm}{m}{n}

\newcommand{\SE}{\mathrm{SD}}  
\newcommand{\pS}{{\pazocal{S}}}

\newcommand{\X}{\pazocal{X}}

\newcommand{\pR}{\pazocal{R}}
\newcommand{\pG}{{\pazocal{G}}}
\newcommand{\N}{\pazocal{N}}
\newcommand{\bR}{\mathbb{R}}
\newcommand{\bE}{\mathbb{E}}

\renewcommand{\l}{\ell}
\newcommand{\wB}{\widehat{B}}
\newcommand{\wW}{\widehat{W}}
\newcommand{\ww}{\widehat{w}}
\newcommand{\Thetahat}{\widehat{\Theta}}
\newcommand{\Thetas}{\Theta^\star}

\newcommand{\GradB}{\mathrm{GradB}}
\newcommand{\NSR}{\mathrm{NSR}}
\newcommand{\svdeq}{\overset{\mathrm{SVD}}=} 
\newcommand{\sigmin}{{\sigma_{\min}^\star}}
\newcommand{\sigmax}{{\sigma_{\max}^\star}}
\newcommand{\Var}{\mathrm{Var}}

\newcommand{\bnt}{{b_{\nt}}}
\newcommand{\rbnt}{{r_{\bnt}}}
\newcommand{\kbnt}{{\kappa_{\bnt}}}
\newcommand{\thetats}{\theta_t^\star}
\newcommand{\thetahat}{\widehat{\theta}}
\newcommand{\thetahatmt}{\widehat{\theta}_{m-1, t}}
\newcommand{\thetahatt}{\widehat{\theta}_t}
\usepackage{dsfont}
\newcommand{\delt}{\delta_\l}
\newcommand{\delto}{\delta_{\l + 1}}
\newcommand{\nt}{{n, t}}
\newcommand{\NT}{{N, T}}
\newcommand{\xnt}{x_\nt}

\newcommand{\pxtc}{x_\nt}
\newcommand{\pb}{x_{b_{n, t}}}
\newcommand{\norm}[1]{\left\lVert#1\right\rVert}
\newcommand{\sm}{{\scalebox{.6}{(m)}}}
\newcommand{\szero}{{\scalebox{.6}{(0)}}}
\newcommand{\sone}{{\scalebox{.6}{(1)}}}
\newcommand{\smo}{{\scalebox{.6}{(m-1)}}}

\newcommand{\ymt}[2]{Y_{#1}^{{\scalebox{.6}{(#2)}}}}
\newcommand{\phimt}[2]{\Phi_{#1}^{{\scalebox{.6}{(#2)}}}}
\newcommand{\etamt}[2]{\eta_{#1}^{{\scalebox{.6}{(#2)}}}}
\newcommand{\Bm}[1]{\wB^{\scalebox{.6}{(#1)}}}
\newcommand{\Wm}[1]{\wW^{\scalebox{.6}{(#1)}}}

\usepackage{wrapfig}
\usepackage{cite}

\definecolor{deepblue}{HTML}{1f77b4}
\definecolor{orange}{HTML}{ff7f0e}
\definecolor{deepgreen}{HTML}{2ca02c}
\definecolor{deepred}{HTML}{d62728}

\DeclareRobustCommand{\legendsquare}[1]{%
  \textcolor{#1}{\rule{1ex}{1ex}}%
}

\newcommand{\cblue}{\color{black}}
\newcommand{\cred}{\color{black}}

\newcommand\scalemath[2]{\scalebox{#1}{\mbox{\ensuremath{\displaystyle #2}}}}

\title{\LARGE \bf Multi-Task Representation Learning for {\cblue Conservative} Linear Bandits}
\author{Jiabin Lin and Shana Moothedath
\thanks{J. Lin is with the Department of Computer Science and Technology, Qingdao University, China, and S. Moothedath is with Department of Electrical and Computer Engineering, Iowa State University, USA. Email: linjiabin@qdu.edu.cn,mshana@iastate.edu. A prior related work was published in the 2024 International Conference on Machine Learning (ICML, 2024) \cite{OurICML}. This work was supported by the U.S. National
Science Foundation under Grant NSF CAREER 2440455.}
}

\begin{document}
\pagenumbering{arabic}
\maketitle

\begin{abstract}
This paper presents the Constrained Multi-Task Representation Learning (CMTRL) framework for linear bandits. {\cblue We consider $T$ linear bandit tasks in a $d$ dimensional space, which share a common low-dimensional representation of dimension $r \ll \min\{d, T\}$.} 
Furthermore, tasks are constrained so that only actions meeting specific safety or performance requirements are allowed, referred to as {\cblue conservative} (safe) bandits. We introduce a novel algorithm, Safe-Alternating projected Gradient Descent and minimization (Safe-AltGDmin), to recover a low-rank feature matrix while satisfying the given constraints. Building on this algorithm, we propose a multi-task representation learning framework for {\cblue conservative} linear bandits and establish theoretical guarantees for its regret and sample complexity bounds. We presented experiments and compared the performance of our algorithm with benchmark algorithms.
\end{abstract}
\begin{IEEEkeywords}
Multi-task representation learning, {\cblue Conservative} linear bandits, Low-dimensional learning
\end{IEEEkeywords}
\section{Introduction}
Stochastic linear bandits are a fundamental online learning framework in which an agent sequentially selects actions to maximize cumulative rewards while balancing exploration and exploitation \cite{lattimore2020bandit}. They arise in applications such as robotics, clinical trials, communications, and recommender systems.
%
Recently, multi-task learning, which enables the simultaneous learning of multiple related tasks, has garnered significant attention due to its ability to leverage shared structures across tasks to enhance performance {\cblue \cite{Jiabin_neurips, duong2025beyond}.} 
A common assumption in multi-task learning is that different tasks share an underlying representation. This shared structure enables the model to leverage information across tasks, referred to as {\em multi-task representation learning (MTRL)}, improving overall learning efficiency and performance \cite{du2020few, collins2021exploiting, tripuraneni2021provable, cella2021multi, thekumparampil2021sample}.

The MTRL approach is particularly beneficial in data-scarce settings, as it allows models to improve efficiency by sharing representations. 
Motivated by the immense success of MTRL in the supervised learning (static data) setting, recent works have begun exploring its potential in sequential decision-making problems, including bandit learning \cite{Jiabin_neurips, duong2025beyond,yang2020impact, cella2023multi,cella2021multi,hu2021near,du2023multi, OurICML} and reinforcement learning {\cblue \cite{ishfaq2024offline, nagaraj2023multi, borsa2016learning,lu2022provable}.}
The primary goal of these works is to learn shared representations that enable effective learning across tasks, enhancing performance with limited data and reducing overall sample complexity.
However, in many practical real-world scenarios, such as autonomous driving, power systems, and finance, there are often safety or performance constraints that must be considered. Here, each action must meet certain performance/safety thresholds, particularly in safety-critical applications. While recent studies have addressed conservative or safe bandits (single-task setting)  {\cblue \cite{gangrade2025feasible, moradipari2020stage, kazerouni2017conservative, khezeli2020safe, camilleri2022active},} works on distributed (multi-task) bandit learning under safety constraints remains limited \cite{lin2024distributed}. Representation learning in constrained multi-task bandits is underexplored and is the main focus of this paper.

This paper presents a novel framework {\em Constrained Multi-Task Representation Learning (CMTRL)}, where the goal is to learn shared representations across multiple tasks while adhering to predefined safety or performance constraints.
By addressing both learning and constraint satisfaction, CMTRL extends the benefits of multi-task learning to constrained learning.
To the best of our knowledge, this is the first work to consider this problem and to investigate it with provable guarantees. The key contributions are presented below.

\noindent{\bf Main Contributions:}
We present a new algorithmic framework, the Safe-Alternating gradient descent and minimization (Safe-AltGDmin) algorithm, to solve the CMTRL problem with a low-rank common representation and present regret and sample complexity bounds. 
Our algorithm consists of three main components: safe exploration and safety estimation, greedy exploration, and reward estimation.
It starts with safe exploration based on a known baseline policy without any model information
Subsequently, the algorithm utilizes estimates of unknown reward parameters learned using the AltGDmin approach to choose actions via a greedy strategy. In contrast to previous work on {\cblue conservative} bandits \cite{lin2025distributed, moradipari2020stage}, which eliminates unsafe actions, our estimator-based approach establishes bounds on safe exploration rounds and GD iterations, facilitating efficient safe learning. The Safe-AltGDmin algorithm is based on gradient descent (GD), making it computationally efficient. Due to the non-convex nature of the cost function, we employ a careful initialization for GD referred to as spectral initialization. Specifically, we show that with carefully chosen learning parameters, all actions taken during the safe exploration phase and the greedy exploration phase adhere to the constraints. Under certain sample complexity conditions, we illustrate that the subspace distance (SD) between the shared model and the learned model decreases exponentially with high probability.
Finally, we present a bound on the cumulative regret, which demonstrates a sublinear relation with respect to the number of tasks. 
To verify our theoretical findings, we performed comprehensive experiments utilizing synthetic data and real Movielens datasets. The experimental results demonstrate that our algorithm consistently satisfies the constraints and outperforms existing benchmark algorithms.

In summary, our key contributions are fourfold. (1)~First, we formulated the CMTRL framework, establishing a foundational approach for addressing multi-task problems in safety-constrained multi-task bandits. (2)~Second, we developed a novel Safe-AltGDmin algorithm that enhances the learning process across multiple tasks, improving both learning efficiency and performance. (3)~Third, we established the convergence and regret guarantees of the algorithm and sample efficiency. (4)~Finally, we conducted extensive experiments to validate our theoretical results, showcasing the practical applicability and performance of our approach.

\noindent{\bf Related Work:}
Low-dimensional representation learning has emerged as a popular research area in supervised \cite{du2020few, tripuraneni2021provable, collins2021exploiting} and dynamic settings \cite{hu2021near, yang2020impact, cella2023multi, OurICML}. 
Solving multi-task representation learning in linear bandits inherently presents a non-convex estimation challenge. Existing studies often either assume that the optimal solution to the non-convex problem is known \cite{yang2020impact, hu2021near}, or rely on the trace norm regularization approach \cite{cella2023multi}. 
Our prior work \cite{OurICML} introduced a new approach, alternating gradient descent and minimization algorithm (AltGDmin), for estimating the unknown reward matrix. This work, however, considered an unconstrained setting. In this paper, we propose a novel algorithm, Safe-AltGDmin for constrained {\cblue multi-task} representation learning in {\cblue linear} bandits. Specifically, we formulate the MTRL problem with explicit safety constraints and develop an approach that guarantees the safety of all chosen actions throughout the learning process. See Table~\ref{table3}.

Constrained bandit learning has received significant attention, particularly in the context of incorporating safety into learning \cite{kazerouni2017conservative, wu2016conservative, moradipari2020stage, pacchiano2024contextual, khezeli2020safe, amani2019linear, amani2020generalized, camilleri2022active, varma2023stochastic, sun2017safety, zhou2022kernelized, jagerman2020safe, daulton2019thompson, garcelon2020improved, chen2022doubly, liu2021efficient}. 
{\cred \cite{moradipari2020stage} studied stage-wise conservative linear bandits in the single-task setting and developed algorithms that maintain performance constraints by constructing a safe action set. However, it does not consider multiple related tasks or a shared low-rank representation across tasks.}
Recently, \cite{lin2025distributed} considered constrained learning for multi-task bandits. However, the approach there did not address the representation learning setting or sample complexity that is studied in this paper.
In addition to safety constraints, some studies have examined MABs under budget constraints, where the objective is to maximize cumulative reward without exceeding a defined budget, as explored in works like \cite{badanidiyuru2018bandits, badanidiyuru2014resourceful, agrawal2014bandits, wu2015algorithms}.
Resource-constrained bandits are further examined in \cite{slivkins2024contextual, sivakumar2022smoothed, deb2024think}. Constrained bandits in the context of 
nonlinear bandits are studied in \cite{deb2024conservative}.
Multi-task representation learning for constrained bandits remains largely unexplored in the existing literature, which is the focus of this paper.
%
\vspace{-2 mm}
{\cblue
\begin{table*}[h]
\caption{Table comparing technical contributions of existing work versus ours.}
\label{table3}
\begin{center}
\resizebox{0.99\linewidth}{!}
{
\begin{tabular}{|l|l|l|l|l|l|}\toprule

{\bf Work}
& {\bf MTRL}
& {\bf Constraints}
& {\bf Regret}
& {\bf Sample Complexity (All Tasks)}
& {\bf Sample Complexity (Per Task)}\\
\hline\midrule

Yang et al., 2020~\cite{yang2020impact}
& $\checkmark$
& $\times$
& $\widetilde{O}(T \sqrt{rN} + \sqrt{drNT})$
& $\times$
& $\times$
\\
\hline

Hu et al., 2021~\cite{hu2021near}
& $\checkmark$
& $\times$
& $\widetilde{O}(T \sqrt{drN} + d \sqrt{rNT})$
& $\times$
& $\times$
\\
\hline

Cella et al., 2023~\cite{cella2023multi}
& $\checkmark$
& $\times$
& $\widetilde{O}(T \sqrt{rN} + \sqrt{drNT})$
& $\times$
& $\times$
\\
\hline

{\cred Naive extension of~\cite{moradipari2020stage}}
& $\times$
& $\checkmark$
& {\cred $\widetilde{O}(d T \sqrt{N})$}
& $\times$
& $\times$
\\
\hline

Lin et al., 2025~\cite{lin2024distributed}
& $\times$
& $\checkmark$
& $\widetilde{O}(d \sqrt{NT})$
& $\times$
& $\times$
\\
\hline

Lin et al., 2024~\cite{OurICML}
& $\checkmark$
& $\times$
& $\widetilde{O}(\max\{\epsilon, \epsilon_{\mathrm{noise}}\} \sqrt{rNT})$
& $\widetilde{O}((d + T) r (r^2 + \log\frac{1}{\max\{\epsilon, \epsilon_{\mathrm{noise}}\}}))$
& $\widetilde{O}(\max\{\log d, \log T, r\} \log\frac{1}{\max\{\epsilon, \epsilon_{\mathrm{noise}}\}})$
\\
\hline

{\bf This work}
& $\checkmark$
& $\checkmark$
& $\widetilde{O}(\sqrt{rNT (\rho^2 - \rho + 1)})$
& $\widetilde{O}((d + r) r \max\{1, \frac{\NSR}{\epsilon^2}\}))$
& $\widetilde{O}((r + \log T + \log d) \max\{1, \frac{\NSR}{\epsilon^2 r}\})$
\\
\hline

\bottomrule
\end{tabular}
}
\end{center}
\end{table*}
}
\section{Problem Setting}\label{sec:prob}
\noindent{\bf Background: Linear Bandits and {\cblue Conservative} Bandits.} 
\noindent{\em Linear Bandits:} 
We denote the action set as $\X$. At each round $n \in [N]$, the learner chooses an action $x_n \in \X$. The environment  provides a reward $y_n \in \bR$, defined as
$y_n := x_n^\top \theta^\star + \eta_n,$
where $\theta^\star$ represents a fixed but unknown reward parameter, and $\eta_n$ represents the noise following to a zero-mean $\sigma_\eta-$Gaussian distribution. We define $r_n = x_n^\top \theta^\star$ as the expected reward associated with the action $x_n$, i.e., $r_n = \bE[y_n]$. The goal of the learner is to maximize the cumulative reward $\sum_{n=1}^N r_n$, which is minimizing the cumulative (pseudo) regret
\begin{equation*}
\pR_N = \sum_{n=1}^N x_n^{\star^\top} \theta^\star - x_n^\top \theta^\star,
\end{equation*}
where $x_n^\star$ denotes the optimal action.

\noindent{\em {\cblue Conservative} linear bandits:} The learner has a baseline policy that provides a baseline action $x_{b_n}$ and a reward $r_{b_n}$ for each round $n$, both of which are known to the learner \cite{moradipari2020stage, chaudhary2022safe, garcelon2020improved, pacchiano2024contextual}. The learner’s action selection rule is subject to a stage-wise constraint
$r_n = x_n^\top \theta^\star \geqslant (1 - \alpha) r_{b_n},$ 
where $\alpha \in (0, 1)$ is the parameter that controls the conservatism level of the learning process, which is known to the learner. In each round, the reward for the chosen action must be no less than the specified fraction $(1 - \alpha)$ of the baseline reward. Therefore, the goal of the learner is to choose actions that maximize the cumulative reward while meeting the stage-wise constraint
\begin{eqnarray*}
\text{minimize~} \pR_N~~\text{such that} \quad x_n^\top \theta^\star \geqslant (1 - \alpha) r_{b_n}, n\in [N].
\end{eqnarray*}
\subsection{Problem Setting}
In this paper, we study the CMTRL problem in linear bandits. In each round $n \in [N]$, each task $t \in [T]$ independently chooses an action $x_\nt$, subject to the safety constraint. The environment provides the feedback $y_\nt$ as
$y_\nt := x_\nt^\top \thetats + \eta_\nt,$
where $\thetats \in \bR^d$ denotes the unknown reward parameter related to task $t$. The goal is to optimize the selection of actions to maximize the cumulative reward or minimize the cumulative regret while ensuring that the constraints are satisfied. 
\begin{eqnarray}\label{eq:main}
\min \pR_{T, N} = \sum_{n=1}^N \sum_{t=1}^T x_\nt^{\star^\top} \thetats - x_\nt^\top \thetats \\ \text{such that~} x_\nt^\top \thetats \geqslant (1 - \alpha) \rbnt, t\in [T],  n \in [N],\nonumber
\end{eqnarray}
where $x_\nt^\star$ denotes the optimal action. We assume that $\Thetas = [\theta_1^\star \cdots \theta_T^\star]$ is a rank-$r$ matrix, where $r \ll \min\{d, T\}$, allowing improved learning by leveraging the similarities of the tasks. 
%
\subsection{Preliminaries}
Let $\Theta^\star \svdeq B^\star {\Sigma V^{\star}} := B^\star  W^\star$ represent the reduced (rank $r$) Singular Value Decomposition (SVD) of $\Thetas$, where $B^\star \in \bR^{d \times r}$, $V^\star \in \bR^{r \times T}$, and $\Sigma \in \bR^{r \times r}$ is a diagonal matrix with non-negative entries (singular values). Here, $B^\star$ and ${V^\star}^\top \in $ are matrices with orthonormal columns {\em (basis matrices)}. We define $W^\star:= \Sigma V^{\star}$. 
We denote $\sigmax$ and $\sigmin$ as the maximum and minimum singular values of $\Sigma$, respectively. The condition number of $\Sigma$ is defined as $\kappa:= \sigmax/\sigmin$. 

\noindent{\bf Notations:} For any positive integer $n$, $[n]$ denotes $\{1, 2, \cdots, n\}$. For any vector $x$, $\norm{x}$ represents the $\ell_2$ norm, while for any matrix $A$, $\norm{A}$ denotes the $2$-norm. The Frobenius norm is denoted as $\norm{A}_F$. The symbol $\top$ represents the transpose of a matrix or vector, while $|x|$ indicates the element-wise absolute value of a vector $x$. The notation $I_k$ (or sometimes just $I$) represents the $k \times k$ identity matrix, while $e_k$ denotes the $k-$th canonical basis vector.
For basis matrices $B_1$ and $B_2$, we define Subspace Distance (SD) as $\SE(B_1, B_2) := \|(I - B_1 B_1^\top) B_2\|_F$. We define the noise-to-signal ratio as $\NSR := \frac{T \sigma_\eta^2}{\sigma_{\min}^{\star^2}}$.

We present the assumptions used in our analysis below. 
\begin{assume}\label{assume:B}
(Common Feature Extractor). We assume the existence of a matrix $B^\star \in \bR^{d\times r}$, denoted as the common feature extractor, along with a set of linear coefficient vectors $\{w_t\}_{t=1}^T$. Hence, the reward parameter matrix can be expressed as $\Theta^\star = B^\star W^\star$, where $W^\star=[w_1^\star, w_2^\star, \ldots, w^\star_T]$. 
\end{assume}
Based on this assumption, the expected reward for the $t$-th task in the $n$-th round is $\bE[y_\nt] = \langle \pxtc, B^\star w_t^\star \rangle$, where $\pxtc\in \bR^d$ denotes the feature vector. The common feature extractor across various tasks facilitates effective learning from limited data for each task. It is consistent with previous works on representation learning \cite{yang2020impact,du2020few, hu2021near, OurICML, tripuraneni2021provable, collins2021exploiting}. 
\begin{assume}\label{assume:bound}
Let $\kbnt := x_\nt^{\star^\top} \thetats - x_{b_\nt}^\top \thetats$ denote the baseline gap. Let constants $\kappa_l$ and $\kappa_h$ are such that $0 \leqslant \kappa_l \leqslant \kbnt \leqslant \kappa_h$ for all $n \in[N]$ and $t \in [T]$. 
\end{assume}
This assumption is non-restrictive and is used in {\cblue (conservative)} linear bandits. Furthermore, the interval constraint on $\kbnt$ helps in limiting the gap in expected rewards between the optimal and baseline action. This assumption is consistent with prior studies, including \cite{moradipari2020stage, lin2024distributed}.  
\begin{assume}\label{assume:iid}
We assume that $\pxtc$ follows an independent and identically distributed (i.i.d.) standard Gaussian. The additive noise variable $\eta_{n, t}$ follows i.i.d. Gaussian distribution with a zero mean and variance $\sigma_\eta^2$. 
\end{assume}
This assumption is standard in the study of random design linear regression \cite{cella2023multi, cella2021multi, OurICML}. 
We note that while the assumption on $\pxtc$ holds for the first epoch in our algorithm (during the random exploration phase), it is restrictive for subsequent epochs. 
We can relax the assumption to accommodate a non-zero mean by rewriting $y_\nt = (\pxtc - \mu_{x_\nt})^{\top} \thetats + \eta_\nt + \mu_{x_\nt} \thetats$, where the term $\eta_\nt + \mu_{x_\nt} \thetats$ now constitutes the noise component. 
A complete relaxation of this assumption is planned as part of our future work.
\begin{assume}\label{assume:incoherence}
We assume the existence of constants $l$ and $u$, where $0 < l \leqslant u$ such that $l \leqslant \|w_t^\star\|_2 \leqslant u$ for all $t \in [T]$.
\end{assume}
This assumption implies the column-wise incoherence of the true reward matrix $\Thetas$, as described in Section~\ref{app:prelim}. This is essential for interpolating across columns based on localized observations $y_\nt$, which rely only on individual columns of $\Thetas$. The incoherence is a crucial attribute necessary for effective matrix estimation and various sensing challenges involving sparse measurements \cite{matcomp_candes, chi2019nonconvex}, and it has been utilized in many recent studies in representation learning \cite{tripuraneni2021provable, collins2021exploiting, thekumparampil2021sample}.

\section{Proposed Safe-AltGDmin Algorithm}\label{sec:alg}
This section presents the Safe-AltGDmin algorithm for CMTRL in linear bandits, building on the AltGDmin framework studied in \cite{lrpr_gdmin,collins2021exploiting, OurICML}. Our approach uses a doubling schedule rule, as suggested in many studies \cite{gao2019batched,han2020sequential,simchi2019phase}, which systematically divides the learning horizon $N$ into $M$ epochs.

An online algorithm cannot learn the optimal policy without sufficient system exploration.
To learn the optimal policy, we face two main challenges: (i) ensure that the chosen actions consistently satisfy the constraints and  (ii) obtain accurate estimations of $\Theta^\star$ for the rank-constrained non-convex optimization problem in Eq.~\eqref{eq:main}.
To address these challenges, our algorithm consists of two main parts: (a) safe exploration and (b) alternating GD and minimization for feature matrix estimation.
In epoch-1, we perform safe exploration using the baseline policy. Following this, we perform an AltGDmin-based estimation to estimate the unknown reward matrix. During the safe exploration, instead of relying solely on the baseline policy, we incorporate conservative exploration techniques motivated by \cite{moradipari2020stage, khezeli2020safe, chaudhary2022safe}. In the subsequent epochs, we refine the estimates and perform a greedy exploration based on the estimate of the parameters from previous epoch.


\subsection{Conservative Safe Exploration}
We assume that the learner has a baseline policy that provides a baseline action $x_\bnt$.
During the first epoch, for each round $n \in [\pG_1]$, each task $t$ selects a baseline action $x_\bnt$. Since relying only on the baseline action restricts exploration, we utilize a conservative feature vector as
$
(1 - \rho) x_\bnt + \rho \zeta_\nt, 
$
where $\zeta_\nt$ represents a sequence of vectors sampled from an i.i.d. standard Gaussian distribution, $\rho \in [0, 1]$.  Qualitatively, the user-specified parameter $\rho$ controls the balance between safety and exploration. In Lemma~\ref{l8} we provide an upper bound on $\rho$ such that for all $\rho \in [0, \frac{\alpha\rbnt}{\rbnt+\sqrt{2\frac{r}{T}\log\frac{\pG_1 T}{\delta}}\mu\sigma_{\max}^\star}]$, conservative action is guaranteed to be safe. 

\subsection{AltGDmin Estimation}
In each epoch $m$, $\pG_m - \pG_{m-1}$ samples are collected during the exploration.
Using the data, our goal is to minimize
\begin{align}
f_m(\wB, \wW) = \sum_{n=\pG_{m-1}+1}^{\pG_m} \sum_{t=1}^T \|y_\nt - x_{n, t}^\top \wB\widehat{w}_t\|^2.\label{eq:cost}
\end{align}
Here $\wB\in \mathbb{R}^{d \times r}$, $\wW =[\ww_1, \ldots, \ww_T]\in \mathbb{R}^{r \times T}$, and the matrix $\Thetahat=\wB \wW$ is an estimate of the parameter $\Thetas$ at epoch $m$.
Due to the non-convex nature of the cost function $f_m(\wB, \wW)$, we utilize a spectral initialization approach \cite{lrpr_gdmin, OurICML}, to derive the top $r$ singular vectors from the initial estimation $\widehat{\Theta}_{0, full}$, where
\begin{align*}
\widehat{\Theta}_{0, full} &= \frac{1}{\pG_1} [({\phimt{1}{1}}^\top \ymt{1}{1}), \cdots, ({\phimt{T}{1}}^\top \ymt{T}{1})] \\
&= \frac{1}{\pG_1} \sum_{t=1}^T \sum_{n=1}^{\pG_1} x_{n, t} y_{n, t} e_t^\top, 
\end{align*}
where $\phimt{t}{1}$ denotes the feature matrix generated by stacking all the feature vectors $\{x_{n, t}\}_{n=1}^{\pG_1}$ for task $t$ in first epoch. The expected value $\bE[\widehat{\Theta}_{0, full}] = \Theta^\star$ and $\widehat{\Theta}_{0, full}$ represents a summand of sub-exponential random variable with a maximum norm $\max_t \|\thetats\| \leqslant \mu \sqrt{\frac{r}{T}} \sigma_{\max}^\star$ based on Assumption~\ref{assume:incoherence} and Section~\ref{app:prelim}. However, the large magnitude presents an issue in bounding $\|\widehat{\Theta}_{0, full} - \Theta^\star\|$ with the desired sample complexity. To reduce the impact of the outlier value and ensure reliable parameter initialization, we use a truncation 
strategy \cite{nayer2020provable, lrpr_gdmin, OurICML}, initializing $\wB^{\szero}$ with the top $r$ left singular vectors of
\begin{align*}
\widehat{\Theta}_{0}=\frac{1}{\pG_1} \sum_{t=1}^T \sum_{n=1}^{\pG_1} x_{n, t} y_{n, t} e_t^\top \indic_{\{y_{t, n}^2 \leqslant \alpha\}}=\frac{1}{\pG_1} \sum_{t=1}^T x_{n, t} y_{t, trunc}(\alpha) e_t^\top,
\end{align*}
\begin{algorithm}[t]
    \caption{Proposed Safe-AltGDmin Algorithm} 
    \label{alg1}
\begin{algorithmic}[1]
    \State Let $M = \lceil \log_2 \log_2 N \rceil$, $\pG_0 = 0$, $\pG_M = N$, $\pG_m = N^{1 - 2^{-m}}$, for $1 \leqslant m \leqslant M - 1$
    \State {\bfseries Parameters:} Multiplier in specifying $\alpha$ for init step, $\tilde{C}$; GD step size, $\gamma$; Number of GD iterations, $L$
    \For{$m \leftarrow 1, \cdots, M$}
        \If{$m=1$}
            \For{$n \leftarrow 1, \cdots, \pG_1$}
                \State For each task $t \in [T]$: choose baseline action $\xnt = x_{b_{n, t}}$, get the feature vector $\pxtc = (1 - \rho) \pb + \rho \zeta_\nt$, and obtain the reward $y_\nt$
            \EndFor
        \Else
            \For{$n \leftarrow \pG_{m-1} + 1, \cdots, \pG_m$}
                \State For each task $t \in [T]$: choose action $\xnt = \argmax_{x\in\X} x^\top \widehat{\theta}_t^{\smo}$, and obtain $y_\nt$\label{line:greedy}
            \EndFor
        \EndIf
        \State For $t \in [T]$, compute $\ymt{t}{m} = [y_{\pG_{m-1} + 1, t}, \cdots, y_{\pG_m, t}]^\top$, $\phimt{t}{m} = [x_{\pG_{m-1} + 1, t}, \cdots, x_{\pG_m, t}]^\top$
        \State {\bf Sample-split:} Partition the measurements and measure matrices into $2 L$ equal-sized disjoint sets for $m \geqslant 2$ and $2 L + 1$ sets for $m = 1$. Denote these by $\ymt{t, \tau}{m}$, $\phimt{t, \tau}{m}$, $\tau = 00$ (only for $m = 1$), $01, \cdots 2 L$
        \If{$m=1$} {\bfseries Spectral Initialization}
            \State Use $\ymt{t}{1} \equiv \ymt{t, 00}{1}$, $\phimt{t}{1} \equiv \phimt{t, 00}{1}$, $\alpha = \frac{\tilde{C}}{\pG_1 T} \sum_{n=1, t=1}^{\pG_1, T} y_{n, t}^2$
            \State $y_{t, trunc}(\alpha) := \ymt{t}{1} \circ \indic_{\{|\ymt{t}{1}| \leqslant \sqrt{\alpha}\}}$
            \State $\widehat{\Theta}_0 := \frac{1}{\pG_1} \sum_{t=1}^T {\phimt{t}{1}}^\top y_{t, trunc}(\alpha) e_t^\top$
            \State Set $\Bm{0} \leftarrow \text{top-}r\text{-singular-vectors of} \; \widehat{\Theta}_0$
        \EndIf
        \State {\bfseries GD-Minimization}
        \State Set $B_0 \leftarrow \Bm{m-1}$
        \For{$\l = 1$ to $L$}
            \State Let $B \leftarrow B_{\l-1}$
            \State {\bfseries Update $w_{t, \l}, \theta_{t, \l}$:} For each $t \in [T]$, set $w_{t, \l} \leftarrow (\phimt{t, l}{m} B)^\dagger \ymt{t, l}{m}$ and set $\theta_{t, \l} \leftarrow B w_{t, \l}$
            \State {\bfseries Gradient w.r.t $B$:} With $\ymt{t}{m} \equiv \ymt{t, L + \l}{m}$, $\phimt{t}{m} \equiv \phimt{t, L + \l}{m}$, find $\nabla_B f(B, W_{\l}) = \sum_{t=1}^T {\phimt{t}{m}}^\top (\phimt{t}{m} B w_{t, \l} - \ymt{t}{m}) w_{t, \l}^\top$
            \State {\bfseries GD step:} Set $\widehat{B}^+ \leftarrow B - \frac{\gamma}{\pG_m - \pG_{m-1}} \nabla_B f(B, W_{\l})$
            \State {\bfseries Projection step:} Compute $\widehat{B}^+ \overset{QR}{=} B^+ R^+$, $B_{\l} \leftarrow B^+$
        \EndFor
        \State Set $\Bm{m} \leftarrow B_{L}$ and set $\Wm{m} \leftarrow W_{L}$
        \State For each task $t \in [T]$: let $\widehat{\theta}_t^{\sm} = \Bm{m} \widehat{w}_t^{\sm}$\label{line:est}
    \EndFor
\end{algorithmic}
\end{algorithm}

where $\alpha = \frac{\tilde{C}}{\pG_1 T} \sum_{n=1, t=1}^{\pG_1, T} y_\nt^2$, $\tilde{C} = 9 \kappa^2 \mu^2$, and $y_{t, trunc}(\alpha) := \ymt{t}{1} \circ \indic_{\{|\ymt{t}{1}| \leqslant \sqrt{\alpha}\}}$. We extract the top $r$ left singular vectors from $\widehat{\Theta}_{0}$ to derive the initial estimation of $\wB^{\szero}$. In Proposition~\ref{p6}, we provide the initialization guarantee of this approach.

After the initialization step, we alternate between gradient descent (GD) and minimization steps iteratively to estimate $\wB$ and $\wW$. Since each $w_t$ is decoupled in $f_m(\wB, \wW)$, the minimization step for each task is also decoupled as $w_t \in \argmin_{\widetilde{w}_t} \|\ymt{t}{m} - \phimt{t}{m} B \widetilde{w}_t\|^2$. Each task $t$ update $w_t$ as 
\begin{equation*}
w_t = (\phimt{t}{m} B)^\dagger \ymt{t}{m}, 
\end{equation*}
and $B$ via a GD step as $\widehat{B}^+ \leftarrow B - \gamma \nabla_B f(B, W)$, followed by QR decomposition $\widehat{B}^+ \overset{QR}{=} B^+ R^+$ to ensure the orthogonality and stability of the updates. Thus we have $B=B^+.$
%
Beginning with epoch-2, for each epoch $m\geqslant 2$, we utilize the estimation $\thetahatt^{\sm}$ derived from the previous epoch for greedy exploration. 
At the end of the epochs, we obtain the estimate of $(B^\star, W^\star)$.
%

{\cblue {\noindent\bf Practical setting of parameters.} In our algorithm, we set the parameters, GD step size $\gamma$ and the multiplier $\widetilde{C}$ in the spectral initialization step. Our theorem states that $\gamma = \frac{c}{(1 + \frac{0.04}{(1 - 2 \rho)^2})^2 \sigma_{\max}^{\star^2}}$ with $c\leqslant0.5$. However, $\sigma_{\max}^{\star}$ is unknown. Given the initialization matrix $\widehat{\Theta}_0$ provides an approximation to $\Theta^\star$, we set $\sigma_{\max}^\star\approx\|\widehat{\Theta}_0\|$ and $\gamma=\frac{c}{(1 + \frac{0.04}{(1 - 2 \rho)^2})^2 \|\widehat{\Theta}_0\|^2}$. Our analysis requires $\widetilde{C}=9\kappa^2\mu^2$, with $\kappa$ and $\mu$ being functions of $\Theta^\star$ and hence unknown. Using the incoherence assumption, we can set $\kappa^2\mu^2$ by an estimate of its lower bound, $T\max_t\frac{\|\widehat{\theta}_t\|^2}{\|\widehat{\Theta}\|_F^2}$, with $\|\widehat{\theta}_t\|^2=\frac{1}{\pG_1}\sum_{n=1}^{\pG_1}y_{n,t}^2$ and $\|\widehat{\Theta}\|_F^2=\frac{1}{\pG_1}\sum_{t=1}^T\sum_{n=1}^{\pG_1}y_{n,t}^2$.}

\section{Main Results and Guarantees}
This section establishes safety guarantees, presents bounds on the estimation error between the learned parameter $B^+$ and the true parameter $B^\star$, and finally derives the cumulative regret bound.
Let $\epsilon$ be the final desired error between the true parameter $B^\star$ and its estimate.
To establish the safety guarantee, we consider two cases: (a)~the initial epoch ($m=1$) and (b)~subsequent epochs ($m\geqslant 2$). In Lemma~\ref{l8}, we show that the actions taken during the safe exploration phase comply with the safety constraints. Additionally, Theorem~\ref{l9} proves that the actions derived using the estimate $\thetahatt^{\sm}$ for $m \geqslant 2$ also satisfy the safety requirements. 
{\cblue In the subsequent theorem statements, the quantities $\pG_1T$ and $(\pG_m - \pG_{m-1})T$ denote the total number of samples collected across all tasks in the first and subsequent epochs, respectively, while $\pG_1$ and $\pG_m - \pG_{m-1}$ denote the number of samples available per task.}

\begin{theorem}\label{l9}
Set the GD step-size and conservative exploration parameters as $\gamma = \frac{c}{(1 + \frac{0.04}{(1 - 2 \rho)^2})^2 \sigma_{\max}^{\star^2}}$ and $\rho \in [0, 0.25] \cup [0.75, 1]$, and number of GD iterations at least $L\geqslant C\kappa^2\log\Big(\frac{\kbnt+\alpha\rbnt}{2\big(1+\frac{2}{(1-2\rho)^2}\big)\mu\sigma_{\max}^\star\sqrt{\frac{2r}{T}\log\frac{NT}{\delta}}}\Big)$.  Define $\NSR := \frac{T \sigma_\eta^2}{\sigma_{\min}^{\star^2}}$, and let $\epsilon$ represent the desired final error. If 
\begin{eqnarray*}
\pG_1 T &>& C \mu^4 d r \kappa^6 \max(r \kappa^2, \NSR), \\
\pG_1 &>& C (r + \log T + \log d) \max(1, \frac{\NSR}{\mu^2 r \kappa^2}), 
\end{eqnarray*}

and for $m \geqslant 2$, 
\begin{eqnarray*}
\scalemath{1}{(\pG_m - \pG_{m-1}) T} \hspace{-3 mm}&\geqslant &\hspace{-3 mm} \scalemath{1}{C \mu^2 \kappa^4 (d + r) r \max(1, \frac{\kappa^2 \NSR}{\epsilon^2})}, \\
\scalemath{1}{\pG_m - \pG_{m-1}} \hspace{-3 mm}&\geqslant &\hspace{-3 mm}\scalemath{1}{ C (r + \log T + \log d) \max(1, \frac{\NSR}{\epsilon^2 r}),} 
\end{eqnarray*}
then with high probability, the chosen action $x_\nt$ in each epoch $m \geqslant 2$ consistently meets the performance constraint. 
\end{theorem}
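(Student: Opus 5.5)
The plan is to reduce safety of the greedy action to a per-task bound on the parameter estimation error $\|\thetahatt^{\smo} - \thetats\|$. Then we bound this error using the exponential convergence of Safe-AltGDmin, starting from the spectral initialization.

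\emph{Reduction.} Fix $m \geqslant 2$, a round $n$ in epoch $m$, and a task $t$. Write $\thetahat_t := \thetahatt^{\smo}$. Since $x_\nt = \argmax_{x\in\X} x^\top \thetahat_t$, we have $x_\nt^\top \thetahat_t \geqslant x_\nt^{\star\top}\thetahat_t$. Adding and subtracting $\thetats$ gives
\begin{equation*}
x_\nt^\top \thetats \geqslant x_\nt^{\star\top}\thetats - \bigl|(x_\nt^\star - x_\nt)^\top(\thetahat_t - \thetats)\bigr| .
\end{equation*}
By Assumption~\ref{assume:bound}, $x_\nt^{\star\top}\thetats = \rbnt + \kbnt$. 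The constraint $x_\nt^\top\thetats \geqslant (1-\alpha)\rbnt$ therefore holds as soon as $\bigl|(x_\nt^\star - x_\nt)^\top(\thetahat_t - \thetats)\bigr| \leqslant \kbnt + \alpha\rbnt$.

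For the left side, Assumption~\ref{assume:iid} and a Gaussian tail bound with a union bound over $n\in[N]$, $t\in[T]$ show that the projections of the actions onto the error direction are at most $\sqrt{2\log\frac{NT}{\delta}}$ times the norm of the error. It then suffices to show
\begin{equation*}
\|\thetahat_t - \thetats\| \leqslant \frac{\kbnt + \alpha\rbnt}{2\sqrt{2\log\frac{NT}{\delta}}}
\end{equation*}
simultaneously for all $t$ with high probability.

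\emph{Estimation error.} I would invoke the guarantees established for the AltGDmin steps: the initialization result (Proposition~\ref{p6}) together with the per-iteration contraction lemma. Under the stated sample complexity for $\pG_1$, Proposition~\ref{p6} gives $\SE(\Bm{0}, B^\star) \leqslant \delta_0$ with $\delta_0 = c/\kappa^2$. The conservative feature mixing $(1-\rho)\pb + \rho\zeta_\nt$ only changes constants. I expect this is the source of the factor $1 + \frac{0.04}{(1-2\rho)^2}$ in $\gamma$, and of the restriction $\rho\in[0,0.25]\cup[0.75,1]$, which keeps $|1-2\rho| \geqslant 1/2$ so the effective design covariance is well conditioned.

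Next, with $\gamma$ as stated, the GD contraction gives $\SE(B_\l, B^\star) \leqslant (1 - c/\kappa^2)^\l \delta_0$ plus a noise floor of order $\epsilon$. This floor is controlled by the epoch-$m$ sample complexity conditions, and sample splitting makes the iterations independent.

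Finally, the minimization step bounds the per-column error, using incoherence ($\|w_t^\star\| \leqslant \mu\sqrt{r/T}\,\sigmax$):
\begin{equation*}
\|\thetahat_t - \thetats\| \lesssim \Bigl(1 + \tfrac{2}{(1-2\rho)^2}\Bigr)\, \SE(B_L, B^\star)\, \mu\sqrt{r/T}\,\sigmax .
\end{equation*}
Requiring this bound to be below the target in the reduction and solving for $L$ gives the logarithmic requirement $L \geqslant C\kappa^2\log(\cdot)$ in the statement.

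\emph{Expected obstacle.} The hardest step is the second one, for two reasons. First, the data in epoch $m-1$ come from greedy actions, which are not Gaussian. Like the paper, I would lean on Assumption~\ref{assume:iid} for all epochs. Second, the effect of $\rho$ on the initial design has to be tracked precisely enough to produce the constants in $\gamma$ and in the $\log$ term for $L$. Everything else is a union bound over tasks, rounds and epochs, which fits in the $\log(NT/\delta)$ factors.
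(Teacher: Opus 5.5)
Your reduction matches the paper's. You use greedy optimality $x_\nt^\top\thetahat_t \geqslant x_\nt^{\star\top}\thetahat_t$, Gaussian tail bounds for both $x_\nt$ and $x_\nt^\star$ with a union bound, and $x_\nt^{\star\top}\thetats=\rbnt+\kbnt$, which gives the sufficient condition $2\sqrt{2\log\frac{NT}{\delta}}\,\|\thetahat_t-\thetats\|\leqslant\kbnt+\alpha\rbnt$. Like the paper, you then combine item~4) of Lemma~\ref{l2} with the contraction of Theorem~\ref{T5} and solve for $L$.

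\textbf{Gap 1: the noise floor.} The estimation step, as you wrote it, does not close. You posit $\SE(B_\l,B^\star)\leqslant(1-c/\kappa^2)^\l\delta_0$ \emph{plus a noise floor of order $\epsilon$}. No choice of $L$ can push such a floor below the safety margin. You would therefore need an extra hypothesis such as $\epsilon\lesssim(\kbnt+\alpha\rbnt)/(\mu\sigmax\sqrt{(r/T)\log(NT/\delta)})$, and the statement contains nothing of the kind. The binding case $m=2$ also uses $\thetahatt^{\sone}$, which is built from epoch-1 data whose sample conditions do not involve $\epsilon$ at all. So ``controlled by the epoch-$m$ conditions'' is misplaced. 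The paper instead uses Theorem~\ref{T5} as stated, which gives pure geometric decay $(1-0.23\gamma\sigma_{\min}^{\star^2})^L\delta_0$ with no additive floor. Separately, Lemma~\ref{l2} and Theorem~\ref{T5} need $\delta_0\leqslant\frac{0.02}{\mu\sqrt{r}\kappa^2}$, not $c/\kappa^2$. The stated $\pG_1T$ condition is exactly what Proposition~\ref{p6} requires for this smaller $\delta_0$.

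\textbf{Gap 2: epochs $m\geqslant 3$.} Your per-column bound with constant $1+\frac{2}{(1-2\rho)^2}$ is item~4) of Lemma~\ref{l2}. That lemma is proved only for the $\rho$-mixed first-epoch design (via Proposition~\ref{p1}). It says nothing directly about $\thetahatt^{\smo}$ computed from greedy-epoch data. The paper bridges this by taking as given that the error does not increase after epoch~1, i.e.\ $\|\thetahatt^{\sm}-\thetats\|\leqslant\|\thetahatt^{\sone}-\thetats\|$, so only the $L$ iterations of epoch~1 matter. You need that step, or a greedy-epoch analogue of Lemma~\ref{l2} combined with Theorem~\ref{T11}.

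\textbf{The sign in $L$.} Let $A$ be the ratio inside the logarithm in the statement and solve $(1-c/\kappa^2)^L\leqslant A$. Dividing by $\log(1-c/\kappa^2)<0$ gives $L\geqslant\frac{\kappa^2}{c}\log\frac{1}{A}$, not $C\kappa^2\log A$. The paper's last line drops this sign, so the stated bound is vacuous exactly when $A<1$, the only case where iterations are needed. Do not expect your derivation to reproduce the statement's expression literally.
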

{\cblue \noindent{\bf Discussion on Theorem~\ref{l9}.} Theorem~\ref{l9} guarantees safety for all greedy epochs. The conditions define two types of data requirements: $\pG_1T$ and $(\pG_m - \pG_{m-1})T$ represent the total number of samples across the $T$ tasks, per epoch while $\pG_1$ and $\pG_m - \pG_{m-1}$ denote the per-task number sample required in each epoch. 
Specifically, treating $\mu$ and $\kappa$ as problem-dependent constants and hiding logarithmic factors, the first epoch requires $\widetilde{\Omega}(dr\max\{r,\NSR\})$ total samples, whereas each subsequent epoch requires $\widetilde{\Omega}((d+r)r\max\{1,\NSR/\epsilon^2\})$ total samples. 
Under these conditions, the estimation error is sufficiently small, ensuring that the greedy action chosen using $\widehat{\theta}_t^{\sm}$ meets the stage-wise performance constraint with high probability.
Thus, Lemma~\ref{l8} (for safe exploration) and Theorem~\ref{l9} together provide the safety guarantee for the proposed algorithm for all epochs.}

{\cblue \noindent{\bf Proof Sketch (Details in Appendix~\ref{proof_l9}).} By adding and subtracting $x_\nt^\top \widehat{\theta}_t^{\sm}$, safety is guaranteed if
\begin{eqnarray*}
x_\nt^\top \widehat{\theta}_t^{\sm} + x_\nt^\top (\theta_t^\star - \widehat{\theta}_t^{\sm}) \geqslant (1 - \alpha) \rbnt.
\end{eqnarray*}
Using the greedy choice of $x_\nt$, we have
\begin{eqnarray*}
x_\nt^\top \widehat{\theta}_t^{\sm} \geqslant x_{\nt}^{\star^\top} \widehat{\theta}_t^{\sm} = x_{\nt}^{\star^\top} \theta_t^\star + x_{\nt}^{\star^\top} (\widehat{\theta}_t^{\sm} - \theta_t^\star). 
\end{eqnarray*}
Hence, it suffices to show that
\begin{eqnarray*}
x_{\nt}^{\star^\top} \theta_t^\star + x_{\nt}^{\star^\top} (\widehat{\theta}_t^{\sm} - \theta_t^\star) + x_\nt^\top (\theta_t^\star - \widehat{\theta}_t^{\sm}) \geqslant (1 - \alpha) \rbnt. 
\end{eqnarray*}
Utilizing the Gaussian concentration bound uniformly across all task-round pairs yields
\begin{equation*}
x_\nt^\top (\theta_t^\star - \widehat{\theta}_t^{\sm}) \leqslant \sqrt{2 \log\frac{(\pG_{m+1} - \pG_m) T}{\delta}} \|\thetats - \thetahatt^{\sm}\|, ~\textrm{and}
\end{equation*}
\begin{equation*}
x_{\nt}^{\star^\top} (\widehat{\theta}_t^{\sm} - \theta_t^\star) \leqslant \sqrt{2 \log\frac{(\pG_{m+1} - \pG_m) T}{\delta}} \|\thetats - \thetahatt^{\sm}\|. 
\end{equation*}
Thus, safety is guaranteed whenever
\begin{eqnarray*}
2 \sqrt{2 \log\frac{(\pG_{m+1} - \pG_m) T}{\delta}} \|\thetats - \thetahatt^{\sm}\| \leqslant \kbnt + \alpha \rbnt.
\end{eqnarray*}
Given that the estimation error does not increase following the first epoch, Lemma~\ref{l2} together with Theorem~\ref{T5} yields
\begin{eqnarray*}
\|\thetats - \thetahatt^{\sm}\| \leqslant (1 + \frac{2}{(1 - 2\rho)^2}) \mu \sqrt{\frac{r}{T}} (1 - \frac{c}{\kappa^2})^L \sigma_{\max}^\star. 
\end{eqnarray*}
Picking $L$ to ensure that the right-hand side of the above equation satisfies the prior inequality provides the stated condition. The first epoch is guaranteed to be safe by Lemma~\ref{l8}.} 

{\cblue \begin{remark}
The safety requirement for each epoch $m \geqslant 2$ is determined by Theorem~\ref{l9}. 
{\cred The interval $\rho \in [0, 0.25] \cup [0.75, 1]$ arises from the first epoch's estimation and contraction analysis. Specifically, it ensures the Grammian matrix $M$ in the first epoch is well-conditioned. Proposition~\ref{p1} demonstrates that $\|M^{-1}\| \leqslant \frac{2}{(1 - 2 \rho)^2 \pG_1}$, and this factor influences the coefficient estimation and contraction bounds utilized in Lemma~\ref{l2} and Theorem~\ref{T5}. As $\rho\rightarrow\frac{1}{2}$, the constants appearing in the proof become unbounded. Thus, to show the contraction guarantee with fixed numerical constants, we set $(1-2\rho)^2 \geqslant \frac{1}{4}$, which is equivalent to $\rho\in[0,0.25]\cup[0.75,1]$.} 
\end{remark}}


To establish the estimation guarantee, we consider three steps: (i)~spectral initialization, (ii)~initial epoch ($m = 1$), and (iii)~subsequent epochs ($m \geqslant 2$).
The initialization guarantee can be directly obtained from Proposition~\ref{p6} (Theorem~2.2 in \cite{singh2024noisy}). In Theorem~\ref{T5}, we establish a guarantee for exponential error decay during the first epoch. In Theorem~\ref{T11} we prove the exponential error decay for later epochs. Thus, Proposition~\ref{p6} and Theorems~\ref{T5} and~\ref{T11} together prove that the GDmin iterations result in exponential error decay of the reward parameter.
\begin{theorem}\label{T5}
Assume $\SE(B_0, B^\star) \leqslant \delta_0$. If $\delta_0 \leqslant \frac{0.02}{\mu \sqrt{r} \kappa^2}$, $\gamma \leqslant \frac{1}{(1 + \frac{0.04}{(1 - 2 \rho)^2})^2 \sigma_{\max}^{\star^2}}$, and $\rho \in [0, 0.25] \cup [0.75, 1]$,
and if for each GD iteration $\ell$ in epoch $m = 1$, 
\begin{eqnarray*}
\pG_1 T &>& C (d + r) \kappa^2 \max(\mu^2 r \kappa^2, \NSR) \\
\pG_1 &>& C (r + \log T + \log d) \max(1, \frac{\NSR}{\mu^2 r \kappa^2}), 
\end{eqnarray*}
then with probability at least $1 - \ell d^{-10}$, 
$$
\SE(B^+, B^\star) \leqslant \delto:=(1 - 0.23 \gamma \sigma_{\min}^{\star^2})^{\ell} \delta_0. 
$$
\end{theorem}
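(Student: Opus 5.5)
We will prove Theorem~\ref{T5} by induction on the GD iteration $\ell$, following the AltGDmin template of \cite{lrpr_gdmin, OurICML}. The only new feature is that in epoch $m=1$ the features are $x_\nt=(1-\rho)x_\bnt+\rho\zeta_\nt$ rather than isotropic Gaussians, so the design Grammian is not close to $\pG_1 I$. The inductive hypothesis is $\SE(B_{\ell-1},B^\star)\leqslant\delta_{\ell-1}\leqslant\delta_0$. Each iteration uses fresh sample-split data, so conditioned on $B=B_{\ell-1}$ the new features are independent of $B$. This lets us apply standard concentration, and the union bound over iterations gives the probability $1-\ell d^{-10}$.

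\textbf{Step 1: the minimization step.} Write $w_{t,\ell}=(\phimt{t,\ell}{1}B)^\dagger\ymt{t,\ell}{1}$. Let $g_t:=B^\top\thetats$ and $M_t:=B^\top{\phimt{t}{1}}^\top\phimt{t}{1}B$. Then
\[
w_t-g_t=M_t^{-1}B^\top{\phimt{t}{1}}^\top\bigl(\phimt{t}{1}(I-BB^\top)B^\star w_t^\star+\etamt{t}{1}\bigr).
\]
Proposition~\ref{p1} gives $\|M_t^{-1}\|\leqslant\frac{2}{(1-2\rho)^2\pG_1}$; this is where $\rho\in[0,0.25]\cup[0.75,1]$ enters, since it ensures $(1-2\rho)^{-2}\leqslant 4$. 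The cross term and the noise term are bounded by sub-exponential Bernstein plus an $\epsilon$-net over $\bR^r$, uniformly in $t$ using $\pG_1\gtrsim r+\log T+\log d$. Summing over tasks then gives
\[
\|W-G\|_F\lesssim\frac{0.4}{(1-2\rho)^2}\,\delta_{\ell-1}\sigma_{\max}^\star+(\text{noise term controlled by }\NSR).
\]
Consequences are $\|W\|\leqslant(1+\frac{0.04}{(1-2\rho)^2})\sigmax$ and $\sigma_{\min}(W)\geqslant 0.9\sigmin$, using $\delta_0\leqslant 0.02/(\mu\sqrt r\kappa^2)$. We also get $\|\Theta-\Thetas\|_F\lesssim\delta_{\ell-1}\sigmax$ and the column-wise incoherence bound $\|\theta_{t,\ell}-\thetats\|\lesssim\delta_{\ell-1}\|w_t^\star\|$.

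\textbf{Step 2: the gradient step.} Decompose
\[
\nabla_B f=\bE[\nabla_B f]+(\nabla_B f-\bE[\nabla_B f]),\qquad \bE[\nabla_B f]\approx\pG_1(\Theta-\Thetas)W^\top,
\]
with the $\rho$-dependent scaling absorbed into constants. The expectation is taken with $W$ fixed, which is valid by sample splitting. The deviation $\|\nabla f-\bE\nabla f\|$ is bounded by sub-exponential concentration plus an $\epsilon$-net on the unit spheres of $\bR^d\times\bR^r$. This requires $\pG_1T\gtrsim(d+r)\kappa^2\max(\mu^2r\kappa^2,\NSR)$, and yields a bound of the form $c\,\pG_1\delta_{\ell-1}\sigmin^2$ plus a noise term. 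Then, following \cite{lrpr_gdmin}, we project onto $(I-B^\star B^{\star\top})$:
\[
(I-B^\star B^{\star\top})\wB^+=(I-B^\star B^{\star\top})B\bigl(I-\gamma WW^\top\bigr)+\gamma\cdot(\text{perturbation}).
\]
The step-size bound on $\gamma$ gives $\|I-\gamma WW^\top\|\leqslant 1-0.81\gamma\sigma_{\min}^{\star2}$ and keeps it positive.

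\textbf{Step 3: the QR step, which we expect to be the main obstacle.} Since $\SE(B^+,B^\star)\leqslant\|(I-B^\star B^{\star\top})\wB^+\|_F\,\|(R^+)^{-1}\|$, we need $\sigma_{\min}(\wB^+)\geqslant 1-c'\gamma\delta_{\ell-1}\sigma_{\max}^{\star2}$. Combining this with Step 2 gives
\[
\SE(B^+,B^\star)\leqslant\bigl(1-0.81\gamma\sigma_{\min}^{\star2}+c''\gamma\sigma_{\min}^{\star2}\bigr)\delta_{\ell-1}\big/\bigl(1-\ldots\bigr)\leqslant(1-0.23\gamma\sigma_{\min}^{\star2})\delta_{\ell-1}.
\]
The delicate part is bookkeeping the constants: all the $\frac{1}{(1-2\rho)^2}$ factors from Proposition~\ref{p1} and the noise contributions must fit within the gap $0.81-0.23$. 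We would first bound the noise term by $\sqrt{\NSR/(\pG_1T)}$-type quantities. Under the stated sample bounds these become small multiples of $\delta_{\ell-1}\sigmin^2$ as long as $\delta_{\ell-1}$ exceeds the noise floor. Below that floor, the same inequality holds trivially in the form the theorem needs, up to constants.
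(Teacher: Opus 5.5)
Your Steps 1--3 follow the paper's route. Proposition~\ref{p1} feeds Lemma~\ref{l2} for the minimization step, $\GradB$ is split into $\bE[\GradB]$ plus a deviation as in Lemma~\ref{l4}, and the projection/QR inequality of Theorem~5.2 in \cite{OurICML} is used; its denominator is exactly the lower bound on $\sigma_{\min}(\wB^+)$ you ask for. The genuine gap is your last paragraph.

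With $P=I-B^\star B^{\star\top}$, the noise part of the gradient enters $P\wB^+$ as $\frac{\gamma}{\pG_1}\sum_t P\Phi_t^\top\eta_t w_t^\top$, where $\Phi_t,\eta_t$ are the gradient-split features and noise. This term has size of order $\gamma\sigma_\eta\sigma_{\max}^\star\sqrt{(d+r)/\pG_1}$ and does not shrink with $\delta_{\ell-1}$. So below the floor $\nu\approx\kappa\sqrt{(d+r)\NSR/(\pG_1T)}$ the inequality $\SE(B^+,B^\star)\leqslant(1-0.23\gamma\sigma_{\min}^{\star 2})\delta_{\ell-1}$ does not hold ``trivially''; it is false. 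Under the stated bound $\pG_1T>C(d+r)\kappa^2\NSR$, $\nu$ is only guaranteed to be a constant of order $C^{-1/2}$.

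To see the failure concretely, take $B_{\ell-1}=B^\star$. Then $PB=0$ and $P\wB^+=-\frac{\gamma}{\pG_1}P\,\GradB$, which contains that noise term. Conditionally on all other data, the noise term is a nondegenerate Gaussian, so $P\wB^+\neq 0$ almost surely when $\sigma_\eta>0$. Hence $\SE(B^+,B^\star)>0$ even though $\SE(B_{\ell-1},B^\star)=0$. Once $(1-0.23\gamma\sigma_{\min}^{\star 2})^{\ell}\delta_0$ falls below $\nu$ the recursion breaks, and ``up to constants'' cannot rescue a right-hand side that tends to $0$.

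What is missing is an explicit sample-size price for making the noise proportional to $\delta_\ell$. The paper obtains that proportionality by choosing the Bernstein levels $\epsilon_3=\frac{c}{2}\mu\sqrt{r/T}\,\delta_\ell\sigma_{\max}^\star$ in Lemma~\ref{l2} and $\epsilon_5=c\epsilon_4\delta_\ell$ in Lemma~\ref{l4}. The corresponding failure probabilities are $2T\exp(r-c\epsilon_3^2\pG_1/\sigma_\eta^2)$ and $2\exp(C(d+r)-c\epsilon_5^2\pG_1\sigma_{\min}^{\star 2}/(\sigma_\eta^2\kappa^2))$. These are below $d^{-10}$ only if $\pG_1>C(r+\log T+\log d)\NSR/(\mu^2 r\kappa^2\delta_\ell^2)$ and $\pG_1T>C(d+r)\kappa^2\NSR/\delta_\ell^2$. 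The factor $\delta_\ell^{-2}$ is dropped when the theorem's sample bounds are written down, so the paper's proof has the same hole, only hidden.

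A correct version of your argument must do one of two things:
\begin{itemize}
\item carry the additive term and conclude $\SE(B_\ell,B^\star)\leqslant(1-c\gamma\sigma_{\min}^{\star 2})^{\ell}\delta_0+C\nu$; or
\item fix a target accuracy $\epsilon$, replace $\NSR$ by $\NSR/\epsilon^2$ in the noise-driven sample bounds (as Theorem~\ref{T11} does for $m\geqslant 2$), and claim decay only while $\delta_\ell\geqslant\epsilon$.
\end{itemize}

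Separately, the factor $2\rho^2-2\rho+1\in[0.625,1]$ in $\bE[\GradB]$ should not be absorbed into constants. It rescales the step in the contraction term, $P\wB^+=PB\bigl(I-(2\rho^2-2\rho+1)\gamma WW^\top\bigr)-\frac{\gamma}{\pG_1}P(\GradB-\bE[\GradB])$, but not the deviation term. It therefore eats directly into the margin you need for the rate $0.23$. The paper's proof, which works with $\|I-\gamma WW^\top\|$, overlooks this as well.
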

{\cblue \noindent{\bf Discussion on Theorem~\ref{T5}.} Theorem~\ref{T5} establishes the estimation guarantee for the first conservative epoch. After achieving a sufficiently accurate subspace estimate through spectral initialization, each iteration reduces the subspace error by the factor $1-0.23\gamma\sigma_{\min}^{\star^2}$. Thus, the first epoch not only ensures safety via conservative exploration but also provides an exponentially improving estimate of the shared representation.}

{\cblue \noindent{\bf Proof Sketch (Details in Appendix~\ref{proof_T5}).} Define $P:=I-B^\star{B^{\star}}^\top$. According to Theorem~5.2 in \cite{OurICML}, we have
\begin{align*}
\|P \wB^+\| &\leqslant \|P B\| \|I - \gamma W W^\top\| + \frac{\gamma}{\pG_1} \|\bE[\GradB] - \GradB\|, \\
\SE(B^+, B^\star) &\leqslant \frac{\|P \widehat{B}^+\|}{1 - \frac{\gamma}{\pG_1} \|\bE[\GradB]\| - \frac{\gamma}{\pG_1} \|\GradB - \bE[\GradB]\|}.
\end{align*}
Thus, the proof requires bounding $\|I - \gamma W W^\top\|$, $\|\bE[\GradB] - \GradB\|$, and $\|\bE[\GradB]\|$. 
Lemma~\ref{l2} establishes bounds on $\sigma_{\min}(W)$ and $\sigma_{\max}(W)$ using Bernstein inequality. The upper bound on $\sigma_{\max}(W)$ guarantees that $I - \gamma W W^\top \succeq 0$ under the stated step-size condition, while the lower bound on $\sigma_{\min}(W)$ provides the required contraction bound for $\|I - \gamma W W^\top\|$. 
Lemma~\ref{l4} bounds both $\|\bE[\GradB] - \GradB\|$ and $\|\bE[\GradB]\|$ utilizing the Bernstein inequality. Applying these bounds into the previous inequalities results in the one-step recursion 
\begin{align*} 
\SE(B^+, B^\star) \leqslant \big(1 - 0.23 \gamma \sigma_{\min}^{\star^2} \big)\delt. 
\end{align*} 
Applying this recursion throughout the gradient descent steps in the first epoch provides the stated exponential error decay.}
\begin{theorem} \label{T11}
Assume $\SE(\Bm{1}, B^\star) \leqslant \delta_0 = \frac{c}{\mu \sqrt{r} \kappa^2}$, $\gamma \leqslant \frac{0.5}{\sigma_{\max}^{\star^2}}$. Define $\NSR := \frac{T \sigma_\eta^2}{\sigma_{\min}^{\star^2}}$, and let $\epsilon$ represent the desired final error. If at each GD iteration $\ell$, for any epoch $m \geqslant 2$, 
\begin{eqnarray*}
\scalemath{0.99}{(\pG_m - \pG_{m-1}) T} \hspace{-3 mm}&\geqslant &\hspace{-3 mm} \scalemath{0.99}{C \mu^2 \kappa^4 (d + r) r \max(1, \frac{\kappa^2 \NSR}{\epsilon^2})}, \\
\scalemath{0.99}{\pG_m - \pG_{m-1}} \hspace{-3 mm}&\geqslant &\hspace{-3 mm}\scalemath{0.99}{ C (r + \log T + \log d) \max(1, \frac{\NSR}{\epsilon^2 r}),} 
\end{eqnarray*}
then in the $(\ell + 1)^{\rm th}$ GD iteration, with probability at least $1 - ((m - 1) L + \ell) d^{-10}$, we have
$$
\SE(B^+, B^\star) \leqslant \delto := (1 - 0.36 \gamma \sigma_{\min}^{\star^2})^{((m - 2) L + \ell)} \delta_0. 
$$
\end{theorem}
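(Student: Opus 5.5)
The plan is to follow the template of the proof sketch of Theorem~\ref{T5} and establish a one-step contraction inside epoch $m\geqslant 2$, then iterate it. With $P:=I-B^\star{B^\star}^\top$, Theorem~5.2 of \cite{OurICML} gives
\begin{align*}
\|P\wB^+\| &\leqslant \|PB\|\,\|I-\gamma WW^\top\| + \tfrac{\gamma}{\pG_m-\pG_{m-1}}\|\bE[\GradB]-\GradB\|,\\
\SE(B^+,B^\star) &\leqslant \frac{\|P\wB^+\|}{1-\tfrac{\gamma}{\pG_m-\pG_{m-1}}\big(\|\bE[\GradB]\|+\|\GradB-\bE[\GradB]\|\big)}.
\end{align*}
The main change from the first epoch is the feature distribution. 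Under Assumption~\ref{assume:iid}, the greedy rows $x_\nt$ used in epoch $m$ are treated as i.i.d. standard Gaussian and independent of the current iterate $B$. Sample splitting supports this: each GD iteration uses a fresh block $\phimt{t,\l}{m}$, $\phimt{t,L+\l}{m}$. So the Grammian is $\Phi^\top\Phi\approx(\pG_m-\pG_{m-1})I$, with no $(1-2\rho)^{-2}$ inflation. This is why the constants improve, to $0.36$ and $\gamma\leqslant 0.5/\sigma_{\max}^{\star2}$.

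The first step is to redo Lemma~\ref{l2} for isotropic design, controlling the coefficient error $\|w_t-B^\top\thetats\|$. I would write $w_t-B^\top\thetats=(B^\top\Phi_t^\top\Phi_tB)^{-1}B^\top\Phi_t^\top(\Phi_tP\thetats... )$, more precisely in terms of $\Phi_t(I-BB^\top)\thetats+\eta_t$. I then bound $\|(B^\top\Phi_t^\top\Phi_tB)^{-1}\|\leqslant 1.1/(\pG_m-\pG_{m-1})$ by an $\epsilon$-net plus sub-exponential Bernstein argument over $r$ dimensions, union bounded over $T$. The cross term $\|B^\top\Phi_t^\top\Phi_t(I-BB^\top)\thetats\|$ is bounded the same way. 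This step needs $\pG_m-\pG_{m-1}\gtrsim r+\log T+\log d$. The noise part contributes $\sigma_\eta\sqrt{r/(\pG_m-\pG_{m-1})}$ per task, and summing over $t$ gives the NSR term; requiring it to be at most $\epsilon\,\sigma_{\min}^\star$ is what produces the factor $\max(1,\NSR/(\epsilon^2 r))$.

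From these bounds I get $\|W-B^\top\Thetas\|_F\leqslant 0.4\,\delt\sigma_{\max}^\star+\epsilon$-level noise. Weyl's inequality then gives $\sigma_{\min}(W)\geqslant 0.9\sigma_{\min}^\star$ and $\sigma_{\max}(W)\leqslant 1.1\sigma_{\max}^\star$, using $\delt\leqslant\delta_0=c/(\mu\sqrt r\kappa^2)$. Consequently $\|I-\gamma WW^\top\|\leqslant 1-0.81\gamma\sigma_{\min}^{\star2}$ for $\gamma\leqslant 0.5/\sigma_{\max}^{\star2}$. Column incoherence of $W$ follows from Assumption~\ref{assume:incoherence}: $\|w_t\|\leqslant 1.1\mu\sqrt{r/T}\,\sigma_{\max}^\star$.

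The second step, which I expect to be the main obstacle, is the gradient concentration that replaces Lemma~\ref{l4}. Write $\bE[\GradB]=(\pG_m-\pG_{m-1})(BW-\Thetas)W^\top$, so that $\|\bE[\GradB]\|\lesssim(\pG_m-\pG_{m-1})\delt\sigma_{\max}^{\star2}$. For $\|\GradB-\bE[\GradB]\|$ I would use a $\tfrac14$-net on unit $d$- and $r$-vectors and sub-exponential Bernstein on sums of $(u^\top x)(x^\top(Bw_t-\thetats))(w_t^\top v)$ and of noise terms. The $\psi_1$-norms are controlled via the incoherence of $w_t$, which is where $\mu^2\kappa^4$ enters. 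This yields, w.p. $1-d^{-10}$,
$\|\GradB-\bE[\GradB]\|\leqslant c_1(\pG_m-\pG_{m-1})\sigma_{\min}^{\star2}\delt+$ a noise term $\lesssim\epsilon\delt\sigma_{\min}^{\star2}$ scaled, provided $(\pG_m-\pG_{m-1})T\gtrsim\mu^2\kappa^4(d+r)r\max(1,\kappa^2\NSR/\epsilon^2)$. The delicate part is keeping the noise contribution proportional to $\delt$, or at least below $\epsilon$, rather than an additive floor. For this I would absorb it using the $\max$ with $\NSR/\epsilon^2$, as in \cite{OurICML}.

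The final step plugs these bounds into the two displays. With the chosen constant $c_1$ small, the numerator becomes $\delt(1-0.81\gamma\sigma_{\min}^{\star2}+c_1\gamma\sigma_{\min}^{\star2})$ and the denominator is at least $1-c_2\gamma\sigma_{\min}^{\star2}$-type terms. This gives $\SE(B^+,B^\star)\leqslant(1-0.36\gamma\sigma_{\min}^{\star2})\delt$. Since $\delt\leqslant\delta_0$ holds inductively, the hypotheses of both previous steps persist. I then chain this across the $L$ iterations of each of epochs $2,\dots,m$, starting from $\SE(\Bm{1},B^\star)\leqslant\delta_0$, where $\Bm{m-1}$ initializes epoch $m$. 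This gives the exponent $(m-2)L+\ell$. A union bound over all iterations, each failing w.p. at most $d^{-10}$, gives probability $1-((m-1)L+\ell)d^{-10}$.
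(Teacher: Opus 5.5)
Your skeleton (the one-step bound from Theorem~5.2 of \cite{OurICML}, an isotropic-design version of Lemma~\ref{l2}, gradient concentration, and induction over iterations and epochs) is the standard AltGDmin analysis. The paper imports it by invoking Theorem~2.3 of \cite{singh2024noisy}. It breaks exactly at the step you call delicate, and your proposed fix does not close it.

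Write $N_m:=\pG_m-\pG_{m-1}$. Since $P\,\bE[\GradB]=N_m PBWW^\top$, the only additive term in $\|P\wB^+\|$ is $\frac{\gamma}{N_m}\|\GradB-\bE[\GradB]\|$. Its noise part $\frac{\gamma}{N_m}\|\sum_t\Phi_t^\top\eta_t w_t^\top\|$ concentrates at level $\gamma\sigma_\eta\sigma_{\max}^\star\sqrt{(d+r)/N_m}$. This does not shrink with $\delt$, because the noise is fresh in every split and independent of $B$. Likewise $(B^\top\Phi_t^\top\Phi_tB)^{-1}B^\top\Phi_t^\top\eta_t$ adds a $\delt$-free term of size about $\sigma_\eta\sqrt{r/N_m}$ to $\|\theta_t-\thetats\|$.

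Your final step writes the numerator as $\delt(1-0.81\gamma\sigma_{\min}^{\star 2}+c_1\gamma\sigma_{\min}^{\star 2})$, which needs these terms bounded by $c\,\gamma\sigma_{\min}^{\star 2}\delt$. That requires $N_mT\gtrsim(d+r)\kappa^2\NSR/\delt^2$. The hypothesis $N_mT\geqslant C\mu^2\kappa^4(d+r)r\max(1,\kappa^2\NSR/\epsilon^2)$ supplies this only while $\delt\gtrsim\epsilon/(\mu\kappa^2\sqrt{r})$. Taking the $\max$ with $\NSR/\epsilon^2$, as in \cite{OurICML}, therefore gives only an additive recursion
\[
\delto\leqslant\big(1-0.36\gamma\sigma_{\min}^{\star 2}\big)\delt+c\,\gamma\sigma_{\min}^{\star 2}\,\frac{\epsilon}{\mu\kappa^2\sqrt{r}},
\]
that is, geometric decay down to an $\epsilon$-level floor. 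Being \emph{proportional to $\delt$} and being \emph{below $\epsilon$} are not interchangeable here. Your assertion that the noise contribution is $\lesssim\epsilon\,\delt\,\sigma_{\min}^{\star 2}$ under the stated sample sizes is the unjustified step.

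This cannot be patched under the stated hypotheses. Each GD step injects into $P\wB^+$ a fresh noise perturbation whose size does not depend on $\delt$. So at a fixed per-iteration sample size with $\sigma_\eta>0$, no argument can give the claimed $(1-0.36\gamma\sigma_{\min}^{\star 2})^{(m-2)L+\ell}\delta_0\to 0$ for arbitrarily many iterations. What your argument actually proves is a bound of the form $(1-c'\gamma\sigma_{\min}^{\star 2})^{(m-2)L+\ell}\delta_0+C\epsilon/(\mu\kappa^2\sqrt{r})$. Alternatively, you get the pure geometric rate if the per-iteration requirement is strengthened to something like $N_mT\gtrsim(d+r)\kappa^2\NSR/\delt^2$.

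Following the paper more closely would not rescue the step. Its proof substitutes $\epsilon_2=c\epsilon_1\delt$ into Theorem~2.3 of \cite{singh2024noisy}, which silently carries the same $\delt^{-2}$ factor into the sample requirement. The same omission is visible in the proof of Theorem~\ref{T5}: there $\epsilon_5=c\epsilon_4\delt$ and $\epsilon_3\propto\delt$ are chosen, but the resulting sample conditions are recorded without any $\delt^{-2}$.
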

{\cblue \noindent{\bf Discussion on Theorem~\ref{T11}.} Theorem~\ref{T11} extends the exponential decay guarantee from the first conservative epoch to all following greedy epochs. The target accuracy $\epsilon$ is included in the sample-size conditions because the stochastic estimation error in subsequent epochs must be maintained below the desired final error level. Treating $\mu$ and $\kappa$ as constants and hiding logarithmic factors, each subsequent epoch needs $\widetilde{\Omega}((d+r)r\max\{1,\NSR/\epsilon^2\})$ collect samples, with a per-task sample requirement that scales logarithmically with respect to $T$ and $d$. This demonstrates the advantage of the rank-$r$ shared representation: when $r \ll \min\{d,T\}$, the required total number of sample can be significantly less than the $dT$ samples required to independently estimate all task parameters.}

Proof is given in Section~\ref{proof_T11}. From Theorem~\ref{T5} and Theorem~\ref{T11}, the proposed approach achieves exponential error
decay in all epochs.

\noindent{\bf Sample Complexity.} To analyze the sample complexity, we focus on the term $(\pG_m - \pG_{m - 1}) T$. 
Let $\kappa$ and $\mu$ be constant values. For the first epoch $m = 1$, the sample complexity is $O((d + r) \max\{r, \NSR\})$. For any epoch $m \geqslant 2$, the required sample complexity is $O((d + r) r \max\{1, \frac{\NSR}{\epsilon^2}\})$. 
Since rank $r \geqslant 1$ and the desired final error $\epsilon \ll 1$, the sample complexity of our proposed algorithm in each epoch is $O((d + r) r \max\{1, \frac{\NSR}{\epsilon^2}\})$. Without making the low-rank assumption and without using our algorithm, if we were to perform matrix inversion for $\phimt{t}{m}$ in order to extract each vector $\thetats$ from $\ymt{t}{m}$, we would need at least $(\pG_m - \pG_{m-1}) T \geqslant T d$ samples per epoch. 

\noindent{\bf Time and Communication Complexity.} 
To achieve a $\epsilon$-error between the estimate $\wB$ and the true parameter $B^\star$, the time complexity is $O((\pG_m - \pG_{m-1}) T d r \log{\frac{1}{\epsilon}})$, while the communication complexity is $O(d r \kappa \log{\frac{1}{\epsilon}})$. 

We provide the guarantee for the cumulative regret bound. 
\begin{theorem}\label{T10}
Set $\gamma \leqslant \frac{c}{(1 + \frac{0.04}{(1 - 2 \rho)^2})^2 \sigma_{\max}^{\star^2}}$, {\cblue $\rho \in [0, 0.25] \cup [0.75, 1] \cap \Big[0, \frac{\alpha\rbnt}{\rbnt+\sqrt{2\frac{r}{T}\log\frac{\pG_1 T}{\delta}}\mu\sigma_{\max}^\star}\Big]$}, and $L = C \kappa^2 \log \Big(\frac{\sqrt{2 (\rho^2 - \rho + 1)}}{1 + \frac{2}{(1 - 2 \rho)^2}}\Big)$. Let $\epsilon$ represent the desired final error. If
\begin{eqnarray*}
\pG_1 T &>& C \mu^4 d r \kappa^6 \max(r \kappa^2, \NSR), \\
\pG_1 &>& C (r + \log T + \log d) \max(1, \frac{\NSR}{\mu^2 r \kappa^2}), 
\end{eqnarray*}
and for $m \geqslant 2$, 
\begin{eqnarray*}
\scalemath{0.98}{(\pG_m - \pG_{m-1}) T} &\geqslant& \scalemath{0.98}{C \mu^2 \kappa^4 (d + r) r \max(1, \frac{\kappa^2 \NSR}{\epsilon^2})}, \\
\pG_m - \pG_{m-1} &\geqslant & C (r + \log T + \log d) \max(1, \frac{\NSR}{\epsilon^2 r}), 
\end{eqnarray*}
then with probability at least $1 - 2 \delta - M L d^{-10}$, the cumulative regret $\pR_\NT$ is bounded as 
\begin{equation*}
\pR_\NT \leqslant 4 \sqrt{2} \mu \sigma_{\max}^\star \sqrt{r N T (\rho^2 - \rho + 1) \log \frac{1}{\delta}}. 
\end{equation*}
\end{theorem}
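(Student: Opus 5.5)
Split the cumulative regret by epoch, $\pR_\NT=\sum_{m=1}^M \pR^{\sm}$, where $\pR^{\sm}=\sum_{n=\pG_{m-1}+1}^{\pG_m}\sum_{t=1}^T (x_\nt^{\star}-x_\nt)^\top\thetats$. Bound each epoch's instantaneous regret by a Gaussian concentration term times a parameter-estimation error. Then sum using the doubling schedule $\pG_m=N^{1-2^{-m}}$.

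\emph{Safety and the good event.} The constraints on $\rho$ combine both safety results. The interval $\big[0,\alpha\rbnt/(\rbnt+\sqrt{2\frac{r}{T}\log\frac{\pG_1T}{\delta}}\mu\sigma_{\max}^\star)\big]$ makes Lemma~\ref{l8} apply in epoch~1. The interval $[0,0.25]\cup[0.75,1]$, together with the sample-size conditions, makes Theorem~\ref{l9} apply for $m\geqslant 2$. So every action is feasible, and the regret is the unconstrained pseudo-regret along the algorithm's trajectory. We work on the intersection of the good events of Proposition~\ref{p6}, Theorem~\ref{T5} and Theorem~\ref{T11}, which has probability at least $1-ML d^{-10}$. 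We also work on two Gaussian concentration events, of probability $1-\delta$ each. These give the stated total $1-2\delta-MLd^{-10}$.

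\emph{Epoch~1.} Here the played feature is $x_\nt=(1-\rho)\pb+\rho\zeta_\nt$. Write the per-round regret as $(x_\nt^\star-x_\nt)^\top\thetats$. By Assumption~\ref{assume:iid}, the difference $x^\star_\nt-x_\nt$ behaves as a Gaussian vector with variance proxy proportional to $(1-\rho)^2+\rho^2+\rho(1-\rho)\cdot(\text{cross term})$. Careful bookkeeping of the cross term should give a factor $2(\rho^2-\rho+1)$. Its inner product with $\thetats$ is then sub-Gaussian with scale $\sqrt{2(\rho^2-\rho+1)}\,\|\thetats\|\leqslant\sqrt{2(\rho^2-\rho+1)}\,\mu\sqrt{r/T}\,\sigma_{\max}^\star$, using incoherence (Assumption~\ref{assume:incoherence}). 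Summing over the $\pG_1T$ pairs with a Hoeffding/Azuma bound gives $\pR^{\szero}\lesssim \mu\sigma_{\max}^\star\sqrt{r\pG_1 T(\rho^2-\rho+1)\log(1/\delta)}$.

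\emph{Epochs $m\geqslant2$.} Greedy optimality gives $x_\nt^\top\thetahatt^{\smo}\geqslant x_\nt^{\star\top}\thetahatt^{\smo}$. Hence the instantaneous regret is at most $(x^\star_\nt-x_\nt)^\top(\thetats-\thetahatt^{\smo})$, exactly as in the proof sketch of Theorem~\ref{l9}. Bound this by $2\sqrt{2\log(1/\delta)}\,\|\thetats-\thetahatt^{\smo}\|$. Then use Lemma~\ref{l2} with Theorems~\ref{T5}--\ref{T11}: $\|\thetats-\thetahatt^{\smo}\|\leqslant(1+\frac{2}{(1-2\rho)^2})\mu\sqrt{r/T}(1-c/\kappa^2)^{L}\sigma_{\max}^\star$. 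The stated choice $L=C\kappa^2\log\big(\sqrt{2(\rho^2-\rho+1)}/(1+\frac{2}{(1-2\rho)^2})\big)$ is designed so that $(1+\frac{2}{(1-2\rho)^2})(1-c/\kappa^2)^L\leqslant\sqrt{2(\rho^2-\rho+1)}$. Every later epoch therefore has the same per-sample scale as epoch~1. Applying the martingale/Hoeffding sum over the $(\pG_m-\pG_{m-1})T$ pairs gives $\pR^{\sm}\lesssim\mu\sigma_{\max}^\star\sqrt{r(\pG_m-\pG_{m-1})T(\rho^2-\rho+1)\log(1/\delta)}$.

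\emph{Summation and main obstacle.} Summing over epochs and using Cauchy--Schwarz on $\sum_m\sqrt{\pG_m-\pG_{m-1}}$ would cost a $\sqrt{M}$ factor. To reach the clean constant $4\sqrt2$, I would instead apply one concentration bound to the whole sum of $NT$ sub-Gaussian terms. This is valid because all terms share the common scale $\sqrt{2(\rho^2-\rho+1)}\mu\sqrt{r/T}\sigma_{\max}^\star$, and conditional independence holds from sample splitting. The result is $\sqrt{NT\cdot 2(\rho^2-\rho+1)\tfrac{r}{T}\cdot 2\log(1/\delta)}$ times a constant of $2$ to $4$. The delicate step is justifying the concentration itself. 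In later epochs the actions $x_\nt$ are greedy rather than Gaussian, so the sub-Gaussian claim must rest on Assumption~\ref{assume:iid} as used in the paper (the $x_\nt$ treated as Gaussian draws, independent of $\thetahatt^{\smo}$ by sample splitting). I would also check the constant matching in $L$, i.e.\ that the logarithm's argument gives the intended inequality rather than its reverse when $\rho$ is near $0.25$ or $0.75$.
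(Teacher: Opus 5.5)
Your proposal is correct and is essentially the paper's proof. Both split off the conservative epoch and treat the increments $(x_{n,t}^\star-x_{n,t})^\top(\cdot)$ as mean-zero Gaussians via Assumption~\ref{assume:iid}; note that the factor $2(\rho^2-\rho+1)=1+(1-\rho)^2+\rho^2$ comes from counting $x_{n,t}^\star$ itself as an independent standard Gaussian, not from a cross term (the cross term vanishes). Both then use the greedy inequality and Lemma~\ref{l2} to get $\|\theta_t^\star-\widehat{\theta}_{m-1,t}\|\leqslant\sqrt{2(\rho^2-\rho+1)}\,\mu\sqrt{r/T}\,\sigma_{\max}^\star$, and apply one Gaussian tail bound to all greedy epochs together so that no $\sqrt{M}$ appears. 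The paper does this per task and sums over $t$ using incoherence, whereas pooling all $NT$ increments as you suggest even gives a bound smaller by a factor of order $\sqrt{T}$, which implies the statement.

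Your worry about $L$ is justified: the logarithm's argument lies in roughly $[0.14,0.47]$ for every admissible $\rho$, so the formula only makes sense as $C\kappa^2$ times the log of the reciprocal. It is nevertheless harmless. If you keep $\delta_0\leqslant 0.02/(\mu\sqrt{r}\kappa^2)$ rather than bounding it by $1$, then $(1+\frac{2}{(1-2\rho)^2})\delta_0\leqslant 0.18<\sqrt{2(\rho^2-\rho+1)}$, so the required error bound holds for every $L\geqslant 0$.
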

{\cblue \noindent{\bf Discussion on Theorem~\ref{T10}.} Theorem~\ref{T10} provides the regret guarantee derived using the safety and estimation results. Given the stated sample size and parameter conditions, while considering $\mu$, $\sigma_{\max}^{\star}$ and $\rho$ as constants, the cumulative regret scales as $\widetilde{O}(\sqrt{rNT})$. Thus, the regret is sublinear with respect to the number of samples $N$, the number of tasks $T$, and the rank $r$ instead of the ambient dimension $d$.} We note that the standard (naive) algorithm for {\cblue conservative} bandits results in a regret bound of $\widetilde{O}(d\sqrt{N})$ for each task \cite{moradipari2020stage, abbasi2011improved}. Thus, solving $T$ tasks separately would result in a regret of $\widetilde{O}(Td\sqrt{N})$. Theorem~\ref{T10} thus validates the efficiency of MTRL. {\cred Prior work on MTRL bandits \cite{hu2021near} assumes access to the optimal solution of the non-convex estimation problem and obtained regret bound that has a linear dependence on $T$. In contrast, even without this assumption, we obtain sublinear regret, where the above bound is influenced by Assumption~\ref{assume:incoherence}.} 

{\cblue \noindent{\bf Proof Sketch (Details in Appendix~\ref{proof_T10}).} Write $R_{N,T} = R_{N,T}^{(1)} + R_{N,T}^{(2)}$. In the first epoch, the action is $x_\nt = (1 - \rho) x_\bnt + \rho \zeta_\nt$. Hence, utilizing Gaussian concentration and the incoherence condition, we have
\begin{eqnarray*} 
R_{N,T}^{(1)} \leqslant 2 \mu \sigma_{\max}^{\star} \sqrt{\pG_1 r T (\rho^2 - \rho + 1) \log\frac{1}{\delta}}. 
\end{eqnarray*}
In subsequent epochs, the greedy selection rule gives that $x_{n,t}^{\top} \widehat{\theta}_{m-1, t} \geqslant x_{n,t}^{\star^\top} \widehat{\theta}_{m-1, t}$. Thus, 
\begin{eqnarray*} 
x_{n,t}^{\star^\top } \theta_t^\star - x_{n,t}^{\top} \theta_t^\star \leqslant (x_{n,t}^\star - x_{n,t})^\top (\theta_t^\star - \widehat{\theta}_{m-1,t}). 
\end{eqnarray*}
By selecting $L$ and applying Lemma~\ref{l2}, we have
\begin{eqnarray*} 
\|\theta_t^\star - \widehat{\theta}_{m-1, t}\| \leqslant \|\theta_t^\star - \widehat{\theta}_{1, t}\| \leqslant \sqrt{2 (\rho^2 - \rho + 1)} \mu \sqrt{\frac{r}{T}} \sigma_{\max}^{\star}. 
\end{eqnarray*}
Applying Gaussian concentration gives
\begin{eqnarray*} 
R_{N,T}^{(2)} \leqslant 2 \mu \sigma_{\max}^{\star} \sqrt{2 (N - \pG_1) r T (\rho^2 - \rho + 1)\log\frac{1}{\delta}}. 
\end{eqnarray*}
Combining the two bounds, we derive
\begin{eqnarray*} 
R_{N,T} \leqslant 4 \sqrt{2} \mu \sigma_{\max}^{\star} \sqrt{r N T (\rho^2 - \rho + 1) \log\frac{1}{\delta}}. 
\end{eqnarray*}
}
%
%
{\cblue \begin{remark}
Theorem~\ref{T10} provides an upper bound on the regret. The parameter $\mu$ is referred to as the incoherence parameter and arises from the column-wise incoherence condition of $W^\star = \Sigma^\star V^\star$, namely
$
\max_t \|w_t^{\star}\|_2 \leqslant \mu \sqrt{\frac{r}{T}} \sigma_{\max}^{\star}
$. Specifically, Assumption~\ref{assume:incoherence} states that $0 < l \leqslant \|w_t^\star\|_2 \leqslant u$ for all $t\in[T]$, and Appendix~\ref{app:prelim} demonstrates that one can select $\mu = \frac{u}{l} \geqslant 1$. Thus, $\mu$ is a column-imbalance parameter, i.e., it does not scale with $d$, $T$, or $r$. Intuitively, $\mu$ measures whether the energy of the task-specific coefficient vectors $w_t^\star$ is evenly distributed across the $T$ columns. When $\mu$ approaches $1$, the column norms are well balanced. When $\mu$ increases, certain columns may become more concentrated. Furthermore, we have
$
\frac{1}{T} \sum_{t=1}^T \|w_t^\star\|_2^2 = \frac{\|W^\star\|_F^2}{T} = \frac{\|\Sigma^\star\|_F^2}{T}. 
$
Consequently, the root-mean-square column norm $\sqrt{\frac{\sum_{i=1}^r \sigma_{i}^{\star^2}}{T}}$ lies between $\sqrt{\frac{r}{T}} \sigma_{\min}^{\star}$ and $\sqrt{\frac{r}{T}} \sigma_{\max}^{\star}$. Therefore, $\sqrt{\frac{r}{T}} \sigma_{\max}^{\star}$ serves as an inherent standard scale for the task-specific coefficient vectors, while $\mu$ only allows the maximum column norm to exceed this scale by a constant factor. Moreover, $\kappa = \frac{\sigma_{\max}^{\star}}{\sigma_{\min}^{\star}}$ represents the condition number of $\Sigma^\star$. It measures the conditioning of the nonzero singular spectrum. When $\kappa$ approaches $1$, all $r$ nonzero singular values show comparable magnitudes, indicating that the latent subspace directions have similar levels of informativeness. A large $\kappa$ indicates that certain singular directions are significantly weaker than others, complicating subspace estimation and recovery. Treating $\kappa$ as a numerical constant aligns with the standard well-conditioned scenario where the nonzero singular values have comparable magnitudes, thus $\kappa = O(1)$ is common in our settings. 
\end{remark}}

\begin{remark}
\cite{kazerouni2017conservative, moradipari2020stage} explored conservative linear bandits under both known and unknown baseline reward settings. Assumption~3 in \cite{moradipari2020stage} imposes a bounded baseline reward condition, $r_l\leqslant\rbnt\leqslant r_h$, which ensures theoretical guarantees even when the exact baseline reward is unknown. Our proposed algorithm, along with its theoretical guarantees, remains valid even in the unknown baseline setting. Specifically, {\cblue during the initial conservative exploration phase, the sufficient condition in Lemma~\ref{l8} can be satisfied by selecting the conservative parameter $\rho$ based on the lower bound $r_l$ and upper bound $r_h$, resulting in a consistently safe selection. In the subsequent greedy epochs, a sufficient condition for the number of gradient descent (GD) iterations $L$  in Theorem~\ref{l9} can be determined using upper bounds $\kappa_h + \alpha r_h$ for the numerator of the logarithmic term, rather than the exact baseline reward.} This modification seamlessly propagates through the analysis, ensuring that our theoretical guarantees remain intact. 
\end{remark}

\section{Simulations}\label{sec:sim}
\begin{figure*}
\vspace{-2 mm}
\subcaptionbox{\label{fig:2}}{\includegraphics[scale=0.345]{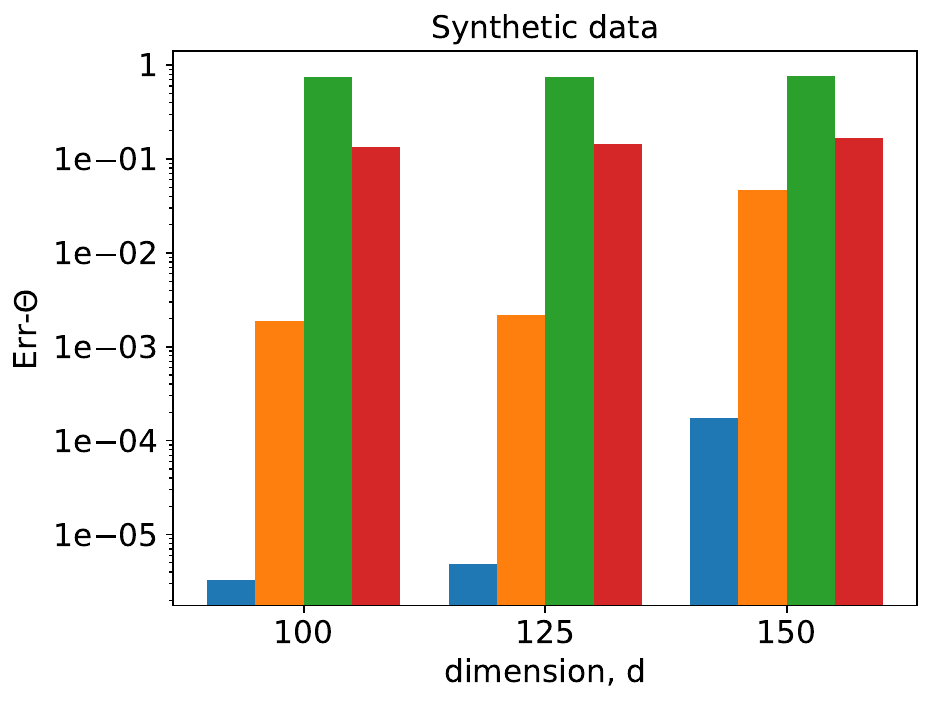}\vspace{-2 mm}}\hspace{0.8 em}%
\subcaptionbox{\label{fig:1}}{\includegraphics[scale=0.345]{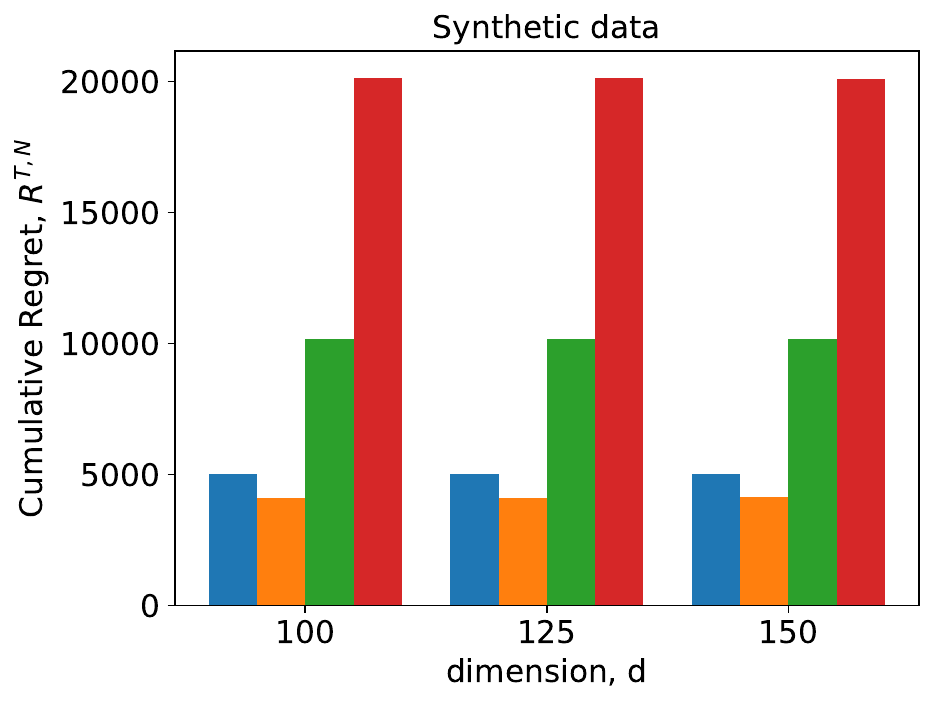}\vspace{-2 mm}}\hspace{0.8 em}%
\subcaptionbox{\label{fig:3}}{\includegraphics[scale=0.345]{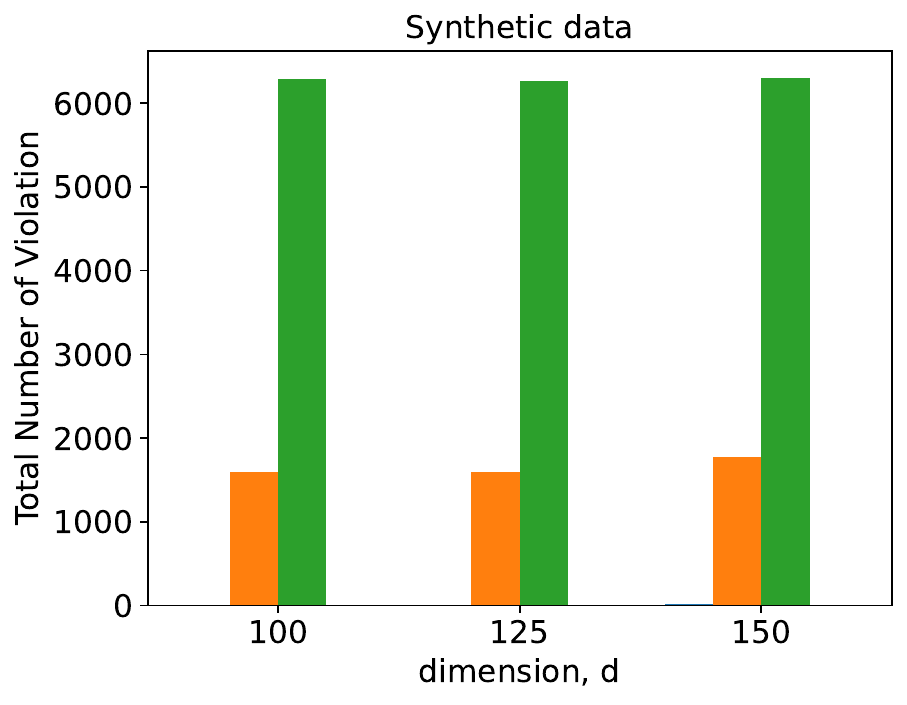}\vspace{-2 mm}}\hspace{0.8 em}
\vspace{-1mm}
\subcaptionbox{\label{fig:5}}{\includegraphics[scale=0.345]{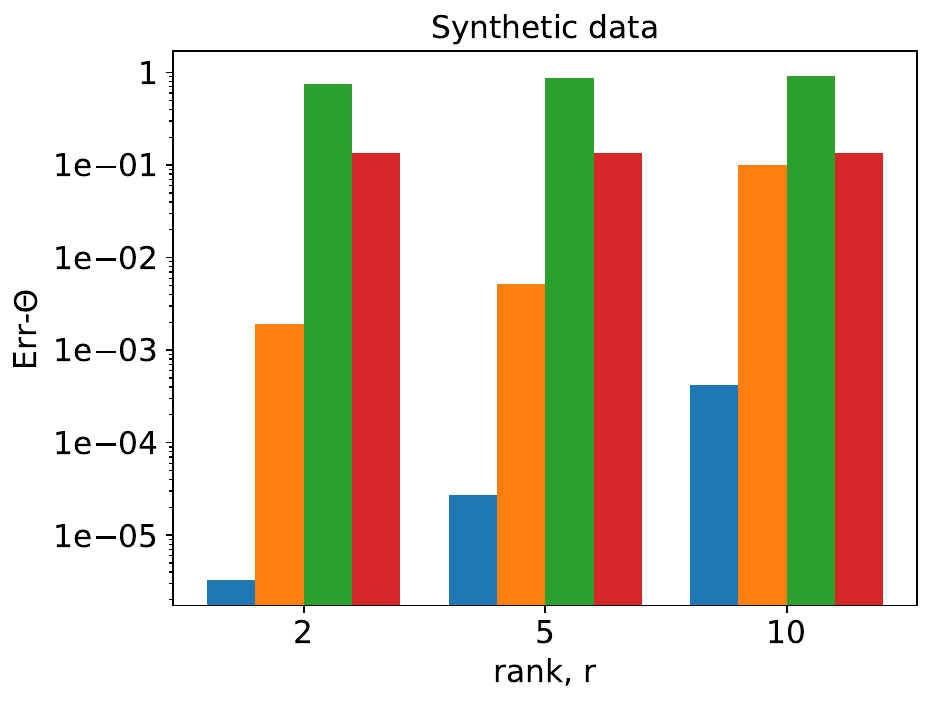}\vspace{-2 mm}}\hspace{0.8 em}%
\subcaptionbox{\label{fig:4}}{\includegraphics[scale=0.345]{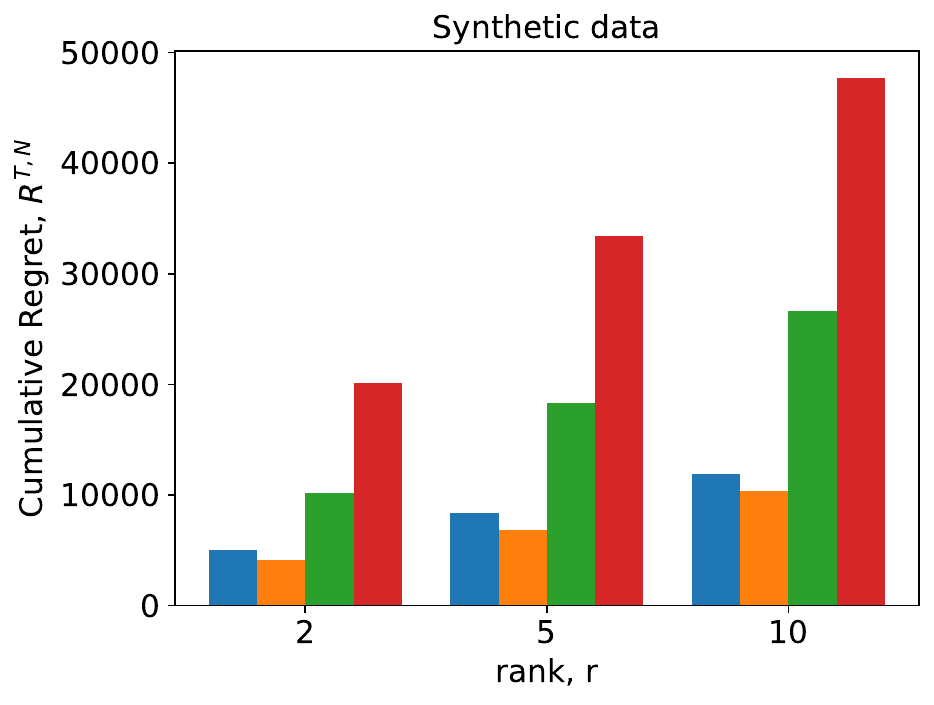}\vspace{-2 mm}}\hspace{0.8 em}%
\subcaptionbox{\label{fig:6}}{\includegraphics[scale=0.345]{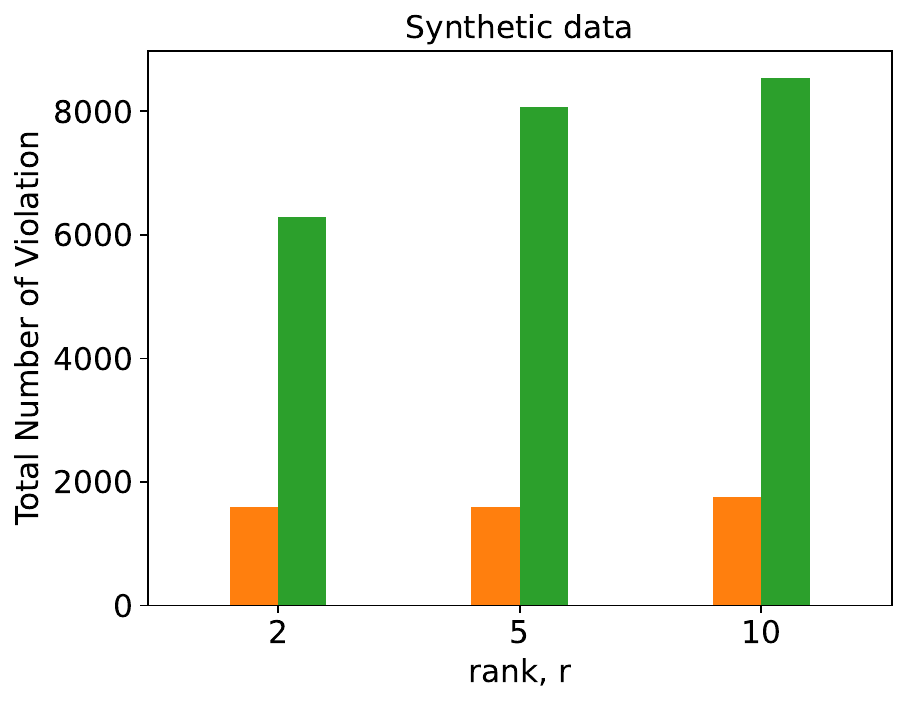}\vspace{-2 mm}}\hspace{0.8 em}
\vspace{-1mm}
\subcaptionbox{\label{fig:8}}{\includegraphics[scale=0.345]{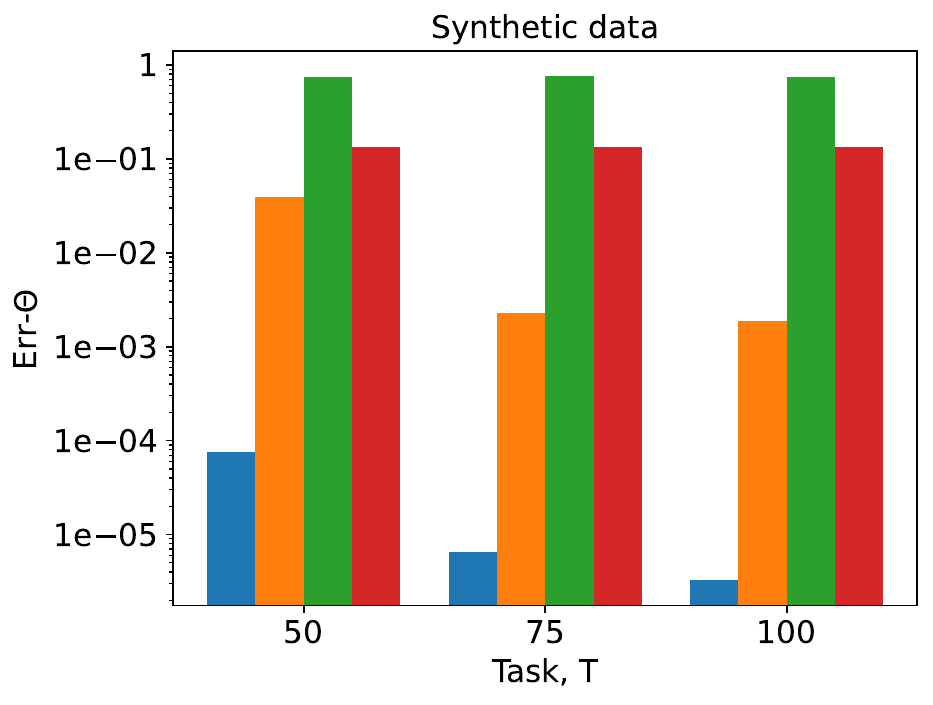}\vspace{-2 mm}}\hspace{0.8 em}%
\subcaptionbox{\label{fig:7}}{\includegraphics[scale=0.345]{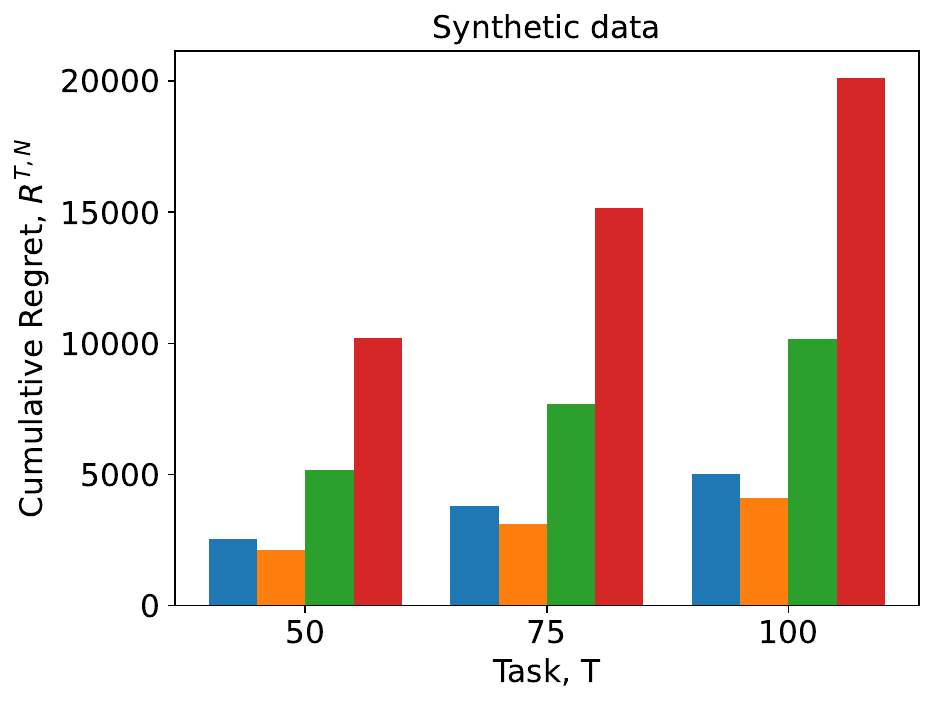}\vspace{-2 mm}}\hspace{0.8 em}%
\subcaptionbox{\label{fig:9}}{\includegraphics[scale=0.345]{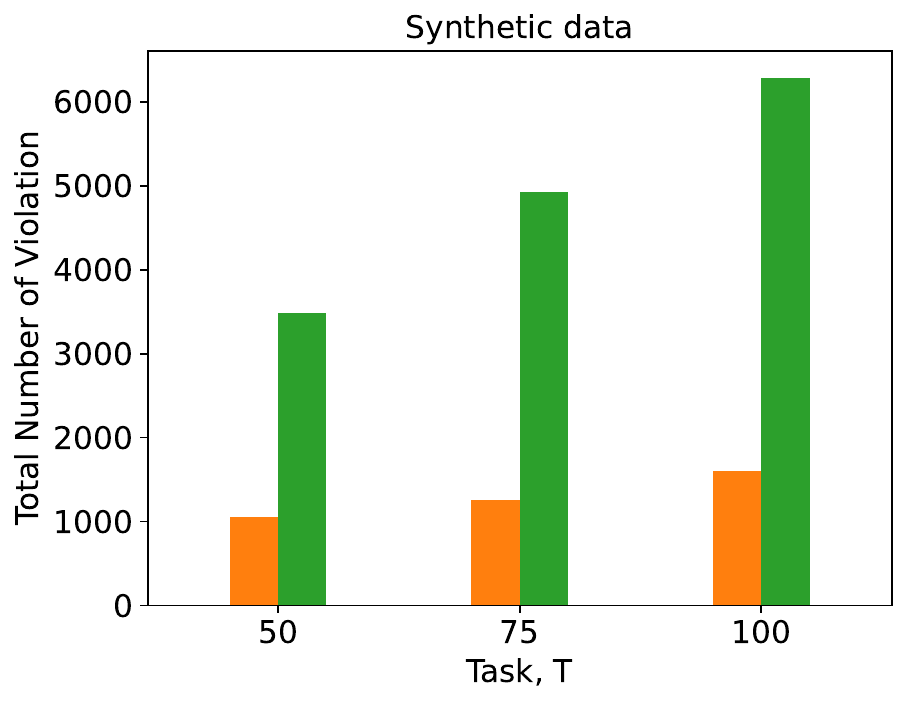}\vspace{-2 mm}}\hspace{0.8 em}
\vspace{-1mm}
\subcaptionbox{\label{fig:10}}{\includegraphics[scale=0.2]{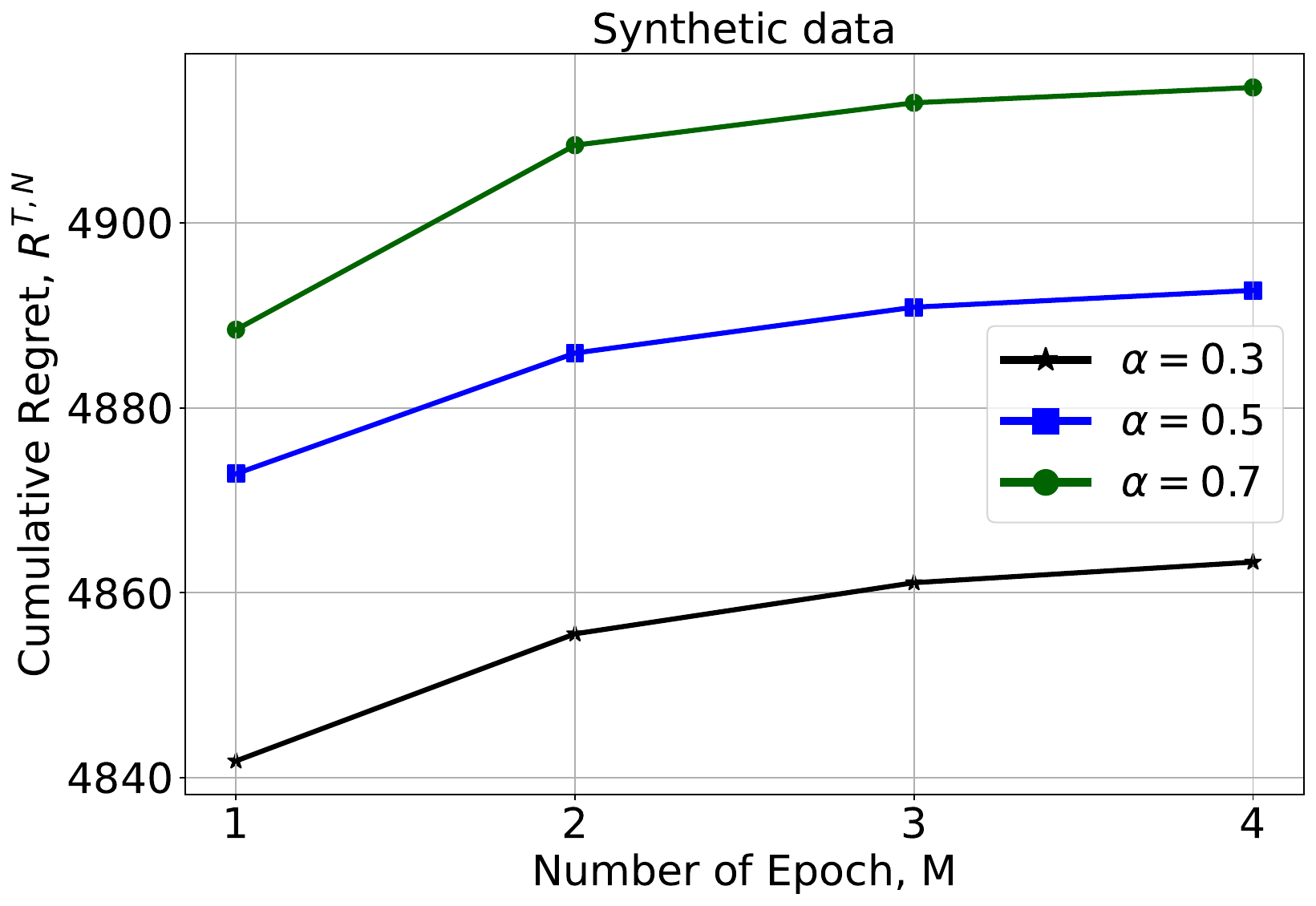}\vspace{-2 mm}}\hspace{0.8 em}%
\subcaptionbox{\label{fig:11}}{\includegraphics[scale=0.2]{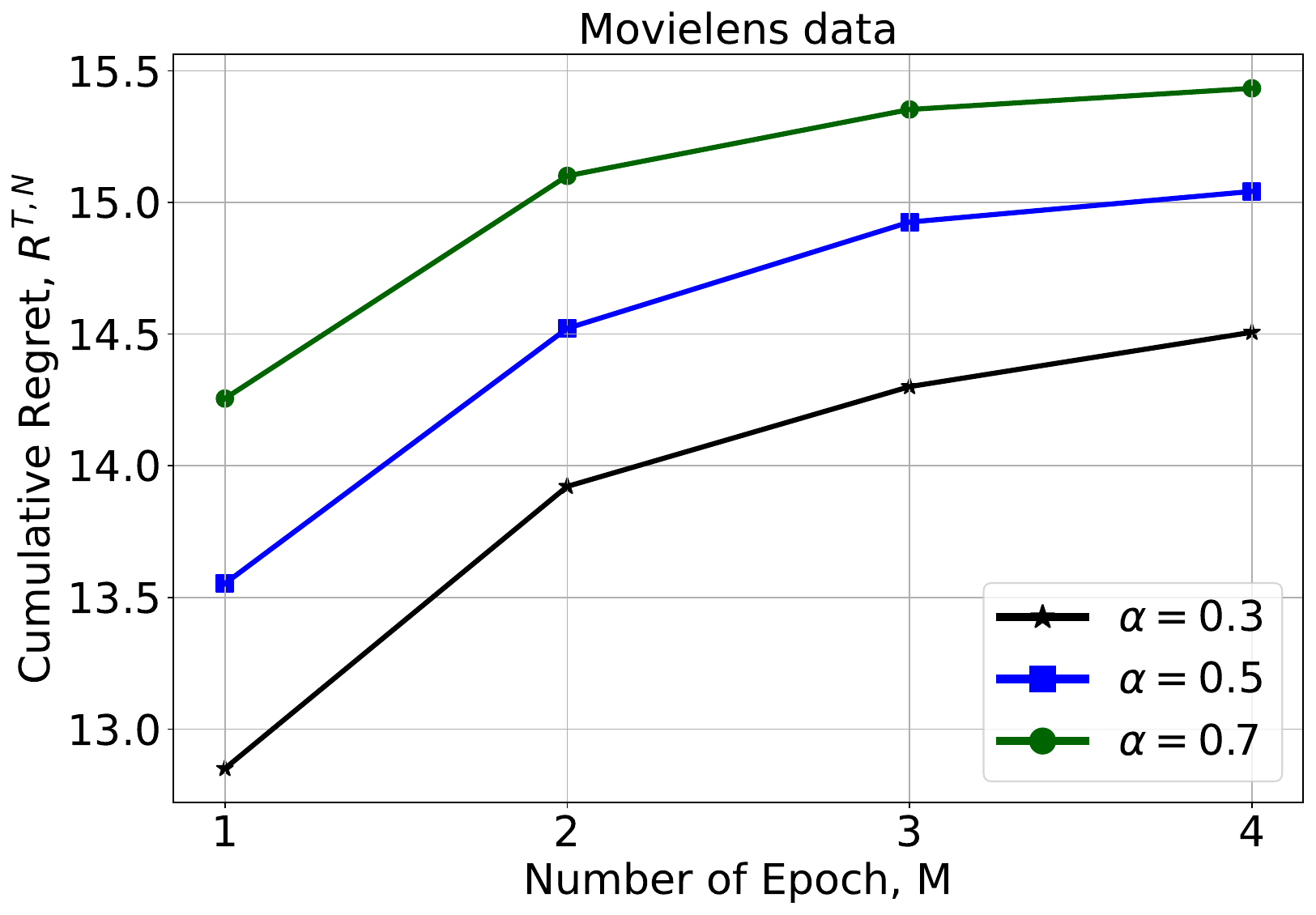}\vspace{-2 mm}}\hspace{0.2 em}%
\subcaptionbox{\label{fig:12}}{\includegraphics[scale=0.36]{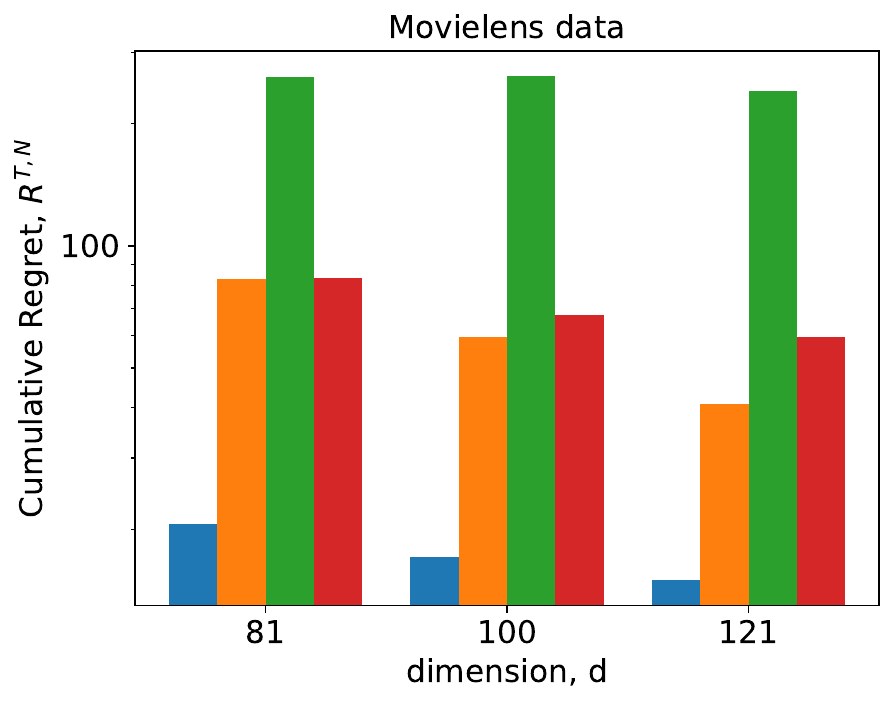}\vspace{-3 mm}}\hspace{0.8 em}
\vspace{-2mm}
\subcaptionbox{\label{fig:13}}{\includegraphics[scale=0.365]{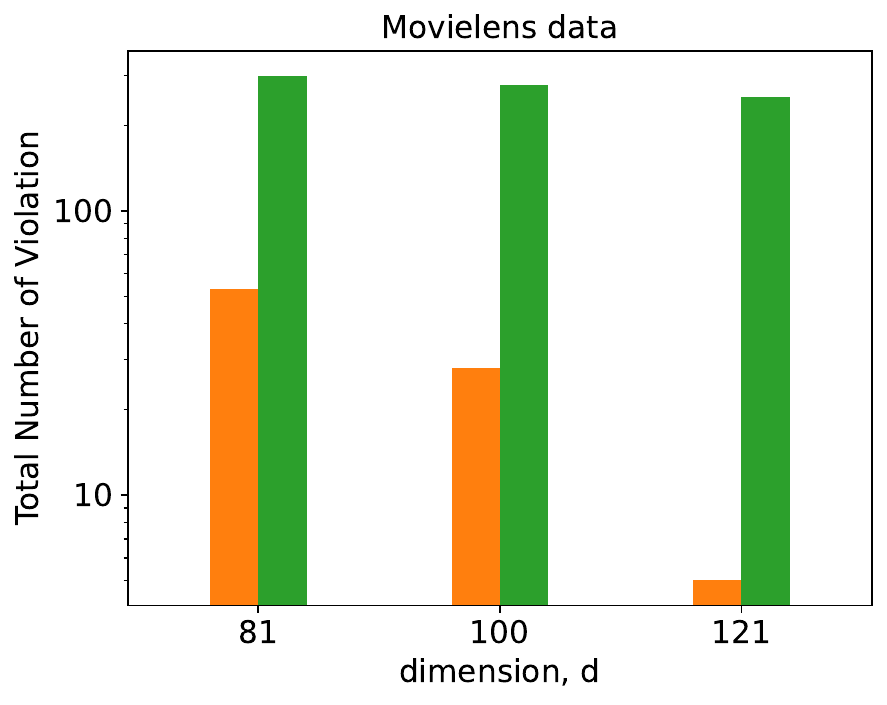}\vspace{-1 mm}}\hspace{0.8 em}%
\subcaptionbox{\label{fig:14}}{\includegraphics[scale=0.365]{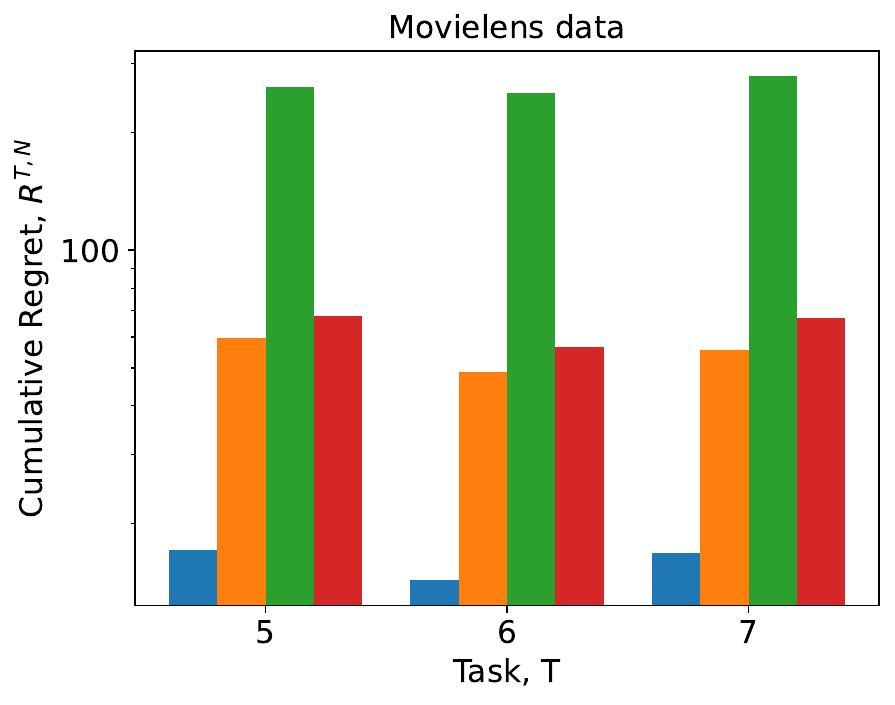}\vspace{-1 mm}}\hspace{0.8 em}%
\subcaptionbox{\label{fig:15}}{\includegraphics[scale=0.365]{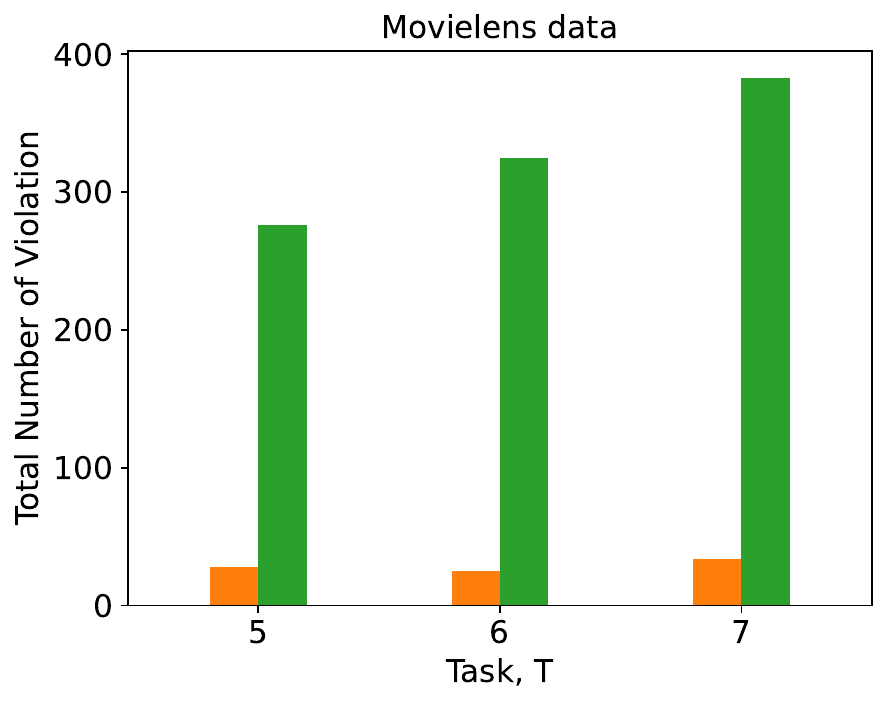}\vspace{-1 mm}}\hspace{0.8 em}
\caption{\footnotesize 
\legendsquare{deepblue}~Our proposed algorithm (Safe-AltGDmin),
  \legendsquare{orange}~Trace-norm (convex relaxation),
  \legendsquare{deepgreen}~Method of Moments (MoM), \legendsquare{deepred}~Thompson sampling (naive baseline).
  The plots present estimation error, cumulative regret, and number of violations for synthetic data (Figures~\ref{fig:2} to ~\ref{fig:10}) and Movielens data (Figures~\ref{fig:11} to ~\ref{fig:15}). The number of epochs is $M = 4$ and $50$ data samples in each epoch {\cblue for all algorithms}.}\label{fig:main1}
\end{figure*}

This section presents an experimental analysis of our proposed {\cblue Safe-AltGDmin} algorithm using both synthetic data and the real-world Movielens dataset. We compared our proposed algorithm with three benchmark algorithms: (i) the trace-norm convex relaxation approach \cite{cella2023multi, du2020few, chen2022active}, (ii) the Thompson Sampling (TS)-based algorithm \cite{moradipari2020stage} for independently solving $T$ constrained tasks, and (iii) the Method-of-Moments (MoM) algorithm from \cite{tripuraneni2021provable,yang2020impact}. The algorithm based on trace norm relaxation utilizes a convex relaxation to transform the rank constraint into a trace norm convex constraint, iteratively estimating $\widehat{\Theta}$ and the regularization parameter $\lambda$. TS algorithm independently estimates $\widehat{\theta}_t$ for each task $t$ during each iteration and constructs a safe action set. The MoM algorithm, during the first epoch, uses top-$r$ SVD to estimate $\wB$ from $\frac{1}{\pG_1 T} \sum_{t=1}^T \sum_{n=1}^{\pG_1} x_{n, t}x_{n, t}^\top y_{n, t}^2$ and in the second epoch utilizes a least-squares estimator to estimate $\wW$; the estimates $\wB$ and $\wW$ are subsequently applied to compute $\widehat{\Theta}$. In later epochs, the algorithm selects actions according to a greedy strategy utilizing $\widehat{\Theta}$. In our experiments, we varied parameters dimension $d$, rank $r$, and the number of tasks $T$. All experiments were carried out in Python, and for synthetic data experiments, the results are averaged over independent $100$ trials.
%
\subsection {Datasets}
\noindent{\bf Synthetic data:} 
In synthetic data experiments, we used the following default setting: dimension $d = 100$, rank $r = 2$, number of tasks $T = 100$, number of actions $K = 10$, and  $5$-th best action is selected as the baseline action. We randomly generated orthonormal matrices for $B^\star$ and $W^\star$ from an i.i.d. Gaussian distribution. The $\Phi_t$ matrices are independently sampled from a standard Gaussian distribution. We used a Gaussian distribution for the reward noise, with mean zero and variance $10^{-6}$. 

\noindent{\bf Movielens data:} 
We evaluated our proposed algorithm using the Movielens-100K dataset from \cite{harper2015movielens}. The dataset was preprocessed using collaborative filtering, resulting in the rating matrix $R \in \bR^{943 \times 1682}$. The ratings range from $0$ to $5$, and we normalized to the range $[0, 1]$. Subsequently, we implemented Non-negative Matrix Factorization (MNF) to decompose the matrix as $R = U H$, with the latent factor dimension at $\sqrt{d}$,  where $U \in \bR^{943 \times \sqrt{d}}$ and $H \in \bR^{\sqrt{d} \times 1682}$. We utilized the $k$-means algorithm to cluster the columns of $H$ into $T$ distinct groups to derive feature vectors $x_\nt$ for various tasks. For each task $t$, the feature vector is obtained by computing the outer product of the $i^{\rm th}$ row of $U$ and $j^{\rm th}$  column of the $t$-th cluster in $H$, subsequently vectorizing the resulting $\sqrt{d} \times \sqrt{d}$ matrix. The reward noise is generated from a Gaussian distribution with a mean of zero and a variance of $10^{-6}$. The reward parameter $\theta^\star_t$ for each task $t$ under this modeling is the vectorized $I_{\sqrt{d}}$ matrix. Thus, $\Thetas = [\theta_1^\star \cdots \theta_T^\star]$ has rank $r = 1$.  We set the default setting of the parameters as $d = 100$, $T = 5$, with the baseline action chosen as the $5$-th optimal action. 
%
\subsection{Evaluation of the Proposed Algorithm}
\begin{figure}[H]
\centering
\includegraphics[width=0.3\textwidth]{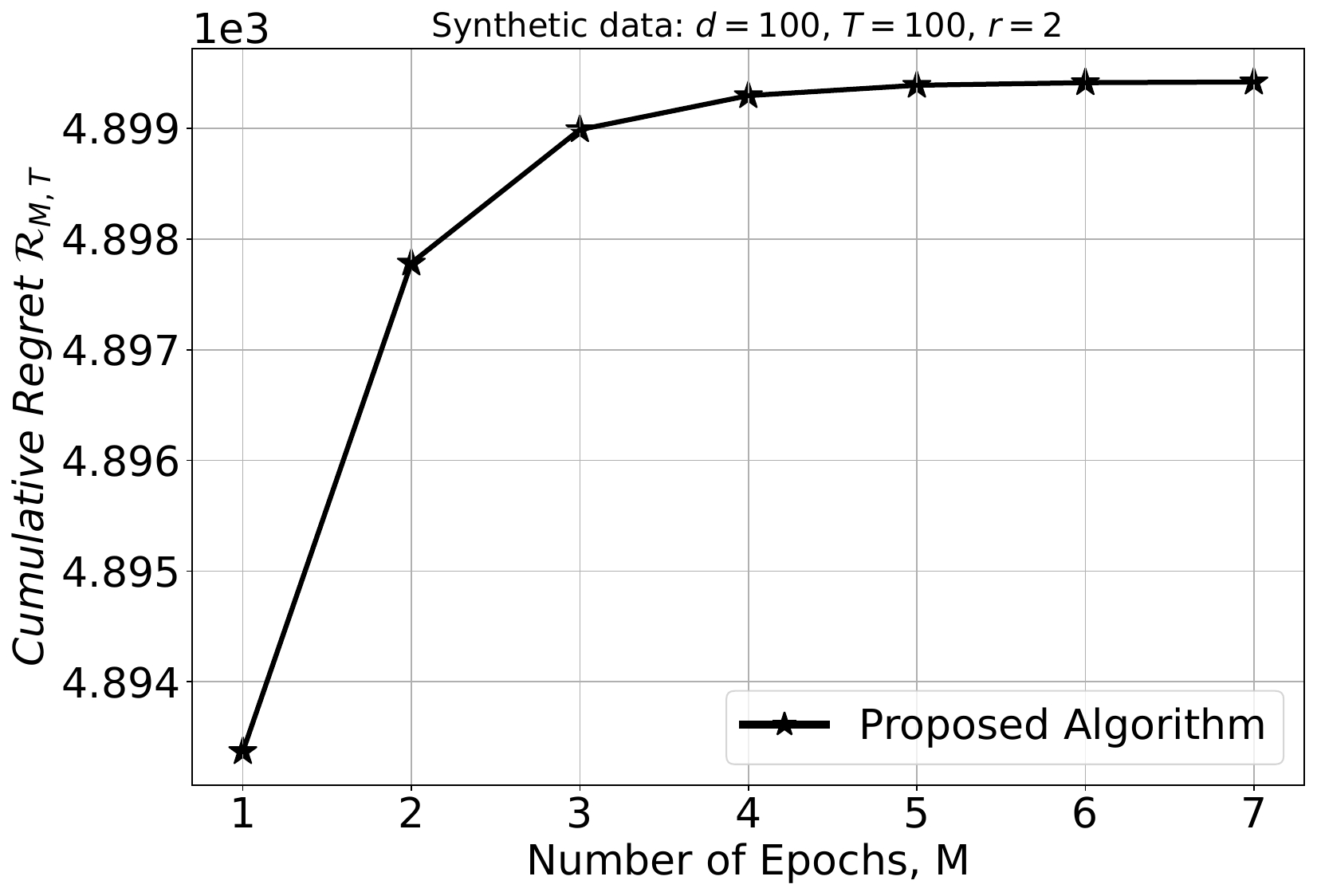}
\caption{Cumulative regret vs. number of epochs.}
\label{proposed_algorithm}
\end{figure}
We start by evaluating the proposed algorithm on synthetic data. 
Figure~\ref{proposed_algorithm} shows a sublinear increase in cumulative regret as the number of epochs increases (each epoch is $50$ rounds), indicating that our algorithm gradually enhances the efficiency of decision making over epochs. In the following subsections, we evaluate the proposed algorithm against various benchmark algorithms. To evaluate performance, we vary $d$, $r$, $T$, and $\alpha$. 

\subsection{Results and Discussions}
\noindent{\bf Estimation error:} 
Figures~\ref{fig:2}, \ref{fig:5}, and~\ref{fig:8} illustrate the estimation error Err-$\Theta =\frac{\|\widehat{\Theta} - \Thetas\|_F}{\|\Thetas\|_F}$ for synthetic data by varying dimensions $d$ and the number of tasks $T$. We compare against the three benchmark algorithms.
 The $y$-axis of the plots is the percentage error in estimating $\widehat{\Theta}$  in the final epoch compared to the true parameter $\Thetas$. The plots clearly show that the performance of our proposed algorithm substantially exceeds that of the benchmark algorithms. As the number of tasks $T$ increases,  learning improves, validating the benefits of MTRL.

\noindent{\bf Cumulative regret and number of violations:} 
We compared the cumulative regret and constraint satisfaction of our proposed algorithm with three benchmark algorithms. Figures~\ref{fig:1},~\ref{fig:4}, and~\ref{fig:7} present the cumulative regret for synthetic data when dimension $d$, rank $r$, and number of tasks $T$ are varied. Figures~\ref{fig:2},~\ref{fig:5}, and~\ref{fig:8} present the number of constraint violations across the algorithms for different values of $d$, $r$, and $T$. 
Both our proposed algorithm and the TS algorithm with safe set estimation from \cite{moradipari2020stage} perform safety-focused exploration, while the trace norm-based and MoM algorithms do not account for the constraints, allowing them to explore without restrictions. This allows these two benchmark algorithms to achieve a lower cumulative regret. However, this leads to a large number of constraint violations. Both our proposed algorithm and the TS algorithm with safe set estimation consistently meet the performance constraints, in contrast to trace-norm-based and MoM approaches, which demonstrate numerous violations. The TS algorithm, however, has a larger cumulative regret. 
As the number of tasks $T$ increases, the cumulative regret of our proposed algorithm grows sublinearly. In contrast, the method from \cite{moradipari2020stage}, which treats each task independently, leads to linear regret. Similarly, as the rank $r$ increases, the cumulative regret of our proposed algorithm also increases sublinearly, as expected. This validates our proposed algorithm. 
Figures~\ref{fig:12} and~\ref{fig:14} present cumulative regret plots by varying $d$ and $T$ for the Movielens dataset. Figures~\ref{fig:13} and~\ref{fig:15} present the number of violations for the Movielens dataset. These plots clearly illustrate that our proposed algorithm consistently outperforms the benchmark algorithms without any violations. 
Furthermore, Figures~\ref{fig:10} and~\ref{fig:11} show the cumulative regret of our proposed algorithm for synthetic and Movielens datasets, respectively, as the constraint parameter $\alpha$ is altered. 
All experiments consistently demonstrate the effectiveness of our proposed approach.

\section{Conclusion}\label{sec:conc}
This paper presented a novel Safe-AltGDmin algorithm to solve the multi-task representation learning problem in {\cblue conservative} linear bandits. In our proposed algorithm, we effectively addressed CMTRL with a low-rank common representation, guaranteeing that all chosen actions meet the performance constraints while concurrently providing an accurate estimation of the shared representation. We provided theoretical guarantees for Safe-AltGDmin, including the cumulative regret bound and compliance with performance constraints. We conducted experiments using both synthetic data and real-world Movielens data to assess the effectiveness of our method. The experimental results showed our algorithm's effectiveness regarding regret and its ability to consistently meet the constraints compared to the benchmark algorithms. 

\vspace{-3 mm}
\bibliographystyle{myIEEEtran}
\bibliography{Bandits}

\appendices
\section{Preliminaries}\label{app:prelim}
\begin{prop}[Theorem~2.8.1, \cite{vershynin2018high}] \label{proposition1}
Let $c > 0$ is an absolute constant, and let $X_1, \cdots, X_N$ be independent, mean zero, sub-exponential random variables. Then, for every $g \geqslant 0$, we have
$$
\mathbb{P} \Bigl\{ \big|\sum_{i=1}^N X_i\big| \geqslant g \Bigl\} \leqslant 2 \exp \Big[- c \min \big(\frac{g^2}{\sum_{i=1}^N \norm{X_i}_{\psi_1}^2}, \frac{g}{\max_i \norm{X_i}_{\psi_1}}\big)\Big].
$$
\end{prop}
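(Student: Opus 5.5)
The statement is Bernstein's inequality for sums of independent sub-exponential random variables. I would prove it with the Chernoff (exponential moment) method, using the standard equivalent characterization of the $\psi_1$ norm through a local bound on the moment generating function.

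\emph{Step 1: MGF bound.} Write $K_i:=\norm{X_i}_{\psi_1}$. For a mean-zero sub-exponential variable there are absolute constants $C_0, c_0>0$ such that
\begin{equation*}
\bE \exp(\lambda X_i) \leqslant \exp(C_0 \lambda^2 K_i^2) \quad \text{for all } |\lambda| \leqslant \frac{c_0}{K_i}.
\end{equation*}
To get this, expand $e^{\lambda X_i}=1+\lambda X_i+\sum_{p\geqslant 2}\lambda^p X_i^p/p!$. The linear term vanishes because $\bE X_i=0$. The definition of the $\psi_1$ norm gives the moment bounds $\bE|X_i|^p\leqslant (C p K_i)^p$, and with $p!\geqslant (p/e)^p$ the tail of the series is a geometric series in $|\lambda| K_i$. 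That series is at most $C'\lambda^2K_i^2$ when $|\lambda|K_i$ is small, and $1+u\leqslant e^u$ finishes the step.

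\emph{Step 2: Chernoff bound.} For $0<\lambda\leqslant c_0/\max_i K_i$, Markov's inequality and independence give
\begin{equation*}
\mathbb{P}\Big\{\sum_i X_i\geqslant g\Big\}\leqslant e^{-\lambda g}\prod_i \bE e^{\lambda X_i}\leqslant \exp\Big(-\lambda g + C_0\lambda^2\sum_i K_i^2\Big).
\end{equation*}
Set $\sigma^2:=\sum_i K_i^2$. Without the constraint, the exponent is minimized at $\lambda^\star=g/(2C_0\sigma^2)$, so I take $\lambda=\min\{g/(2C_0\sigma^2),\, c_0/\max_i K_i\}$.
\begin{itemize}
\item If the first term is the smaller one, the exponent equals $-g^2/(4C_0\sigma^2)$.
\item Otherwise $\lambda=c_0/\max_iK_i$ and $C_0\lambda^2\sigma^2\leqslant \lambda g/2$, so the exponent is at most $-\lambda g/2=-c_0 g/(2\max_iK_i)$.
\end{itemize}
In both cases the bound is $\exp[-c\min(g^2/\sigma^2,\, g/\max_iK_i)]$ for an absolute constant $c$.

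\emph{Step 3: two-sided bound.} Apply the same argument to $-X_i$, which are also independent, mean zero, and have the same $\psi_1$ norms. A union bound over the two tails gives the factor $2$.

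The only step that needs real care is Step 1. The moment-to-MGF conversion requires tracking absolute constants, and it is important to restrict $\lambda$ to a neighborhood of zero of width $\asymp 1/K_i$, since the MGF of a sub-exponential variable can be infinite for large $\lambda$. This restriction is exactly what produces the linear regime $g/\max_i K_i$ in the final bound. Everything after Step 1 is a routine optimization over $\lambda$.
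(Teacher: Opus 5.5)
Your proposal is correct and follows essentially the same route as the proof behind this statement. The paper gives no argument of its own and imports Theorem~2.8.1 of Vershynin, whose proof is exactly your scheme: the local MGF bound $\bE \exp(\lambda X_i) \leqslant \exp(C\lambda^2\norm{X_i}_{\psi_1}^2)$ for $|\lambda| \leqslant c/\norm{X_i}_{\psi_1}$, the Chernoff bound with $\lambda=\min\{g/(2C\sigma^2),\, c/\max_i\norm{X_i}_{\psi_1}\}$, and a union bound over the two tails.
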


\begin{prop}[Chernoff bound for Gaussian] \label{proposition2}
Let $X \sim \N(\mu_x, \sigma_x^2)$, then for all $g > 0$, 
$
\mathbb{P} \Bigl\{X - \mu_x \geqslant g \Bigl\} \leqslant \exp(-\frac{g^2}{2 \sigma_x^2}). 
$
\end{prop}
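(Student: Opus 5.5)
We prove the bound by the standard exponential-moment (Chernoff) method. Set $Z := X - \mu_x$, so $Z \sim \N(0, \sigma_x^2)$. For any $\lambda > 0$ the map $z \mapsto e^{\lambda z}$ is increasing and positive, so Markov's inequality applied to the nonnegative variable $e^{\lambda Z}$ gives
\begin{equation*}
\mathbb{P}\{Z \geqslant g\} = \mathbb{P}\{e^{\lambda Z} \geqslant e^{\lambda g}\} \leqslant e^{-\lambda g}\, \bE[e^{\lambda Z}].
\end{equation*}

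Next we compute the moment generating function of a centered Gaussian by completing the square in the exponent:
\begin{equation*}
\bE[e^{\lambda Z}] = \int_{\bR} \frac{1}{\sqrt{2\pi}\sigma_x} e^{\lambda z - \frac{z^2}{2\sigma_x^2}}\, dz = e^{\frac{\lambda^2 \sigma_x^2}{2}} \int_{\bR} \frac{1}{\sqrt{2\pi}\sigma_x} e^{-\frac{(z - \lambda\sigma_x^2)^2}{2\sigma_x^2}}\, dz = e^{\frac{\lambda^2\sigma_x^2}{2}}.
\end{equation*}
The last integral equals $1$ because it is the density of $\N(\lambda\sigma_x^2, \sigma_x^2)$. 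Hence, for every $\lambda > 0$,
\begin{equation*}
\mathbb{P}\{X - \mu_x \geqslant g\} \leqslant \exp\Big(-\lambda g + \frac{\lambda^2 \sigma_x^2}{2}\Big).
\end{equation*}

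Finally, we minimize the exponent over $\lambda > 0$. It is a convex quadratic in $\lambda$ whose minimizer is $\lambda^\star = g/\sigma_x^2$, which is positive since $g > 0$. Substituting $\lambda^\star$ gives the exponent $-\frac{g^2}{\sigma_x^2} + \frac{g^2}{2\sigma_x^2} = -\frac{g^2}{2\sigma_x^2}$, which is the claimed bound. The argument has no real obstacle; the only points needing care are that Markov's inequality is applied to a nonnegative variable and that the optimal $\lambda^\star$ is admissible, i.e.\ positive, which is exactly where the hypothesis $g > 0$ is used.
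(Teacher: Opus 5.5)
Your proof is correct and complete: Markov's inequality applied to $e^{\lambda Z}$, the Gaussian moment generating function $e^{\lambda^2\sigma_x^2/2}$, and the choice $\lambda^\star = g/\sigma_x^2 > 0$ give exactly the stated bound. The paper gives no proof of this proposition and simply quotes it as a standard preliminary, so there is nothing to compare against; your argument is the standard Chernoff derivation that the name refers to.
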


\begin{prop}[Theorem~2.2, \cite{singh2024noisy}] \label{p6}
Pick a $\delta_0 < 0.1$. Define the noise-to-signal ratio as $\NSR := \frac{T \sigma_\eta^2}{\sigma_{\min}^{\star^2}}$. If $\pG_1 T > C \mu^2 \kappa^2 (d r \frac{\kappa^2}{\delta_0^2} + \frac{d}{\delta_0^2} \NSR)$, then with probability at least $1 - \exp(- c (d + T))$, 
\begin{align*}
\SE(B_0, B^\star) \leqslant \delta_0. 
\end{align*} 
\end{prop}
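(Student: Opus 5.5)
The plan follows the standard truncated spectral-initialization argument of \cite{lrpr_gdmin, singh2024noisy}. It has three steps: reduce $\SE(B_0,B^\star)$ to an operator-norm perturbation bound through Wedin's $\sin\Theta$ theorem, identify the expectation of the truncated matrix $\widehat{\Theta}_0$, and control its deviation from that expectation by concentration plus an $\epsilon$-net argument. Throughout I treat $x_\nt$ as i.i.d.\ standard Gaussian, as in Assumption~\ref{assume:iid}. For the conservative features of epoch~1 the same computation applies after absorbing the mean term into the noise, as the paper indicates after Assumption~\ref{assume:iid}.

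\emph{Step 1: the data-dependent threshold.} Each $y_\nt \sim \N(0,\|\thetats\|^2+\sigma_\eta^2)$, so $y_\nt^2$ is sub-exponential. By Proposition~\ref{proposition1}, with probability at least $1-\exp(-c(d+T))$ we have
\[
\tfrac{1}{\pG_1 T}\textstyle\sum_{n,t} y_\nt^2 \in [0.9,1.1]\,\big(\|\Thetas\|_F^2/T+\sigma_\eta^2\big).
\]
Hence $\alpha$ lies in a deterministic interval $[\alpha_-,\alpha_+]$ around $\tilde C(\|\Thetas\|_F^2/T+\sigma_\eta^2)$. The estimator depends on $\alpha$ monotonically through the indicator. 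I will therefore replace $\alpha$ by the two endpoints $\alpha_\pm$ and sandwich the resulting matrices. This removes the statistical dependence between $\alpha$ and the summands, which is the standard device in \cite{lrpr_gdmin}.

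\emph{Step 2: the expectation.} For a fixed threshold $\bar\alpha$, decompose $x_\nt$ along $\thetats/\|\thetats\|$ and its orthogonal complement. The orthogonal part is independent of $y_\nt$ and has mean zero. This gives
\[
\bE\big[x_\nt y_\nt \indic_{\{y_\nt^2\le\bar\alpha\}}\big]=\beta_t\,\thetats, \qquad
\beta_t=\bE\big[\xi^2\indic_{\{\xi^2 \le \bar\alpha/(\|\thetats\|^2+\sigma_\eta^2)\}}\big]\cdot\frac{\|\thetats\|^2}{\|\thetats\|^2+\sigma_\eta^2}\cdot\frac{1}{\|\thetats\|^2},
\]
up to the obvious normalization, with $\xi\sim\N(0,1)$. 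Thus $\bE[\widehat{\Theta}_0]=\Thetas D$ for a diagonal $D$. The incoherence bound $\|\thetats\|^2\le\mu^2\kappa^2\|\Thetas\|_F^2/T$ together with $\tilde C=9\kappa^2\mu^2$ makes the truncation level at least $9$ standard deviations. Hence $\beta_t\ge 0.9$ uniformly in $t$, which gives $\sigma_r(\Thetas D)\ge 0.9\,\sigmin$. Since $\Thetas D = B^\star(\Sigma V^\star D)$, its column span is exactly that of $B^\star$.

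\emph{Step 3: concentration (the main obstacle).} I will bound $\|\widehat{\Theta}_0-\Thetas D\| = \max_{u,v} u^\top(\widehat{\Theta}_0-\Thetas D)v$ over unit $u\in\bR^d$, $v\in\bR^T$ using $1/4$-nets, whose cardinality is at most $9^{d+T}$. For fixed $u,v$, the sum
\[
\tfrac{1}{\pG_1}\textstyle\sum_{n,t} v_t\,(u^\top x_\nt)\,y_\nt\indic_{\{y_\nt^2\le\bar\alpha\}}
\]
is a sum of independent sub-Gaussian terms, each with $\psi_2$-norm at most $\sqrt{\bar\alpha}\,|v_t|/\pG_1$. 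Bernstein's inequality (Proposition~\ref{proposition1}) and a union bound over the net then give, with probability at least $1-\exp(-c(d+T))$,
\[
\|\widehat{\Theta}_0-\Thetas D\|\le C\sqrt{\bar\alpha (d+T)/\pG_1}.
\]
Using $\bar\alpha\lesssim\mu^2\kappa^2(r\sigma_{\max}^{\star2}+T\sigma_\eta^2)/T$, this is at most $0.9\,\delta_0\sigmin/\sqrt{2r}$ exactly under the hypothesis $\pG_1T> C\mu^2\kappa^2(dr\kappa^2/\delta_0^2+d\,\NSR/\delta_0^2)$, up to constants. The delicate points, which I expect to dominate the effort, are two. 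First, the $T$ in $d+T$ must be absorbed, either by using a Frobenius or column-wise bound or by conditioning as in \cite{singh2024noisy}. Second, the sandwich over $\alpha_\pm$ must be carried out without losing the bias estimate of Step~2.

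\emph{Step 4: conclusion.} Wedin's theorem gives
\[
\SE(B_0,B^\star)\le \frac{\sqrt{2r}\,\|\widehat{\Theta}_0-\Thetas D\|}{\sigma_r(\Thetas D)-\|\widehat{\Theta}_0-\Thetas D\|}\le\delta_0,
\]
valid whenever $\delta_0<0.1$. A union bound over Steps~1 and~3 yields the stated probability.
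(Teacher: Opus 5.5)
\textbf{There is a genuine gap: your operator-norm/Wedin route cannot reach the stated sample complexity.} The paper gives no proof of this proposition; it imports it from Theorem~2.2 of \cite{singh2024noisy}. So your reconstruction has to supply that result's key mechanism, and it does not.

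First, the arithmetic in Step~3 is off. You require $C\sqrt{\bar\alpha(d+T)/\pG_1}\leqslant 0.9\,\delta_0\sigma_{\min}^{\star}/\sqrt{2r}$ with $\bar\alpha\approx\mu^2\kappa^2(r\sigma_{\max}^{\star 2}+T\sigma_\eta^2)/T$. This amounts to $\pG_1T\gtrsim\mu^2\kappa^2\,r(d+T)(r\kappa^2+\NSR)/\delta_0^2$. That exceeds the hypothesis by a factor $r(d+T)/d$, so it is not ``exactly'' the stated condition. The extra $r$ comes from your $\sqrt{2r}$ conversion to the Frobenius $\SE$.

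Second, and more fundamentally, the $T$ cannot be absorbed by any sharper concentration bound, because it is present in $\|\widehat{\Theta}_0-\Thetas D\|$ itself.
\begin{itemize}
\item Fix a unit $u\perp\mathrm{span}(B^\star)$ and a fixed threshold. Then $u^\top x_{n,t}$ is independent of $y_{n,t}$.
\item Hence $\|u^\top(\widehat{\Theta}_0-\Thetas D)\|^2=\sum_t(u^\top\widehat{\theta}_{0,t})^2$ is a sum of $T$ independent terms. Its mean is of order $(\|\Thetas\|_F^2+T\sigma_\eta^2)/\pG_1\geqslant\sigma_{\min}^{\star 2}(r+\NSR)/\pG_1$, and it concentrates.
\item So w.h.p.\ $\|\widehat{\Theta}_0-\Thetas D\|\gtrsim\sigma_{\min}^{\star}\sqrt{(r+\NSR)/\pG_1}$. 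Since $\sigma_r(\Thetas D)\leqslant\sigma_{\min}^{\star}$, your Step~4 bound can be $\leqslant\delta_0$ only if $\pG_1\gtrsim r(r+\NSR)/\delta_0^2$.
\end{itemize}
This is a per-task condition. The hypothesis constrains only $\pG_1T$, so it does not imply it when $T\gg d$ (e.g.\ $\pG_1=1$ with $T$ huge). Neither fix you mention helps: a Frobenius bound is larger still, and column-wise bounds need exactly this per-task sample size.

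The missing idea is to exploit the structure of this $T$-driven error component rather than bound it in operator norm. For instance, $\bE[\widehat{\Theta}_0\widehat{\Theta}_0^\top]$ is a multiple of $I_d$ plus a PSD matrix with range $\mathrm{span}(B^\star)$. So a Davis--Kahan argument on the $d\times d$ Gram matrix needs only a net over $\bR^d$, and the isotropic shift does not move the top-$r$ eigenspace. As written, your argument yields at best $\pG_1T\gtrsim\mu^2\kappa^2 r(d+T)(r\kappa^2+\NSR)/\delta_0^2$, not the proposition.

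Two further steps also fail as stated.
\begin{itemize}
\item \emph{The $\alpha_\pm$ sandwich in Step~1.} A monotonicity sandwich works when the summands are PSD, as with $y\,aa^\top$ in phase retrieval. Here the summands $x_{n,t}y_{n,t}\indic_{\{y_{n,t}^2\leqslant\alpha\}}e_t^\top$ have no sign structure, so $\widehat{\Theta}_0(\alpha)$ is not ordered between $\widehat{\Theta}_0(\alpha_-)$ and $\widehat{\Theta}_0(\alpha_+)$. You need either to compute $\alpha$ on an independent batch and condition on it, or to bound $\sup_{\bar\alpha\in[\alpha_-,\alpha_+]}\|\widehat{\Theta}_0(\bar\alpha)-\widehat{\Theta}_0(\alpha_-)\|$ directly.
\item \emph{Absorbing the mean of the conservative features into the noise.} This breaks $\bE[\widehat{\Theta}_0]=\Thetas D$. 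If $x_{b_{n,t}}$ is deterministic, $\bE[x_{n,t}y_{n,t}\indic_{\{y_{n,t}^2\leqslant\alpha\}}]$ contains $(1-\rho)\bE[y_{n,t}\indic_{\{y_{n,t}^2\leqslant\alpha\}}]\,x_{b_{n,t}}$, which lies outside $\mathrm{span}(B^\star)$. The proposition should be read, as in its source and in the paper's appendix, for zero-mean i.i.d.\ Gaussian design. There the epoch-1 features only rescale the covariance to $((1-\rho)^2+\rho^2)I_d$.
\end{itemize}
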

\begin{defi}(Incoherence)\label{define:incoherence}
A rank-$r$ matrix $M\in\bR^{d_1\times d_2}$ is defined as $\mu$-column-wise incoherent if for every column $m_i\in\bR^{d_1}$ of $M$,  $\max_{i\in[d_2]}\|m_i\|_2\leqslant\mu\sqrt{\frac{d_1}{d_2}}\|M\|_2$, where $\mu\geqslant1$ is a constant that  remains invariant with respect to $d_1$, $d_2$, $r$.
\end{defi}
Under Assumption~\ref{assume:incoherence}, we derive that
\begin{align*}
\|W^\star\|_F=\sqrt{\sum_{t=1}^T\|w_t^\star\|_2^2}\geqslant\sqrt{T}l.
\end{align*}
Furthermore, the Frobenius norm of $W^\star$ satisfies
\begin{align*}
\|W^\star\|_F=\sqrt{\sum_{i=1}^r\sigma_i^2(W^\star)}\leqslant\sqrt{r}\sigma_{\max}^\star.
\end{align*}
By defining $\mu=\frac{u}{l}\geqslant1$, we conclude that
\begin{align*}
\|w_t^\star\|_2\leqslant u=\frac{u}{l}\frac{\sqrt{T}l}{\sqrt{T}}\leqslant\frac{u}{l}\sqrt{\frac{r}{T}}\sigma_{\max}^\star=\mu\sqrt{\frac{r}{T}}\sigma_{\max}^\star.
\end{align*}
According to Definition~\ref{define:incoherence}, $W^\star$ is $\mu$-column-wise incoherent.

Define $P:=I-B^\star{B^{\star}}^\top$, $G:=B^\top \Thetas$, $g_t := B^\top \thetats$ for $t \in [T]$,
\begin{align*}
\GradB_1&=\sum_{t=1}^T \sum_{n=0}^{\pG_1} x_{b_\nt} x_{b_\nt}^\top (\theta_t - \thetats) w_t^\top \\
\GradB_2&=\sum_{t=1}^T \sum_{n=0}^{\pG_1} x_{b_\nt} \zeta_\nt^\top (\theta_t - \thetats) w_t^\top \\
\GradB_3&=\sum_{t=1}^T \sum_{n=0}^{\pG_1} \zeta_\nt x_{b_\nt}^\top (\theta_t - \thetats) w_t^\top \\
\GradB_4&=\sum_{t=1}^T \sum_{n=0}^{\pG_1} \zeta_\nt \zeta_\nt^\top (\theta_t - \thetats) w_t^\top
\end{align*}
\begin{align*}
&\GradB=\nabla_B f(B, W) = \sum_{t=1}^T {\phimt{t}{1}}^\top (\phimt{t}{1} B w_t - \ymt{t}{1}) w_t^\top\\
&\scalemath{0.95}{=\sum_{t=1}^T\sum_{n=0}^{\pG_1} (((1 - \rho) x_{b_\nt} + \rho \zeta_\nt)^\top B w_t - y_\nt) ((1 - \rho) x_{b_\nt} + \rho \zeta_\nt) w_t^\top}\\
&- (1 - \rho) \eta_\nt x_{b_\nt} w_t^\top - \rho \eta_\nt \zeta_\nt w_t^\top \\
&=(1 - \rho)^2 \GradB_1 + \rho (1 - \rho) (\GradB_2 + \GradB_3) + \rho^2 \GradB_4 \\
&-\sum_{t=1}^T \sum_{n=0}^{\pG_1} (1 - \rho) \eta_\nt x_{b_\nt} w_t^\top - \sum_{t=1}^T \sum_{n=0}^{\pG_1} \rho \eta_\nt \zeta_\nt w_t^\top.
\end{align*}
\section{Supporting Results and Proofs}
\begin{prop} \label{p1}
Let $M = B^\top {\phimt{t}{1}}^\top {\phimt{t}{1}} B$. For all $t \in [T]$, the following statements hold: 
\begin{itemize}
    \item With probability at least $1 - 4 T \exp(r - c \pG_1)$, 
    $$
    \|M^{-1}\| \leqslant \frac{2}{(1 - 2 \rho)^2 \pG_1}
    $$
    \item With probability at least $1 - 8 T \exp(r - c \epsilon_2^2 \pG_1)$, 
    $$
    \|M^{-1} B^\top {\phimt{t}{1}}^\top {\phimt{t}{1}} (I - B B^\top) \thetats\| \leqslant \frac{\|(I - B B^\top) \thetats\|}{(1 - 2 \rho)^2}.
    $$
\end{itemize}
\end{prop}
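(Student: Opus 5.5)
The plan is to treat the rows of $\phimt{t}{1}B$ as $r$-dimensional random vectors and reduce both claims to a lower bound on $\sigma_{\min}(M)$ plus a concentration bound for a cross term.

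Each row of $\phimt{t}{1}$ is $x_\nt^\top=((1-\rho)x_\bnt+\rho\zeta_\nt)^\top$. Hence each row of $\phimt{t}{1}B$ is $u_\nt^\top:=((1-\rho)B^\top x_\bnt+\rho B^\top\zeta_\nt)^\top$. By rotation invariance, $B^\top\zeta_\nt\sim\N(0,I_r)$. Following Assumption~\ref{assume:iid}, I will treat $x_\bnt$ as standard Gaussian as well, independent of $\zeta_\nt$, so $B^\top x_\bnt\sim\N(0,I_r)$. Then
\[
M=\sum_{n}u_\nt u_\nt^\top,
\]
and I will bound $\sigma_{\min}(M)$ from below. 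I will not exploit independence to get variance $(1-\rho)^2+\rho^2$. Instead I will use a cruder, dependence-free bound, which is what produces the $(1-2\rho)^2$ factor and makes the statement robust to correlation between baseline and exploration.

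For a fixed unit $z\in\bR^r$, write $a_n=z^\top B^\top x_\bnt$ and $b_n=z^\top B^\top\zeta_\nt$. Then
\[
z^\top Mz=\sum_n\big((1-\rho)a_n+\rho b_n\big)^2 \geqslant \sum_n\big((1-\rho)|a_n|-\rho|b_n|\big)^2 .
\]
This is a triangle-type lower bound. Expanding it and using sub-exponential concentration (Proposition~\ref{proposition1}) on $\sum_n a_n^2$, $\sum_n b_n^2$ and $\sum_n a_nb_n$ gives, with probability $1-\exp(-c\pG_1)$,
\[
z^\top Mz\geqslant\big((1-\rho)-\rho\big)^2\pG_1/2=(1-2\rho)^2\pG_1/2 .
\]
An $\epsilon$-net argument over the unit sphere in $\bR^r$, of size $9^r$, upgrades this to $\sigma_{\min}(M)\geqslant(1-2\rho)^2\pG_1/2$. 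A union bound over $t\in[T]$, and over the two to four concentration events per $z$, gives probability $1-4T\exp(r-c\pG_1)$. This proves the first bullet.

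For the second bullet, set $v=(I-BB^\top)\thetats$, so $B^\top v=0$. The quantity to control is
\[
B^\top{\phimt{t}{1}}^\top\phimt{t}{1}v=\sum_n u_\nt\,(x_\nt^\top v).
\]
The key observation is that $x_\nt^\top v$ depends only on the component of $x_\nt$ orthogonal to the span of $B$, while $u_\nt$ depends only on the component inside it. For Gaussian vectors these two components are independent and mean zero, so the sum has mean zero. For fixed unit $z$, each summand $z^\top u_\nt\cdot x_\nt^\top v$ is a product of two Gaussians, hence sub-exponential with $\psi_1$-norm $\lesssim\|v\|$. Proposition~\ref{proposition1} then gives
\[
\Big|\sum_n z^\top u_\nt\, x_\nt^\top v\Big|\leqslant\epsilon_2\pG_1\|v\|
\]
with probability $1-2\exp(-c\epsilon_2^2\pG_1)$. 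Another $\epsilon$-net over $z$ and a union bound over $t$ (with separate terms for the $a$ and $b$ parts) give the stated $8T\exp(r-c\epsilon_2^2\pG_1)$. Combining with the first bullet,
\[
\|M^{-1}B^\top{\phimt{t}{1}}^\top\phimt{t}{1}v\|\leqslant\frac{2}{(1-2\rho)^2\pG_1}\cdot\epsilon_2\pG_1\|v\| .
\]
Choosing $\epsilon_2=1/2$ makes this at most $\|v\|/(1-2\rho)^2$, as claimed.

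The main obstacle is the lower bound on $\sigma_{\min}(M)$ when baseline and exploration components may cancel. The cross-term concentration has to be tight enough that $(1-\rho)^2\sum a_n^2+\rho^2\sum b_n^2-2\rho(1-\rho)\sum|a_nb_n|$ stays above $(1-2\rho)^2\pG_1/2$ uniformly over the net. I would try to handle it by bounding $\sum|a_nb_n|\leqslant\sqrt{\sum a_n^2\sum b_n^2}\leqslant(1+\epsilon)\pG_1$ via Cauchy--Schwarz and the concentration of each sum of squares. This yields $\big((1-\rho)-\rho\big)^2\pG_1$ up to $\epsilon$ losses. The restriction $\rho\notin(1/4,3/4)$ is what keeps the remaining constant strictly positive.
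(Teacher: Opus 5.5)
The gap is in the first bullet, for $\rho$ near $\frac12$.

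\paragraph{Why the first bullet fails near $\rho=\frac12$.} Your dependence-free bound (reverse triangle inequality, then Cauchy--Schwarz on $\sum_n|a_nb_n|$) gives $z^\top Mz\geqslant\big((1-\rho)\sqrt{A}-\rho\sqrt{B}\big)^2$, where $A=\sum_na_n^2$ and $B=\sum_nb_n^2$. On the event $A,B\in[(1-\epsilon)\pG_1,(1+\epsilon)\pG_1]$, the unfavourable endpoints give about $\big(|1-2\rho|-\epsilon/2\big)^2\pG_1$. The loss is additive in $\epsilon$, so reaching $\frac12(1-2\rho)^2\pG_1$ forces $\epsilon\lesssim|1-2\rho|$. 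Your failure probability is then of order $T\exp(Cr-c(1-2\rho)^2\pG_1)$, not $4T\exp(r-c\pG_1)$ with an absolute $c$.

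You close this by invoking $\rho\notin(1/4,3/4)$, but Proposition~\ref{p1} has no such hypothesis. The paper proves it for every $\rho$; the bound only degenerates at $\rho=\frac12$. The restriction $\rho\in[0,0.25]\cup[0.75,1]$ is imposed only downstream, in Lemma~\ref{l2} and Theorem~\ref{T5} (see the remark after Theorem~\ref{l9}). So as written you prove the statement only on that restricted range. There your argument is fine, and it even survives an arbitrary coupling of $x_\bnt$ and $\zeta_\nt$.

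Nothing dependence-free can give an absolute constant near $\frac12$. A maximal coupling of $B^\top\zeta_\nt$ with $-\frac{1-\rho}{\rho}B^\top x_\bnt$ makes every $u_\nt$ vanish simultaneously with probability at least $e^{-O(\sqrt r\,|1-2\rho|\,\pG_1)}$.

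A smaller point: your second paragraph appeals to concentration of $\sum_na_nb_n$, but the expansion you wrote contains $\sum_n|a_nb_n|$, which is not mean zero. It is the Cauchy--Schwarz step in your last paragraph that does the work. That step can be applied to $\sum_na_nb_n$ directly, without the reverse triangle inequality.

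\paragraph{The fix, and comparison with the paper.} Use the independence of $x_\bnt$ and $\zeta_\nt$ that you already assumed, as the paper does. Expand $z^\top Mz$ into its four terms and use that $\sum_na_nb_n$ is a mean-zero sub-exponential sum. Then apply the same tolerance $\epsilon_2=\frac12$ to every term: $\sum_na_n^2,\sum_nb_n^2\geqslant\frac12\pG_1$ and $\sum_na_nb_n\geqslant-\frac12\pG_1$. This gives $z^\top Mz\geqslant\frac12\big[(1-\rho)^2-2\rho(1-\rho)+\rho^2\big]\pG_1=\frac12(1-2\rho)^2\pG_1$ for every $\rho$, with an absolute constant in the exponent. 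So the factor $(1-2\rho)^2$ in the paper is an artifact of this uniform crude tolerance on the cross terms, not of worst-case anticorrelation. More directly still, $u_\nt\sim\N\big(0,((1-\rho)^2+\rho^2)I_r\big)$ and $(1-\rho)^2+\rho^2-(1-2\rho)^2=2\rho(1-\rho)\geqslant0$.

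Your second bullet already relies on this independence: the in-span and orthogonal components of $x_\nt$ are independent only because $x_\nt$ is an isotropic Gaussian. With that understood, your second-bullet argument is correct and a bit cleaner than the paper's. The paper splits $\sum_nz^\top B^\top x_\nt x_\nt^\top(I-BB^\top)\thetats$ into the four baseline/exploration products. It bounds each by $\epsilon_2\pG_1\|(I-BB^\top)\thetats\|$ times its coefficient $(1-\rho)^2$, $\rho(1-\rho)$, $\rho(1-\rho)$, $\rho^2$, and sums to $\epsilon_2\pG_1\|(I-BB^\top)\thetats\|$. Both routes then finish identically with $\epsilon_2=\frac12$ and the first bullet.
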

\begin{proof}
Consider a fixed $z \in \pS_r$. We have $z^\top B^\top {\phimt{t}{1}}^\top {\phimt{t}{1}} B z$
\begin{align*}
&=\sum_{n=0}^{\pG_1} z^\top B^\top [(1 - \rho) x_{b_\nt} + \rho \zeta_\nt] [(1 - \rho) x_{b_\nt} + \rho \zeta_\nt]^\top B z
\end{align*}
Given that $\bE[(1 - \rho) z^\top B^\top x_{b_\nt}]=0$,
\begin{align*}
\bE[(1 - \rho)^2 z^\top B^\top x_{b_\nt} x_{b_\nt}^\top B z]&=(1 - \rho)^2 z^\top B^\top \bE[x_{b_\nt} x_{b_\nt}^\top] B z\\
&=(1 - \rho)^2 z^\top B^\top B z=(1 - \rho)^2,
\end{align*}
\vspace{-6mm}
\begin{align*}
\Var((1 - \rho) z^\top B^\top x_{b_\nt})&=\bE[(1 - \rho) z^\top B^\top x_{b_\nt}]^2\\
&=(1 - \rho)^2 z^\top B^\top \bE[x_{b_\nt} x_{b_\nt}^\top] B z=(1 - \rho)^2.
\end{align*}
The summands are presented as independent sub-exponential random variables with a bounded norm $K_n \leqslant C (1 - \rho)^2$. We apply the sub-exponential Bernstein inequality, as specified in Proposition~\ref{proposition1}, by setting $g = \epsilon_2 (1 - \rho)^2 \pG_1$. We have
\begin{align*}
\frac{g^2}{\sum_{n=0}^{\pG_1} K_n^2} &\geqslant \frac{\epsilon_2^2 (1 - \rho)^4 \pG_1^2}{\pG_1 C^2 (1 - \rho)^4} \geqslant c \epsilon_2^2 \pG_1 \\
\frac{g}{\max_n K_n} &\geqslant \frac{\epsilon_2 (1 - \rho)^2 \pG_1}{C (1 - \rho)^2} \geqslant c \epsilon_2 \pG_1
\end{align*}
For a fixed $z \in \pS_r$, with probability at least $1 - \exp(- c \epsilon_2^2 \pG_1)$, 
\begin{align*}
\sum_{n=0}^{\pG_1} (1 - \rho)^2 z^\top B^\top x_{b_\nt} x_{b_\nt}^\top B z \geqslant (1 - \epsilon_2) (1 - \rho)^2 \pG_1
\end{align*}
By extending our analysis over the entire space $\pS_r$ through an epsilon-net, the result adds a multiplicative factor of $\exp(r)$. Therefore, with probability at least $1 - \exp(r - c \epsilon_2^2 \pG_1)$,
$$
\min_{z \in \pS_r} \sum_{n=0}^{\pG_1} (1 - \rho)^2 z^\top B^\top x_{b_\nt} x_{b_\nt}^\top B z \geqslant (1 - \rho)^2 (1 - \epsilon_2) \pG_1
$$
Building on this idea, with the same probability, we derive
\begin{align*}
&\min_{z \in \pS_r} \sum_{n=0}^{\pG_1} \rho (1 - \rho) z^\top B^\top x_{b_\nt} \zeta_\nt^\top B z \geqslant - \rho (1 - \rho) \epsilon_2 \pG_1\\
&\min_{z \in \pS_r} \sum_{n=0}^{\pG_1} \rho (1 - \rho) z^\top B^\top \zeta_\nt x_{b_\nt}^\top B z \geqslant - \rho (1 - \rho) \epsilon_2 \pG_1\\
&\min_{z \in \pS_r} \sum_{n=0}^{\pG_1} \rho^2 z^\top B^\top \zeta_\nt \zeta_\nt^\top B z \geqslant \rho^2 (1 - \epsilon_2) \pG_1
\end{align*}
By using the union bound and setting $\epsilon_2 = 0.5$, with probability at least $1 - 4 T \exp(r - c \pG_1)$, for all $t \in [T]$, we conclude that
\begin{align*}
&\|M^{-1}\|=\|(B^\top {\phimt{t}{1}}^\top {\phimt{t}{1}} B)^{-1}\| \\
&=\frac{1}{\sigma_{\min}(B^\top {\phimt{t}{1}}^\top {\phimt{t}{1}} B)}=\frac{1}{\min_{z \in \pS_r} \sum_{n=0}^{\pG_1} z^T B^\top {\phimt{t}{1}}^\top {\phimt{t}{1}} B z}\\
&\scalemath{0.9}{\leqslant\frac{1}{(1 - \rho)^2 (1 - \epsilon_2) \pG_1 - \rho (1 - \rho) \epsilon_2 \pG_1 - \rho (1 - \rho) \epsilon_2 \pG_1 + \rho^2 (1 - \epsilon_2) \pG_1}}\\
&=\frac{2}{(1 - 2 \rho)^2 \pG_1}.
\end{align*}
This proves the first statement. Now consider a fixed $z \in \pS_r$,
\begin{align*}
&z^\top B^\top {\phimt{t}{1}}^\top {\phimt{t}{1}} (I - B B^\top) \thetats\\
&=\scalemath{0.95}{\sum_{n=0}^{\pG_1} z^\top B^\top [(1 - \rho) x_{b_\nt}+\rho \zeta_\nt][(1 - \rho) x_{b_\nt} + \rho \zeta_\nt]^\top (I - B B^\top) \thetats}
\end{align*}
Given that $\bE[(1 - \rho) x_{b_\nt}^\top (I - B B^\top) \thetats]=0$,
\begin{align*}
&\bE[(1 - \rho) z^\top B^\top x_{b_\nt}]=0,\quad\Var((1 - \rho) z^\top B^\top x_{b_\nt}) = (1 - \rho)^2,\\
&\bE[(1 - \rho)^2 z^\top B^\top x_{b_\nt} x_{b_\nt}^\top (I - B B^\top) \thetats]=0\\
&\Var((1 - \rho) x_{b_\nt}^\top (I - B B^\top) \thetats)\\
&=\bE[(1 - \rho)^2 {\thetats}^\top (I - B B^\top)^\top x_{b_\nt} x_{b_\nt}^\top (I - B B^\top) \thetats]\\
&\scalemath{0.95}{=(1 - \rho)^2 {\thetats}^\top (I - B B^\top)^\top (I - B B^\top) \thetats=(1 - \rho)^2 \|(I - B B^\top) \thetats\|^2.}
\end{align*}
As before, the summands are independent sub-exponential random variables with norm $K_n \leqslant C (1 - \rho)^2 \|(I - B B^\top) \thetats\|$. We use the sub-exponential Bernstein inequality stated in Proposition~\ref{proposition1}, set $g = \epsilon_2 (1 - \rho)^2 \|(I - B B^\top) \thetats\| \pG_1$. We have
\begin{align*}
\frac{g^2}{\sum_{n=0}^{\pG_1} K_n^2} &\geqslant \frac{\epsilon_2^2 (1 - \rho)^4 \|(I - B B^\top) \thetats\|^2 \pG_1^2}{\pG_1 C^2 (1 - \rho)^4 \|(I - B B^\top) \thetats\|^2} \geqslant c \epsilon_2^2 \pG_1, \\
\frac{g}{\max_n K_n} &\geqslant \frac{\epsilon_2 (1 - \rho)^2 \|(I - B B^\top) \thetats\| \pG_1}{C (1 - \rho)^2 \|(I - B B^\top) \thetats\|} \geqslant c \epsilon_2 \pG_1.
\end{align*}
For a fixed $z \in \pS_r$, with probability at least $1 - \exp(- c \epsilon_2^2 \pG_1)$, 
\begin{align*}
\scalemath{0.9}{\sum_{n=0}^{\pG_1} (1 - \rho)^2 z^\top B^\top x_{b_\nt} x_{b_\nt}^\top (I - B B^\top) \thetats\leqslant\epsilon_2 (1 - \rho)^2 \|(I - B B^\top) \thetats\| \pG_1.}
\end{align*}
Extending this to include all $\pS_r$ through an epsilon-net introduces a factor of $\exp(r)$. Consequently, with probability at least $1 - \exp(r - c \epsilon_2^2 \pG_1)$, we obtain
\begin{align*}
&\max_{z \in \pS_r} \sum_{n=0}^{\pG_1} (1 - \rho)^2 z^\top B^\top x_{b_\nt} x_{b_\nt}^\top (I - B B^\top) \thetats\\
&\leqslant\epsilon_2 (1 - \rho)^2 \|(I - B B^\top) \thetats\| \pG_1.
\end{align*}
Using this approach, with the same probability, we have
\begin{align*}
&\max_{z \in \pS_r} \sum_{n=0}^{\pG_1} \rho (1 - \rho) z^\top B^\top x_{b_\nt} \zeta_\nt^\top (I - B B^\top) \thetats\\
&\leqslant\epsilon_2 \rho (1 - \rho) \|(I - B B^\top) \thetats\| \pG_1,\\
&\max_{z \in \pS_r} \sum_{n=0}^{\pG_1} \rho (1 - \rho) z^\top B^\top \zeta_\nt x_{b_\nt}^\top (I - B B^\top) \thetats\\
&\leqslant \epsilon_2 \rho (1 - \rho) \|(I - B B^\top) \thetats\| \pG_1,
\end{align*}
\begin{align*}
&\max_{z \in \pS_r} \sum_{n=0}^{\pG_1} \rho^2 z^\top B^\top \zeta_\nt \zeta_\nt^\top (I - B B^\top) \thetats \leqslant \epsilon_2 \rho^2 \|(I - B B^\top) \thetats\| \pG_1.
\end{align*}
Using the union bound, with probability at least $1 - 4 \exp(r - c \pG_1)$, for $\epsilon_2=0.5$, we conclude that
\begin{align*}
&\|B^\top {\phimt{t}{1}}^\top {\phimt{t}{1}} (I - B B^\top) \thetats\|=\max_{z \in \pS_r} z^\top B^\top {\phimt{t}{1}}^\top {\phimt{t}{1}} (I - B B^\top) \thetats\\
&\leqslant\epsilon_2 (1 - \rho)^2 \|(I - B B^\top) \thetats\| \pG_1 + \epsilon_2 \rho (1 - \rho) \|(I - B B^\top) \thetats\| \pG_1\\
&+\epsilon_2 \rho (1 - \rho) \|(I - B B^\top) \thetats\| \pG_1 + \epsilon_2 \rho^2 \|(I - B B^\top) \thetats\| \pG_1\\
&=\frac{1}{2} \|(I - B B^\top) \thetats\| \pG_1.
\end{align*}
By combining these results and using the union bound, for all $t \in [T]$, with probability at least $1 - 8 T \exp(r - c \pG_1)$, we have
\begin{align*}
&\|M^{-1} B^\top {\phimt{t}{1}}^\top {\phimt{t}{1}} (I - B B^\top) \thetats\|\\
&\leqslant \|M^{-1}\| \|B^\top {\phimt{t}{1}}^\top {\phimt{t}{1}} (I - B B^\top) \thetats\|\\
&\leqslant\frac{2}{(1 - 2 \rho)^2 \pG_1} \frac{1}{2} \|(I - B B^\top) \thetats\| \pG_1=\frac{\|(I - B B^\top) \thetats\|}{(1 - 2 \rho)^2}.
\end{align*}
This proves the second statement and completes the proof.
\end{proof}

\begin{lemma} \label{l2}
If $\delt \leqslant \frac{0.02}{\mu \sqrt{r} \kappa^2}$, then with probability at least $1 - 8 T \exp(r - c \pG_1) - 2 T \exp(r - \frac{c \epsilon_3^2 \pG_1}{\sigma_\eta^2})$, the following hold:
\begin{enumerate}
    \item $\| w_t - g_t \| \leqslant \frac{2}{(1 - 2 \rho)^2} \mu \sqrt{\frac{r}{T}} \delt \sigma_{\max}^\star$
    \item $\| w_t \| \leqslant \left(1 + \frac{0.04}{(1 - 2 \rho)^2}\right) \mu \sqrt{\frac{r}{T}} \sigma_{\max}^\star$
    \item $\| W - G \|_F \leqslant \frac{2}{(1 - 2 \rho)^2} \mu \sqrt{r} \delt \sigma_{\max}^\star$
    \item $\| \theta_t - \thetats \| \leqslant \left(1 + \frac{2}{(1 - 2 \rho)^2}\right) \mu \delt \sqrt{\frac{r}{T}} \sigma_{\max}^\star$
    \item $\| \Theta_\ell - \Thetas\|_F \leqslant \left(1 + \frac{2}{(1 - 2 \rho)^2}\right) \mu \delt \sqrt{r} \sigma_{\max}^\star$
    \item $\sigma_{\min}(W) \geqslant \left(\sqrt{1 - 4 \times 10^{-4}} - \frac{0.04}{(1 - 2 \rho)^2}\right) \sigma_{\min}^\star$
    \item $\sigma_{\max}(W) \leqslant \left(1 + \frac{0.04}{(1 - 2 \rho)^2}\right) {\sigma_{\max}^\star}$
\end{enumerate}
\end{lemma}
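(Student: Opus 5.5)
We will follow the standard AltGDmin analysis of \cite{OurICML,lrpr_gdmin}, with Proposition~\ref{p1} supplying the conservative-exploration versions of the Gram-matrix bounds. Throughout, $B$ is the current iterate with $\SE(B,B^\star)\leqslant\delt$.

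\textbf{Decomposition of $w_t-g_t$.} Since $y_t=\phimt{t}{1}\thetats+\eta_t$ and $w_t=M^{-1}B^\top{\phimt{t}{1}}^\top y_t$, write $\thetats=BB^\top\thetats+(I-BB^\top)\thetats=Bg_t+(I-BB^\top)\thetats$. This gives
\begin{align*}
w_t-g_t=M^{-1}B^\top{\phimt{t}{1}}^\top\phimt{t}{1}(I-BB^\top)\thetats+M^{-1}B^\top{\phimt{t}{1}}^\top\eta_t .
\end{align*}
The first term is controlled by the second statement of Proposition~\ref{p1}, yielding at most $\frac{1}{(1-2\rho)^2}\|(I-BB^\top)\thetats\|$. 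We then bound
\[
\|(I-BB^\top)\thetats\|=\|(I-BB^\top)B^\star w_t^\star\|\leqslant\SE(B,B^\star)\|w_t^\star\|\leqslant\delt\mu\sqrt{r/T}\sigma_{\max}^\star,
\]
using the incoherence from Appendix~\ref{app:prelim}.

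\textbf{Noise term.} Conditioned on the features, $B^\top{\phimt{t}{1}}^\top\eta_t$ is Gaussian. A Bernstein/epsilon-net argument over $\pS_r$, with failure probability $2T\exp(r-c\epsilon_3^2\pG_1/\sigma_\eta^2)$, gives $\|B^\top{\phimt{t}{1}}^\top\eta_t\|\leqslant\epsilon_3\pG_1$. Combined with $\|M^{-1}\|\leqslant 2/((1-2\rho)^2\pG_1)$, this contributes $2\epsilon_3/(1-2\rho)^2$. We choose $\epsilon_3$ proportional to $\mu\sqrt{r/T}\,\delt\sigma_{\max}^\star$, which the sample-size condition involving $\NSR$ permits. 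Adding the two contributions gives item 1 with constant $2/(1-2\rho)^2$.

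\textbf{Consequences.} Item 3 follows by summing squares over $t$. Item 2 follows from $\|g_t\|\leqslant\|w_t^\star\|$, item 1, and $\delt\mu\sqrt r\leqslant 0.02$. Item 4 uses
\[
\theta_t-\thetats=B(w_t-g_t)-(I-BB^\top)\thetats
\]
and the triangle inequality, and item 5 follows by summing over $t$.

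\textbf{Singular values.} For item 7 we use $\sigma_{\max}(W)\leqslant\|G\|+\|W-G\|_F\leqslant\sigma_{\max}^\star+\frac{2}{(1-2\rho)^2}\mu\sqrt r\delt\sigma_{\max}^\star$, and the hypothesis on $\delt$ (with $\kappa\geqslant1$) yields the factor $0.04/(1-2\rho)^2$. For item 6, we write $\sigma_{\min}(G)=\sigma_{\min}(B^\top B^\star\Sigma V^\star)\geqslant\sigma_{\min}(B^\top B^\star)\sigma_{\min}^\star$, where $\sigma_{\min}(B^\top B^\star)\geqslant\sqrt{1-\SE^2}\geqslant\sqrt{1-4\times10^{-4}}$. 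We then subtract $\|W-G\|_F\leqslant\frac{0.04}{(1-2\rho)^2}\sigma_{\min}^\star$, using $\delt\leqslant 0.02/(\mu\sqrt r\kappa^2)$ to turn $\sigma_{\max}^\star$ into $\sigma_{\min}^\star$. Finally, union bound over the events of Proposition~\ref{p1} and the noise events.

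\textbf{Main obstacle.} The noise term is the delicate step. The probability expression in the statement carries no $\delt$ dependence, so the noise contribution must be absorbed cleanly into the constant $2/(1-2\rho)^2$ rather than added on top of it. I expect to need $\epsilon_3$ tied to $\delt$, and to justify this via the per-task sample condition $\pG_1\gtrsim\NSR/(\mu^2r\kappa^2)$ from Theorem~\ref{T5}.
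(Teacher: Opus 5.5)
Your proposal is correct and follows the paper's proof essentially step for step: the same decomposition $w_t-g_t=M^{-1}B^\top{\phimt{t}{1}}^\top\phimt{t}{1}(I-BB^\top)\thetats+M^{-1}B^\top{\phimt{t}{1}}^\top\etamt{t}{1}$ handled by Proposition~\ref{p1}, the same epsilon-net bound $\|B^\top{\phimt{t}{1}}^\top\etamt{t}{1}\|\leqslant\epsilon_3\pG_1$ with $\epsilon_3=\frac{c}{2}\mu\sqrt{r/T}\,\delt\sigma_{\max}^\star$, and the same triangle-inequality and $\sigma_{\min}(G)\geqslant\sqrt{1-\delt^2}\,\sigma_{\min}^\star$ arguments for items 2--7. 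One small correction to your ``main obstacle'' remark: the probability does depend on $\delt$, through $\epsilon_3$, and with this choice the noise failure term is small only when $\pG_1\gtrsim(r+\log T+\log d)\NSR/(\mu^2r\kappa^2\delt^2)$, so the sample condition you cite from Theorem~\ref{T5} does not by itself suffice---though this does not affect the lemma, which states the probability in terms of $\epsilon_3$.
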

\begin{proof}
From Algorithm~\ref{alg1}, we have
\begin{align*}
w_t&=(\phimt{t}{1} B)^\dagger \ymt{t}{1}=((\phimt{t}{1} B)^\top (\phimt{t}{1} B))^{-1} (\phimt{t}{1} B)^\top \ymt{t}{1}\\
&=(B^\top {\phimt{t}{1}}^\top \phimt{t}{1} B)^{-1} (B^\top {\phimt{t}{1}}^\top) \phimt{t}{1} B B^\top \thetats\\
&+(B^\top {\phimt{t}{1}}^\top \phimt{t}{1} B)^{-1} (B^\top {\phimt{t}{1}}^\top) \phimt{t}{1} (I - B B^\top) \thetats\\
&+(B^\top {\phimt{t}{1}}^\top \phimt{t}{1} B)^{-1} (B^\top {\phimt{t}{1}}^\top) \etamt{t}{1}\\
&=g_t + M^{-1} B^\top {\phimt{t}{1}}^\top \phimt{t}{1} (I - B B^\top) \thetats + M^{-1} B^\top {\phimt{t}{1}}^\top \etamt{t}{1}. 
\end{align*}
We know $w_t - g_t = M^{-1} B^\top {\phimt{t}{1}}^\top \phimt{t}{1} (I - B B^\top) \thetats + M^{-1} B^\top {\phimt{t}{1}}^\top \etamt{t}{1}$. To bound $w_t - g_t$, we also need to find the upper bound of $\|B^\top {\phimt{t}{1}}^\top \etamt{t}{1}\|$. 
We know 
\begin{align*}
&\|B^\top {\phimt{t}{1}}^\top \etamt{t}{1}\|=\max_{z \in \pS_r} z^\top B^\top {\phimt{t}{1}}^\top \etamt{t}{1}\\
&=\max_{z \in \pS_r} \sum_{n=0}^{\pG_1} z^\top B^\top [(1 - \rho) x_{b_\nt} + \rho \zeta_\nt] \eta_\nt\\
&=\max_{z \in \pS_r} \sum_{n=0}^{\pG_1} (1 - \rho) z^\top B^\top x_{b_\nt} \eta_\nt + \max_{z \in \pS_r} \sum_{n=0}^{\pG_1} \rho z^\top B^\top \zeta_\nt \eta_\nt. 
\end{align*}
Considering a fixed $z \in \pS_r$. Given that 
\begin{align*}
&\bE[(1 - \rho) z^\top B^\top x_{b_\nt} \eta_\nt] = 0, \quad \bE[\eta_\nt] = 0,
\end{align*}
\begin{align*}
&\bE[(1 - \rho) z^\top B^\top x_{b_\nt}] = 0, \quad \Var(\eta_\nt) = \sigma_\eta^2,\\
&\Var((1 - \rho) z^\top B^\top x_{b_\nt})=\bE[(1 - \rho) z^\top B^\top x_{b_\nt}]^2\\
&=\bE[(1 - \rho)^2 z^\top B^\top x_{b_\nt} x_{b_\nt}^\top B z]=(1 - \rho)^2 z^\top B^\top \bE[x_{b_\nt} x_{b_\nt}^\top] B z\\
&=(1 - \rho)^2 z^\top B^\top B z=(1 - \rho)^2, 
\end{align*}
the summands are presented as independent sub-exponential random variables with a bounded norm $K_n \leqslant C (1 - \rho) \sigma_\eta$. We apply the sub-exponential Bernstein inequality from Proposition~\ref{proposition1}with $g = \epsilon_3 (1 - \rho) \pG_1$. We have
\begin{align*}
\frac{g^2}{\sum_{n=0}^{\pG_1} K_n^2} &\geqslant \frac{\epsilon_3^2 (1 - \rho)^2 \pG_1^2}{\pG_1 C^2 (1 - \rho)^2 \sigma_\eta^2} \geqslant \frac{c \epsilon_3^2 \pG_1}{\sigma_\eta^2},\\
\frac{g}{\max_n K_n} &\geqslant \frac{\epsilon_3 (1 - \rho) \pG_1}{C (1 - \rho) \sigma_\eta} \geqslant \frac{c \epsilon_3 \pG_1}{\sigma_\eta}.
\end{align*}
Considering $\epsilon_3 \leqslant \sigma_\eta$. Consequently, for a fixed $z \in \pS_r$, with probability at least $1 - \exp(- \frac{c \epsilon_3^2 \pG_1}{\sigma_\eta^2})$, we derive
\vspace{-2mm}
\begin{align*}
\sum_{n=0}^{\pG_1} (1 - \rho) z^\top B^\top x_{b_\nt} \eta_\nt \leqslant \epsilon_3 (1 - \rho) \pG_1.
\end{align*}
By extending this over $\pS_r$ using an epsilon-net, the result adds a multiplicative factor of $\exp(r)$. Therefore, with probability at least $1 - \exp(r - \frac{c \epsilon_3^2 \pG_1}{\sigma_\eta^2})$, we have
\vspace{-2mm}
\begin{align*}
\max_{z \in \pS_r} \sum_{n=0}^{\pG_1} (1 - \rho) z^\top B^\top x_{b_\nt} \eta_\nt \leqslant \epsilon_3 (1 - \rho) \pG_1.
\end{align*}
Following the same idea, we determine that with probability at least $1 - \exp(r - \frac{c \epsilon_3^2 \pG_1}{\sigma_\eta^2})$, we have
\vspace{-2mm}
\begin{align*}
\max_{z \in \pS_r} \sum_{n=0}^{\pG_1} \rho z^\top B^\top \zeta_\nt \eta_\nt \leqslant \epsilon_3 \rho \pG_1.
\end{align*}
Using union bound, w.p. at least $1-2T\exp(r-\frac{c\epsilon_3^2\pG_1}{\sigma_\eta^2})$
\begin{align*}
\|B^\top {\phimt{t}{1}}^\top \etamt{t}{1}\| \leqslant \epsilon_3 (1 - \rho) \pG_1 + \epsilon_3 \rho \pG_1 = \epsilon_3 \pG_1.
\end{align*}
Therefore, by using the union bound and setting $\epsilon_3 = \frac{c}{2} \mu \sqrt{\frac{r}{T}} \delt \sigma_{\max}^\star$, with probability at least $1 - 8 T \exp(r - c \pG_1) - 2 T \exp(r - \frac{c \epsilon_3^2 \pG_1}{\sigma_\eta^2})$, we conclude that 
\begin{align}
&\|w_t - g_t\|\leqslant\frac{\|(I - B B^\top) \thetats\|}{(1 - 2 \rho)^2} + \frac{2 \epsilon_3}{(1 - 2 \rho)^2} \nonumber\\
&\leqslant\frac{1}{(1 - 2 \rho)^2} \mu \sqrt{\frac{r}{T}} \delt \sigma_{\max}^\star + \frac{1}{(1 - 2 \rho)^2} \mu \sqrt{\frac{r}{T}} \delt \sigma_{\max}^\star \label{l2_1}\\
&= \frac{2}{(1 - 2 \rho)^2} \mu \sqrt{\frac{r}{T}} \delt \sigma_{\max}^\star,\label{l2_2}
\end{align}
where Eq.~\eqref{l2_1} is derived from Assumption~\ref{assume:incoherence}. This proves 1). 
\begin{align}
\|W - G\|_F&\leqslant\frac{2}{(1 - 2 \rho)^2} \mu \sqrt{r} \delt \sigma_{\max}^\star \label{l2_3}\\
\|w_t\|&=\|w_t - g_t + g_t\|\leqslant \|w_t - g_t\| + \|w_t^\star\| \label{l2_4} \\
&\leqslant\frac{2}{(1 - 2 \rho)^2} \mu \sqrt{\frac{r}{T}} \delt \sigma_{\max}^\star + \mu \sqrt{\frac{r}{T}} \sigma_{\max}^\star \nonumber 
\end{align}
\begin{align}
&=(1 + \frac{0.04}{(1 - 2 \rho)^2})\mu\sqrt{\frac{r}{T}}\sigma_{\max}^\star. \nonumber
\end{align}
\begin{align}
\|\thetats - \thetahatt\|&=\|B g_t + (I - B B^\top) \thetats - B w_t\| \nonumber \\
&\leqslant(1+\frac{2}{(1-2\rho)^2})\mu\sqrt{\frac{r}{T}}\delt\sigma_{\max}^\star,\label{l2_5}
\end{align}
where Eq.~\eqref{l2_4} is derived from $\|g_t\| \leqslant \|w_t^\star\|$. This completes the proofs of 2), 3), and 4). Eq.~\eqref{l2_5} implies
\begin{align*}
\| \Theta_\ell - \Thetas\|_F \leqslant \left(1 + \frac{2}{(1 - 2 \rho)^2}\right) \mu \sqrt{r} \delt \sigma_{\max}^\star.
\end{align*}
This proves 5). Furthermore, 
\begin{align*}
&\sigma_{\min}(W)=\sigma_{\min}(G - (G - W))\\
&\geqslant \sigma_{\min}(G) - \| W - G \|\geqslant \sigma_{\min}(G) - \| W - G \|_F.
\end{align*}
Given that
\begin{align*}
\sigma_{\min}(G)&=\sigma_{\min}(G^\top)=\sigma_{\min}({W^\star}^\top {B^\star}^\top B)\geqslant \sigma_{\min}^\star \sigma_{\min}({B^\star}^\top B),\\
\sigma_{\min}({B^\star}^\top B)&=\sqrt{\lambda_{\min}(B^\top B^{\star} B^{\star^\top} B)})=\sqrt{\lambda_{\min}(B^\top (I - P) B)}\\
&=\sqrt{\lambda_{\min}(I - B^\top P B)}=\sqrt{\lambda_{\min}(I - B^\top P^2 B)}\\
&=\sqrt{1 - \lambda_{\max}(B^\top P^2 B)}=\sqrt{1 - \| P B \|^2}\geqslant \sqrt{1 - \delt^2}. 
\end{align*}
By combining the above with Eq.~\eqref{l2_4} and substituting $\delta_\ell$
\begin{align*}
&\sigma_{\min}(W) \geqslant \sqrt{1 - \delt^2} \sigma_{\min}^\star - \frac{2}{(1 - 2 \rho)^2} \mu \sqrt{r} \delt \sigma_{\max}^\star\\
&\geqslant \big(\sqrt{1 - 4 \times 10^{-4}} - \frac{0.04}{(1 - 2 \rho)^2}\big) \sigma_{\min}^\star\\
&\sigma_{\max}(W)=\sigma_{\max}(G - (G - W))\leqslant \sigma_{\max}(G) + \sigma_{\max}(G - W)\\
&=\sigma_{\max}(B^\top B^\star W^\star) + \sigma_{\max}(G - W)\\
&\leqslant\sigma_{\max}(B^\top B^\star) \sigma_{\max}(W^\star) + \| G - W \|_F\\
&\leqslant\sigma_{\max}^\star + \frac{2}{(1 - 2 \rho)^2} \mu \sqrt{r} \delt \sigma_{\max}^\star=\big(1 + \frac{0.04}{(1 - 2 \rho)^2}\big) \sigma_{\max}^\star.
\end{align*}
This completes the proofs of 6) and 7). 
\end{proof}
\begin{prop} \label{p3}
Assume $\SE(B, B^\star) \leqslant \delt$. Then, the following statements hold: 
\begin{itemize}
    \item We have $\bE[\GradB_1]=\bE[\GradB_4]=\pG_1 (\Theta - \Thetas) W^\top$ and $\bE[\GradB_2]=\bE[\GradB_3]=0$. 
    \item With probability at least $1 - 8 T \exp(r - c \pG_1) - 2 T \exp(r - \frac{c \epsilon_3^2 \pG_1}{\sigma_\eta^2})$, we have $\|\bE[\GradB_2]\|=\|\bE[\GradB_3]\|=0$ and
    \begin{align*}
    &\|\bE[\GradB_1]\|=\|\bE[\GradB_4]\|\\
    &\leqslant\Big(1 + \frac{0.04}{(1 - 2 \rho)^2}\Big) \Big(1 + \frac{2}{(1 - 2 \rho)^2}\Big) \pG_1 \mu \sqrt{r} \delt \sigma_{\max}^{\star^2}.
    \end{align*}
    \item If $\delt \leqslant \frac{0.02}{\mu \sqrt{r} \kappa^2}$, then with probability at least $1 - \exp(C (d + r) - c \frac{\epsilon_4^2 \pG_1 T}{\mu^2 r \kappa^4}) - 8 T \exp(r - c \pG_1) - 2 T \exp(r - \frac{c \epsilon_3^2 \pG_1}{\sigma_\eta^2})$, for $i \in \{1, 2, 3, 4\}$, we have
    \begin{align*}
    &\|\GradB_i - \bE[\GradB_i]\|\\
    &\leqslant \Big(1 + \frac{0.04}{(1 - 2 \rho)^2}\Big) \Big(1 + \frac{2}{(1 - 2 \rho)^2}\Big) \epsilon_4 \pG_1 \sigma_{\min}^{\star^2} \delt. 
    \end{align*}
\end{itemize}
\end{prop}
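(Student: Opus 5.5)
We will prove the three items of Proposition~\ref{p3} in order. Each one follows the template of Proposition~\ref{p1} and Lemma~\ref{l2}: a conditional-expectation computation, followed by a sub-exponential Bernstein bound with an epsilon-net. Throughout we condition on $B$ and on $W=\{w_t\}$. This is legitimate because of the sample-splitting step of Algorithm~\ref{alg1}: the gradient is evaluated on a fresh batch $\tau=L+\ell$, independent of the batch used to compute $w_{t,\ell}$ and of $B_{\ell-1}$. Hence, conditionally, $x_{b_\nt}$ and $\zeta_\nt$ are independent standard Gaussians, independent of $(B,W)$ and of each other (Assumption~\ref{assume:iid}, as used in Proposition~\ref{p1}).

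\textbf{First item.} We have $\bE[x_{b_\nt}x_{b_\nt}^\top]=\bE[\zeta_\nt\zeta_\nt^\top]=I$, while $\bE[x_{b_\nt}\zeta_\nt^\top]=\bE[\zeta_\nt x_{b_\nt}^\top]=0$ by independence and zero mean. Summing over $n$ gives $\bE[\GradB_1]=\bE[\GradB_4]=\pG_1\sum_t(\theta_t-\thetats)w_t^\top=\pG_1(\Theta-\Thetas)W^\top$ and $\bE[\GradB_2]=\bE[\GradB_3]=0$.

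\textbf{Second item.} We bound $\|(\Theta-\Thetas)W^\top\|\leqslant\|\Theta-\Thetas\|_F\,\|W\|$. Item 5 of Lemma~\ref{l2} gives $\|\Theta_\ell-\Thetas\|_F\leqslant(1+\frac{2}{(1-2\rho)^2})\mu\sqrt r\,\delt\sigma_{\max}^\star$, and item 7 gives $\|W\|\leqslant(1+\frac{0.04}{(1-2\rho)^2})\sigma_{\max}^\star$. Multiplying by $\pG_1$ yields the stated bound, with the probability inherited from Lemma~\ref{l2}.

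\textbf{Third item (the main work).} For unit vectors $u\in\pS_d$ and $z\in\pS_r$, write
\[
u^\top(\GradB_i-\bE[\GradB_i])z=\sum_{t,n}\big(a_\nt^\top u\cdot b_\nt^\top(\theta_t-\thetats)\cdot w_t^\top z-\bE[\cdot]\big),
\]
where $a,b\in\{x_b,\zeta\}$. Each summand is a product of two Gaussians, hence sub-exponential, with $\psi_1$-norm at most $C\|\theta_t-\thetats\|\,|w_t^\top z|$. Using items 2 and 4 of Lemma~\ref{l2},
\[
\max_{t}K_\nt\leqslant C\Big(1+\tfrac{2}{(1-2\rho)^2}\Big)\Big(1+\tfrac{0.04}{(1-2\rho)^2}\Big)\mu^2\tfrac{r}{T}\,\delt\sigma_{\max}^{\star^2}.
\]
For the sum of squares we keep one column-wise factor and one Frobenius factor:
\[
\sum_{n,t}K_\nt^2\leqslant C\pG_1\max_t\|w_t\|^2\,\|\Theta-\Thetas\|_F^2\lesssim\pG_1\,\mu^4\tfrac{r^2}{T}\,\delt^2\sigma_{\max}^{\star^4},
\]
again up to the two $\rho$-dependent constants. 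We apply Proposition~\ref{proposition1} with $g=\epsilon_4\pG_1\sigma_{\min}^{\star^2}\delt$ times the two $\rho$-constants. Writing $\sigma_{\max}^{\star^4}=\kappa^4\sigma_{\min}^{\star^4}$, the first exponent in the minimum becomes $c\,\epsilon_4^2\pG_1T/(\mu^4r^2\kappa^4)$. The second exponent, $g/\max K$, is of order $c\,\epsilon_4\pG_1T/(\mu^2r\kappa^2)$, which is no smaller for $\epsilon_4<1$.

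Next we take a $1/4$-net over $\pS_d\times\pS_r$, which costs a factor $\exp(C(d+r))$, and union bound over $i\in\{1,2,3,4\}$ and over the Lemma~\ref{l2} event. This produces the stated probability $1-\exp(C(d+r)-c\frac{\epsilon_4^2\pG_1T}{\mu^2r\kappa^4})-\dots$. The main obstacle is matching this exponent exactly. Our crude estimate carries an extra $\mu^2r$, because the bound $\|\theta_t-\thetats\|\leqslant\mu\sqrt{r/T}\,\delt\sigma_{\max}^\star$ wastes a factor relative to the Frobenius sum. To avoid this we will estimate $\sum_t\|\theta_t-\thetats\|^2\|w_t\|^2$ by $\max_t\|w_t\|^2\,\|\Theta-\Thetas\|_F^2$, which we have already done. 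We will then absorb the remaining $\mu^2r$ into the regime $\delt\leqslant\frac{0.02}{\mu\sqrt r\kappa^2}$ in the same way as in the proof of Lemma~\ref{l2}. The cross terms ($i=2,3$) have zero mean, and their products of independent Gaussians have the same $\psi_1$ bound, so they are handled identically.
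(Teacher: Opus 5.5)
Your first two items match the paper's proof. The template for the third item also matches: summands with $\psi_1$-norm $C\,|w_t^\top z|\,\|\theta_t-\thetats\|$, Bernstein, then an $\exp(C(d+r))$ net. Your $g/\max_t K_\nt$ computation is fine as well.

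The gap is in the sum-of-squares step. You replace $|w_t^\top z|$ by $\|w_t\|$ and pair it with $\|\Theta-\Thetas\|_F^2$. This gives $\sum K_\nt^2\lesssim \pG_1\mu^4\frac{r^2}{T}\delt^2\sigma_{\max}^{\star^4}$, hence the exponent $c\,\epsilon_4^2\pG_1T/(\mu^4r^2\kappa^4)$. That is weaker than the claimed $c\,\epsilon_4^2\pG_1T/(\mu^2r\kappa^4)$ by a factor $\mu^2 r$.

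Your proposed repair cannot close this. Both $g$ and every $K_\nt$ are linear in $\delt$, so $\delt$ cancels from $g^2/\sum K_\nt^2$. No hypothesis such as $\delt\leqslant\frac{0.02}{\mu\sqrt{r}\kappa^2}$ can supply the missing factor. Your diagnosis of where the loss occurs is also off. It is not on the $\theta_t-\thetats$ side: item 5 of Lemma~\ref{l2} is just $\sqrt{T}$ times item 4, so using the Frobenius norm there gains nothing. The loss is on the $w_t$ side.

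The fix, which is what the paper does, is to keep $z$ fixed, pull out the column-wise error, and sum the coefficients over tasks:
\begin{align*}
\sum_{t=1}^T\sum_{n}K_\nt^2 &\leqslant C\pG_1\max_t\|\theta_t-\thetats\|^2\sum_{t=1}^T(w_t^\top z)^2\\
&= C\pG_1\max_t\|\theta_t-\thetats\|^2\,\|W^\top z\|^2\\
&\leqslant C\pG_1\max_t\|\theta_t-\thetats\|^2\,\|W\|^2 .
\end{align*}
Item 4 of Lemma~\ref{l2} gives $\max_t\|\theta_t-\thetats\|^2\leqslant(1+\frac{2}{(1-2\rho)^2})^2\mu^2\frac{r}{T}\delt^2\sigma_{\max}^{\star^2}$. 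Item 7 gives $\|W\|^2\leqslant(1+\frac{0.04}{(1-2\rho)^2})^2\sigma_{\max}^{\star^2}$. Together these make $g^2/\sum K_\nt^2\geqslant c\,\epsilon_4^2\pG_1T/(\mu^2r\kappa^4)$, exactly as stated.

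The saving comes from $\|W\|\approx\sigma_{\max}^\star$, whereas $\sqrt{T}\max_t\|w_t\|$ can be as large as $\mu\sqrt{r}\,\sigma_{\max}^\star$. The bound $|w_t^\top z|\leqslant\|w_t\|$ discards this spectral structure.
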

\begin{proof}
Starting with the definition, we have
\begin{align*}
&\bE[\GradB_1]=\bE[\sum_{t=1}^T \sum_{n=0}^{\pG_1} x_{b_\nt} x_{b_\nt}^\top (\theta_t - \thetats) w_t^\top]\\
&=\sum_{t=1}^T \sum_{n=0}^{\pG_1} \bE[x_{b_\nt} x_{b_\nt}^\top] (\theta_t - \thetats) w_t^\top=\sum_{t=1}^T \sum_{n=0}^{\pG_1} (\theta_t - \thetats) w_t^\top\\
&=\sum_{t=1}^T \pG_1 (\theta_t - \thetats) w_t^\top=\pG_1 (\Theta - \Thetas) W^\top.
\end{align*}
Using the same idea, we derive $\bE[\GradB_2]=\bE[\GradB_3]=0$,
\begin{align*}
\bE[\GradB_4]&=\bE[\sum_{t=1}^T \sum_{n=0}^{\pG_1} \zeta_\nt \zeta_\nt^\top (\theta_t - \thetats) w_t^\top]
= \pG_1 (\Theta - \Thetas) W^\top.
\end{align*}
Using the result from Lemma~\ref{l2}, with probability at least $1 - 8 T \exp(r - c \pG_1) - 2 T \exp(r - \frac{c \epsilon_3^2 \pG_1}{\sigma_\eta^2})$, we conclude
\begin{align*}
&\|\bE[\GradB_1]\|=\|\bE[\GradB_4]\|\leqslant \pG_1 \|(\Theta - \Thetas)\| \|W\|\\
&\leqslant\left(1 + \frac{0.04}{(1 - 2 \rho)^2}\right) \left(1 + \frac{2}{(1 - 2 \rho)^2}\right) \pG_1 \mu \sqrt{r} \delt \sigma_{\max}^{\star^2}\\
&\|\bE[\GradB_2]\| = \|\bE[\GradB_3]\| \leqslant \pG_1 \|(\Theta - \Thetas)\| \|W\|=0.
\end{align*}
Considering the fixed unit norm vectors $z$ and $v$. We have 
\begin{align*}
&\|\GradB_1 - \bE[\GradB_1]\|\\
&=\max_{\|z\|=1, \|v\|=1} z^\top \Big(\sum_{t=1}^T \sum_{n=0}^{\pG_1} x_{b_\nt} x_{b_\nt}^\top(\theta_t - \thetats) w_t^\top-\bE[\GradB_1]\Big) v\\
&=\max_{\|z\|=1, \|v\|=1} \Big(\sum_{t=1}^T \sum_{n=0}^{\pG_1} z^\top x_{b_\nt} w_t^\top v x_{b_\nt}^\top(\theta_t - \thetats)-z^\top \bE[\GradB_1] v \Big).
\end{align*}
Given that $\bE[\GradB_1 - \bE[\GradB_1]] = 0$, $\bE[z^\top x_{b_\nt} w_t^\top v] = 0$, and $\bE[x_{b_\nt}^\top (\theta_t - \thetats)] = 0$,
\begin{align*}
&\Var(z^\top x_{b_\nt} w_t^\top v)=\bE[(w_t^\top v)^2 z^\top x_{b_\nt} x_{b_\nt}^\top z]\\
&=(w_t^\top v)^2 z^\top \bE[x_{b_\nt}x_{b_\nt}^\top] z=(w_t^\top v)^2 z^\top z=(w_t^\top v)^2,\\
&\Var(x_{b_\nt}^\top (\theta_t - \thetats))=\bE[(\theta_t - \thetats)^\top x_{b_\nt}x_{b_\nt}^\top (\theta_t - \thetats)]\\
&\scalemath{0.9}{=(\theta_t - \thetats)^\top \bE[x_{b_\nt}x_{b_\nt}^\top] (\theta_t - \thetats)=(\theta_t - \thetats)^\top (\theta_t - \thetats)=\|\thetats - \theta_t\|^2,}
\end{align*}
the summands are presented as independent sub-exponential random variables with a bounded norm $K_n \leqslant C (w_t^\top v) \|\thetats - \theta_t\|$. We use the sub-exponential Bernstein inequality from Proposition~\ref{proposition1} with $g = \left(1 + \frac{0.04}{(1 - 2 \rho)^2}\right) \left(1 + \frac{2}{(1 - 2 \rho)^2}\right) \epsilon_4 \pG_1 \sigma_{\min}^{\star^2} \delt$. To apply this, based on the result of Lemma~\ref{l2}, we have
\begin{align*}
&\frac{g^2}{\sum_{n=0}^{\pG_1} K_n^2}\geqslant \frac{\left(1 + \frac{0.04}{(1 - 2 \rho)^2}\right)^2 \left(1 + \frac{2}{(1 - 2 \rho)^2}\right)^2 \epsilon_4^2 \pG_1^2 \sigma_{\min}^{\star^4} \delt^2}{\sum_{t=1}^T \sum_{n=0}^{\pG_1} C^2 (w_t^\top v)^2 \|\thetats - \theta_t\|^2}\\
&\geqslant \frac{\left(1 + \frac{0.04}{(1 - 2 \rho)^2}\right)^2 \left(1 + \frac{2}{(1 - 2 \rho)^2}\right)^2 \epsilon_4^2 \pG_1^2 \sigma_{\min}^{\star^4} \delt^2}{\pG_1 C^2 \max_t \|\thetats - \theta_t\|^2 \|v^\top W\|^2}\\
&\geqslant \frac{\left(1 + \frac{0.04}{(1 - 2 \rho)^2}\right)^2 \left(1 + \frac{2}{(1 - 2 \rho)^2}\right)^2 \epsilon_4^2 \pG_1^2 \sigma_{\min}^{\star^4} \delt^2}{\pG_1 C^2 \left(1 + \frac{2}{(1 - 2 \rho)^2}\right)^2 \mu^2 \frac{r}{T} \delt^2 \sigma_{\max}^{\star^2} \left(1 + \frac{0.04}{(1 - 2 \rho)^2}\right)^2 \sigma_{\max}^{\star^2}} \\
&\geqslant \frac{c \epsilon_4^2 \pG_1 T}{\mu^2 r \kappa^4}.
\end{align*}
\begin{align*}
&\frac{g}{\max_n K_n}\geqslant \frac{\left(1 + \frac{0.04}{(1 - 2 \rho)^2}\right) \left(1 + \frac{2}{(1 - 2 \rho)^2}\right) \epsilon_4 \pG_1 \sigma_{\min}^{\star^2} \delt}{\max_{n, t} C (w_t^\top v) \|\thetats - \theta_t\|} \\
&\geqslant \frac{\left(1 + \frac{0.04}{(1 - 2 \rho)^2}\right) \left(1 + \frac{2}{(1 - 2 \rho)^2}\right) \epsilon_4 \pG_1 \sigma_{\min}^{\star^2} \delt}{C \max_t \|w_t\| \max_t \|\thetats - \theta_t\|} \\
&\geqslant \frac{\left(1 + \frac{0.04}{(1 - 2 \rho)^2}\right) \left(1 + \frac{2}{(1 - 2 \rho)^2}\right) \epsilon_4 \pG_1 \sigma_{\min}^{\star^2} \delt}{C \left(1 + \frac{0.04}{(1 - 2 \rho)^2}\right) \mu \sqrt{\frac{r}{T}} \sigma_{\max}^\star \left(1 + \frac{2}{(1 - 2 \rho)^2}\right) \mu \sqrt{\frac{r}{T}} \delt \sigma_{\max}^\star} \\
&\geqslant \frac{c \epsilon_4 \pG_1 T}{\mu^2 r \kappa^2}.
\end{align*}
Thus, for a fixed unit norm vectors $z, v$, with probability at least $1 - \exp(- \frac{c\epsilon_4^2 \pG_1 T}{\mu^2 r \kappa^4}) - 8 T \exp(r - c \pG_1) - 2 T \exp(r - \frac{c \epsilon_3^2 \pG_1}{\sigma_\eta^2})$,
\begin{align*}
\scalemath{0.9}{z^\top (\GradB_1 - \bE[\GradB_1]) v\leqslant(1 + \frac{0.04}{(1 - 2 \rho)^2})(1 + \frac{2}{(1 - 2 \rho)^2}) \epsilon_4 \pG_1 \sigma_{\min}^{\star^2} \delt.}
\end{align*}
Applying a standard epsilon-net argument to bound the maximum of the above expression across all unit norm vectors $z$ and $v$ introduces a multiplicative factor of $\exp(C (d + r))$. Consequently, with probability at least $1 - \exp(C (d + r) - \frac{c\epsilon_4^2 \pG_1 T}{\mu^2 r \kappa^4}) - 8 T \exp(r - c \pG_1) - 2 T \exp(r - \frac{c \epsilon_3^2 \pG_1}{\sigma_\eta^2})$, it follows that
\vspace{-2mm}
\begin{align*}
\scalemath{0.9}{\|\GradB_1 - \bE[\GradB_1]\|\leqslant(1 + \frac{0.04}{(1 - 2 \rho)^2})(1 + \frac{2}{(1 - 2 \rho)^2}) \epsilon_4 \pG_1 \sigma_{\min}^{\star^2} \delt}.
\end{align*}
Following the same idea, we have the same bounds for $\GradB_2$, $\GradB_3$, and $\GradB_4$. We determine that with probability at least $1 - \exp(C (d + r) - \frac{c\epsilon_4^2 \pG_1 T}{\mu^2 r \kappa^4}) - 8 T \exp(r - c \pG_1) - 2 T \exp(r - \frac{c \epsilon_3^2 \pG_1}{\sigma_\eta^2})$, for $i \in \{2, 3, 4\}$, it holds that
\begin{align*}
\scalemath{0.9}{\|\GradB_i - \bE[\GradB_i]\|\leqslant (1 + \frac{0.04}{(1 - 2 \rho)^2})(1 + \frac{2}{(1 - 2 \rho)^2}) \epsilon_4 \pG_1 \sigma_{\min}^{\star^2} \delt}.
\end{align*}
This completes the proof. 
\end{proof}

\begin{lemma} \label{l4}
Assume that $\SE(B, B^\star) \leqslant \delt$. Then, the following statements hold: 
\begin{itemize}
    \item $\bE[\GradB] = (2 \rho^2 - 2 \rho + 1) \pG_1 (\Theta - \Thetas) W^\top$
    \item With probability at least $1 - 8 T \exp(r - c \pG_1) - 2 T \exp(r - \frac{c \epsilon_3^2 \pG_1}{\sigma_\eta^2})$, we derive the upper bound of $\|\bE[\GradB]\|$ by
    \begin{align*}
    \scalemath{0.95}{(2 \rho^2 - 2 \rho + 1)(1 + \frac{0.04}{(1 - 2 \rho)^2})(1 + \frac{2}{(1 - 2 \rho)^2}) \pG_1 \mu \sqrt{r} \delt \sigma_{\max}^{\star^2}}
    \end{align*}
    \item If $\delt \leqslant \frac{0.02}{\mu \sqrt{r} \kappa^2}$, then with probability at least $1 - 2 \exp(C (d + r) - \frac{c \epsilon_5^2 \pG_1 \sigma_{\min}^{\star^2}}{\sigma_\eta^2 \kappa^2}) - 4 \exp(C (d + r) - c \frac{\epsilon_4^2 \pG_1 T}{\mu^2 r \kappa^4}) - 8 T \exp(r - c \pG_1) - 2 T \exp(r - \frac{c \epsilon_3^2 \pG_1}{\sigma_\eta^2})$, 
    \begin{align*}
    \|\GradB - \bE[\GradB]\|&\leqslant 2 (1 + \frac{0.04}{(1 - 2 \rho)^2})\\
    &(1 + \frac{2}{(1 - 2 \rho)^2}) \epsilon_4 \delt \pG_1 \sigma_{\min}^{\star^2}.
    \end{align*}
\end{itemize}
\end{lemma}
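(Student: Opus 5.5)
We prove Lemma~\ref{l4} by working directly from the decomposition of $\GradB$ given in Appendix~\ref{app:prelim}:
\[
\GradB=(1-\rho)^2\GradB_1+\rho(1-\rho)(\GradB_2+\GradB_3)+\rho^2\GradB_4-\GradB_\eta,
\]
where $\GradB_\eta:=\sum_{t}\sum_{n}\big((1-\rho)x_{b_\nt}+\rho\zeta_\nt\big)\eta_\nt w_t^\top$ collects the two noise terms. Each claim then reduces to Proposition~\ref{p3} together with one additional concentration bound for $\GradB_\eta$.

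\textbf{Expectation.} We take expectations term by term. The noise is zero mean and independent of the features, so $\bE[\GradB_\eta]=0$. This relies on $w_t$ being computed on a split of the data disjoint from the one used for the gradient, which the sample splitting in Algorithm~\ref{alg1} guarantees; we therefore condition on $B$ and $W$. Proposition~\ref{p3} gives $\bE[\GradB_2]=\bE[\GradB_3]=0$ and $\bE[\GradB_1]=\bE[\GradB_4]=\pG_1(\Theta-\Thetas)W^\top$. Hence
\[
\bE[\GradB]=\big((1-\rho)^2+\rho^2\big)\pG_1(\Theta-\Thetas)W^\top=(2\rho^2-2\rho+1)\pG_1(\Theta-\Thetas)W^\top.
\]

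\textbf{Bound on $\|\bE[\GradB]\|$.} By the previous step,
\[
\|\bE[\GradB]\|\le(2\rho^2-2\rho+1)\,\pG_1\,\|\Theta-\Thetas\|\,\|W\|.
\]
We bound $\|\Theta-\Thetas\|\le\|\Theta-\Thetas\|_F$ using item 5) of Lemma~\ref{l2}, and $\|W\|$ using item 7). This is exactly the bound on $\|\bE[\GradB_1]\|$ in Proposition~\ref{p3}, scaled by $(2\rho^2-2\rho+1)$, and it holds with the same probability.

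\textbf{Deviation bound.} By the triangle inequality,
\[
\|\GradB-\bE[\GradB]\|\le\big[(1-\rho)^2+2\rho(1-\rho)+\rho^2\big]\max_i\|\GradB_i-\bE[\GradB_i]\|+\|\GradB_\eta\|.
\]
The bracketed coefficient equals $1$, so the first part is bounded by item 3 of Proposition~\ref{p3}. This costs $4\exp(C(d+r)-c\epsilon_4^2\pG_1T/(\mu^2r\kappa^4))$ after a union bound over $i$. Rather than the cruder scaling by $\max_i$, one may sum the four weighted bounds, which gives the same result.

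It remains to show $\|\GradB_\eta\|\le(1+\frac{0.04}{(1-2\rho)^2})(1+\frac{2}{(1-2\rho)^2})\epsilon_4\delt\pG_1\sigma_{\min}^{\star^2}$. This is the main step. We handle the two pieces $(1-\rho)\sum x_{b_\nt}\eta_\nt w_t^\top$ and $\rho\sum\zeta_\nt\eta_\nt w_t^\top$ separately. For fixed unit vectors $z,v$, each summand $z^\top x\,\eta\, w_t^\top v$ is a product of Gaussians and hence sub-exponential, with $\psi_1$-norm at most $C\sigma_\eta|w_t^\top v|$. The sum of squared norms is then at most $C\pG_1\sigma_\eta^2\|W\|^2\le C\pG_1\sigma_\eta^2\sigma_{\max}^{\star^2}$, using item 7) of Lemma~\ref{l2}. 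We apply Proposition~\ref{proposition1} with $g$ equal to the target above (with $\epsilon_5$ in place of $\epsilon_4$). The resulting exponent is of order $\epsilon_5^2\pG_1\sigma_{\min}^{\star^2}\delt^2/(\sigma_\eta^2\kappa^2)$, which we write as $c\epsilon_5^2\pG_1\sigma_{\min}^{\star^2}/(\sigma_\eta^2\kappa^2)$ after absorbing $\delt$ into $\epsilon_5$. The max term needs the incoherence bound on $\max_t\|w_t\|$ from item 2) of Lemma~\ref{l2}. An $\epsilon$-net over $z\in\pS_d$ and $v\in\pS_r$ contributes the factor $\exp(C(d+r))$, and there are two such pieces.

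Collecting terms, the total deviation is at most twice the common bound. The probability is at least $1$ minus the sum of all failure events: the two noise events, the four $\GradB_i$ events, and the events from Lemma~\ref{l2}, which are shared and so counted once. The main obstacle is the noise term, specifically matching its scale to $\delt\sigma_{\min}^{\star^2}$. This is where the $\sigma_\eta$- and $\epsilon_5$-dependence enters the probability, and where the requirement $\pG_1 T\gtrsim\NSR$ in Theorem~\ref{T5} is used later.
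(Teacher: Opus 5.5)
Your argument is essentially the paper's proof. It uses the same decomposition of $\GradB$ into $\GradB_1,\dots,\GradB_4$ plus two noise pieces, and reads off the expectation and the bound on $\|\bE[\GradB]\|$ from Proposition~\ref{p3} and Lemma~\ref{l2}. It then applies the triangle inequality with weights summing to one, and handles the noise pieces by sub-exponential Bernstein plus an $\epsilon$-net with $\epsilon_5=c\epsilon_4\delt$, for a total of twice the common bound. Your remark that sample splitting lets you condition on $B$ and $W$ is a point the paper leaves implicit.

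The one step you do not close is the minimum in Proposition~\ref{proposition1} for the noise term. You compute only the quadratic exponent $c\epsilon_5^2\pG_1\sigma_{\min}^{\star 2}/(\sigma_\eta^2\kappa^2)$. The linear exponent is $g/\max K\gtrsim \epsilon_5\pG_1\sigma_{\min}^\star\sqrt{T}/(\mu\sqrt{r}\kappa\sigma_\eta)$, using item 2) of Lemma~\ref{l2}. The quadratic exponent alone governs the tail only when it is the smaller of the two, which requires $\epsilon_5\lesssim\sqrt{T}\sigma_\eta\kappa/(\mu\sqrt{r}\sigma_{\min}^\star)$. Under $\delt\le 0.02/(\mu\sqrt{r}\kappa^2)$ you have $\epsilon_5\lesssim 1/(\mu\sqrt{r}\kappa^2)$. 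So this requirement follows from $c/\kappa^3\le\sqrt{T}\sigma_\eta/\sigma_{\min}^\star$, i.e.\ a lower bound $\NSR\gtrsim\kappa^{-6}$ on the noise level, which is exactly the auxiliary condition the paper imposes at this point. For very small $\sigma_\eta$ the linear term is the binding one. To get the probability in the form stated, you must therefore add this condition, or else keep the $\min$ in the exponent.
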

\begin{proof}
Given the definition of $\GradB$, we have
\begin{align*}
&\scalemath{0.9}{\bE[\GradB]=(1 - \rho)^2 \bE[\GradB_1] + \rho (1 - \rho) (\bE[\GradB_2] + \bE[\GradB_3])}\\
&-\sum_{t=1}^T \sum_{n=0}^{\pG_1} (1 - \rho) \bE[\eta_\nt x_{b_\nt} w_t^\top] - \sum_{t=1}^T \sum_{n=0}^{\pG_1} \rho \bE[\eta_\nt \zeta_\nt w_t^\top]
\end{align*}
\begin{align*}
&=(1 - \rho)^2 \pG_1 (\Theta - \Thetas) W^\top + \rho^2 \pG_1 (\Theta - \Thetas) W^\top\\
&=(2 \rho^2 - 2 \rho + 1) \pG_1 (\Theta - \Thetas) W^\top.
\end{align*}
Using the result form Lemma~\ref{l2}, with probability at least $1 - 8 T \exp(r - c \pG_1) - 2 T \exp(r - \frac{c \epsilon_3^2 \pG_1}{\sigma_\eta^2})$, we have
\begin{align*}
&\|\bE[\GradB]\|=\|(2 \rho^2 - 2 \rho + 1) \pG_1 (\Theta - \Thetas) W^\top\|\\
&\leqslant(2 \rho^2 - 2 \rho + 1) \pG_1 \|(\Theta - \Thetas)\| \|W\|\\
&\leqslant\scalemath{0.95}{(2 \rho^2 - 2 \rho + 1)(1 + \frac{0.04}{(1 - 2 \rho)^2})(1 + \frac{2}{(1 - 2 \rho)^2}) \pG_1 \mu \sqrt{r} \delt \sigma_{\max}^{\star^2}}. 
\end{align*}
Next, we bound $\| \GradB - \bE[\GradB] \|$ by bounding terms $\|\sum_{t=1}^T \sum_{n=0}^{\pG_1} (1 - \rho) \eta_\nt x_{b_\nt} w_t^\top\|$ and $\|\sum_{t=1}^T \sum_{n=0}^{\pG_1} \rho \eta_\nt \zeta_\nt w_t^\top\|$. Considering the fixed unit norm vectors $z$ and $v$. Given that $\bE[(1 - \rho) \eta_\nt v^\top x_{b_\nt} w_t^\top z] = 0$, $\bE[\eta_\nt] = 0, \quad \bE[(1 - \rho) v^\top x_{b_\nt} w_t^\top z] = 0$, and $\Var(\eta_\nt) = \sigma_\eta^2$,
\begin{align*}
&\Var((1 - \rho) v^\top x_{b_\nt} w_t^\top z)=\bE[(1 - \rho) v^\top x_{b_\nt} w_t^\top z]^2\\
&=\scalemath{0.95}{\bE[(1 - \rho)^2 (w_t^\top z)^2 v^\top x_{b_\nt}x_{b_\nt}^\top v]=(1 - \rho)^2 (w_t^\top z)^2 v^\top \bE[x_{b_\nt}x_{b_\nt}^\top] v}\\
&=(1 - \rho)^2 (w_t^\top z)^2 v^\top v=(1 - \rho)^2 (w_t^\top z)^2, 
\end{align*}
the summands are presented as independent sub-exponential random variables with a bounded norm $K_n \leqslant C (1 - \rho) |w_t^\top z| \sigma_\eta$. Applying the sub-exponential Bernstein inequality from Proposition~\ref{proposition1} with $g = \epsilon_5 (1 - \rho) \left(1 + \frac{0.04}{(1 - 2 \rho)^2}\right) \pG_1 \sigma_{\min}^{\star^2}$. To apply this, based on the result of Lemma~\ref{l2}, we have
\begin{align*}
&\frac{g^2}{\sum_{n=0}^{\pG_1} K_n^2}\geqslant\frac{\epsilon_5^2 (1 - \rho)^2 \big(1 + \frac{0.04}{(1 - 2 \rho)^2}\big)^2 \pG_1^2 \sigma_{\min}^{\star^4}}{\sum_{t=1}^T \sum_{n=0}^{\pG_1} C^2 (1 - \rho)^2 |w_t^\top z|^2 \sigma_\eta^2} \\
&\geqslant \frac{\epsilon_5^2 (1 - \rho)^2 \big(1 + \frac{0.04}{(1 - 2 \rho)^2}\big)^2 \pG_1^2 \sigma_{\min}^{\star^4}}{\pG_1 C^2 (1 - \rho)^2 \|W\|^2 \sigma_\eta^2}\\
&\geqslant \frac{\epsilon_5^2 (1 - \rho)^2 \big(1 + \frac{0.04}{(1 - 2 \rho)^2}\big)^2 \pG_1^2 \sigma_{\min}^{\star^4}}{\pG_1 C^2 (1 - \rho)^2 \sigma_\eta^2 \big(1 + \frac{0.04}{(1 - 2 \rho)^2}\big)^2 {\sigma_{\max}^\star}^2}\geqslant\frac{c \epsilon_5^2 \pG_1 \sigma_{\min}^{\star^2}}{\sigma_\eta^2 \kappa^2}\\
&\frac{g}{\max_n K_n}\geqslant\frac{\epsilon_5 (1 - \rho) \big(1 + \frac{0.04}{(1 - 2 \rho)^2}\big) \pG_1 \sigma_{\min}^{\star^2}}{\max_{n, t} C (1 - \rho) |w_t^\top z| \sigma_\eta}\\
&\geqslant \frac{\epsilon_5 (1 - \rho) \big(1 + \frac{0.04}{(1 - 2 \rho)^2}\big) \pG_1 \sigma_{\min}^{\star^2}}{C (1 - \rho) \|w_t\| \sigma_\eta} \\
&\geqslant \frac{\epsilon_5 (1 - \rho) \big(1 + \frac{0.04}{(1 - 2 \rho)^2}\big) \pG_1 \sigma_{\min}^{\star^2}}{C (1 - \rho) \big(1 + \frac{0.04}{(1 - 2 \rho)^2}\big) \mu \sqrt{\frac{r}{T}} \sigma_{\max}^\star \sigma_\eta}\geqslant\frac{c \epsilon_5 \pG_1 \sigma_{\min}^{\star}}{\mu \sqrt{\frac{r}{T}} \kappa \sigma_\eta}. 
\end{align*}
By considering $\epsilon_5 = c \epsilon_4 \delt$ and $\frac{c}{\kappa^3} \leqslant \frac{\sqrt{T} \sigma_\eta}{\sigma_{\min}^\star}$, for a fixed unit norm vectors $z, v$, with probability at least $1 - \exp(- \frac{c \epsilon_5^2 \pG_1 \sigma_{\min}^{\star^2}}{\sigma_\eta^2 \kappa^2}) - 8 T \exp(r - c \pG_1) - 2 T \exp(r - \frac{c \epsilon_3^2 \pG_1}{\sigma_\eta^2})$,
\vspace{-2mm}
\begin{align*}
\sum_{t=1}^T \sum_{n=0}^{\pG_1} (1 - \rho) \eta_\nt v^\top x_{b_\nt} w_t^\top z\leqslant \epsilon_5 (1 - \rho) \big(1 + \frac{0.04}{(1 - 2 \rho)^2}\big) \pG_1 \sigma_{\min}^{\star^2}.
\end{align*}
Applying a standard epsilon-net argument to bound the maximum of the above expression across all unit norm vectors $z$ and $v$ introduces a multiplicative factor of $\exp(C (d + r))$. Consequently, with probability at least $1 - \exp(C (d + r) - \frac{c \epsilon_5^2 \pG_1 \sigma_{\min}^{\star^2}}{\sigma_\eta^2 \kappa^2}) - 8 T \exp(r - c \pG_1) - 2 T \exp(r - \frac{c \epsilon_3^2 \pG_1}{\sigma_\eta^2})$, it follows that
\begin{align*}
\|\sum_{t=1}^T \sum_{n=0}^{\pG_1} (1 - \rho) \eta_\nt x_{b_\nt} w_t^\top\|\leqslant\epsilon_5 (1 - \rho) \big(1 + \frac{0.04}{(1 - 2 \rho)^2}\big) \pG_1 \sigma_{\min}^{\star^2}.
\end{align*}
Following the same idea, with the same probability, we have
\vspace{-2mm}
\begin{align*}
\|\sum_{t=1}^T \sum_{n=0}^{\pG_1} \rho \eta_\nt \zeta_\nt w_t^\top\| \leqslant \epsilon_5 \rho \big(1 + \frac{0.04}{(1 - 2 \rho)^2}\big) \pG_1 \sigma_{\min}^{\star^2}.
\end{align*}
By combining these results and Proposition~\ref{p3}, with probability at least $1 - 2 \exp(C (d + r) - \frac{c \epsilon_5^2 \pG_1 \sigma_{\min}^{\star^2}}{\sigma_\eta^2 \kappa^2}) - 4 \exp(C (d + r) - c \frac{\epsilon_4^2 \pG_1 T}{\mu^2 r \kappa^4}) - 8 T \exp(r - c \pG_1) - 2 T \exp(r - \frac{c \epsilon_3^2 \pG_1}{\sigma_\eta^2})$, we show that 
\begin{align*}
&\|\GradB - \bE[\GradB]\|\\
&\leqslant(1 - \rho)^2 \|\GradB_1 - \bE[\GradB_1]\|+\rho^2 \|\GradB_4 - \bE[\GradB_4]\\
&+\rho (1 - \rho) (\|\GradB_2 - \bE[\GradB_2]\| + \|\GradB_3 - \bE[\GradB_3]\|)\\
&+\Big\|\sum_{t=1}^T \sum_{n=0}^{\pG_1} (1 - \rho) \eta_\nt x_{b_\nt} w_t^\top\Big\| + \Big\|\sum_{t=1}^T \sum_{n=0}^{\pG_1} \rho \eta_\nt \zeta_\nt w_t^\top\Big\|\\
&\leqslant2\Big(1+\frac{0.04}{(1-2\rho)^2}\Big)\Big(1+\frac{2}{(1-2\rho)^2}\Big)\epsilon_4 \delt\pG_1\sigma_{\min}^{\star^2}.
\end{align*}
\end{proof}

\begin{lemma}\label{l8}
Let $\rho \in [0, \frac{\alpha\rbnt}{\rbnt+\sqrt{2\frac{r}{T}\log\frac{\pG_1 T}{\delta}}\mu\sigma_{\max}^\star}]$. Then, with probability at least $1-\delta$, the conservative feature vector is safe. 
\end{lemma}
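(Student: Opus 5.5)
We need to show that, with probability at least $1-\delta$, the conservative action $x_\nt=(1-\rho)x_\bnt+\rho\zeta_\nt$ satisfies $x_\nt^\top\thetats\geqslant(1-\alpha)\rbnt$ for every $n\in[\pG_1]$ and $t\in[T]$. Since $x_\bnt^\top\thetats=\rbnt$, the plan is to expand the expected reward as
\begin{align*}
x_\nt^\top\thetats=(1-\rho)\rbnt+\rho\,\zeta_\nt^\top\thetats .
\end{align*}
Subtracting $(1-\alpha)\rbnt$, safety is equivalent to
\begin{align*}
(\alpha-\rho)\rbnt+\rho\,\zeta_\nt^\top\thetats\geqslant 0 .
\end{align*}
So the whole argument reduces to a lower bound on the Gaussian term $\zeta_\nt^\top\thetats$, holding uniformly over all round--task pairs in the first epoch.

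Because $\zeta_\nt$ is i.i.d. standard Gaussian, $\zeta_\nt^\top\thetats\sim\N(0,\|\thetats\|^2)$. Applying Proposition~\ref{proposition2} to $-\zeta_\nt^\top\thetats$ gives, for each fixed pair, $\zeta_\nt^\top\thetats\geqslant-\sqrt{2\log\frac{\pG_1T}{\delta}}\,\|\thetats\|$ except with probability $\delta/(\pG_1T)$. A union bound over the $\pG_1T$ pairs then gives this bound simultaneously for all pairs with probability at least $1-\delta$.

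Next I would bound $\|\thetats\|$ via the incoherence derived in Appendix~\ref{app:prelim}. Since $B^\star$ has orthonormal columns, $\|\thetats\|=\|B^\star w_t^\star\|=\|w_t^\star\|\leqslant\mu\sqrt{\frac{r}{T}}\sigma_{\max}^\star$. On the high-probability event this gives
\begin{align*}
x_\nt^\top\thetats-(1-\alpha)\rbnt\geqslant(\alpha-\rho)\rbnt-\rho\sqrt{2\tfrac{r}{T}\log\tfrac{\pG_1T}{\delta}}\,\mu\sigma_{\max}^\star .
\end{align*}
The right-hand side is nonnegative exactly when
\begin{align*}
\rho\Big(\rbnt+\sqrt{2\tfrac{r}{T}\log\tfrac{\pG_1T}{\delta}}\,\mu\sigma_{\max}^\star\Big)\leqslant\alpha\rbnt ,
\end{align*}
which is precisely the stated range for $\rho$. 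This closes the argument, since the right-hand side is nonnegative for every pair on the event.

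The only delicate point is the sign of $\rbnt$. The rearrangement that isolates $\rho$ assumes $\rbnt\geqslant0$, which is implicit in the stated interval, since otherwise the interval is empty or reversed. I would state this assumption explicitly, as is standard in the conservative bandit literature \cite{moradipari2020stage}.

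One might also worry that $\rho$ is a single parameter while $\rbnt$ varies with $(n,t)$. I would read the interval as holding for every pair, which is equivalent to taking the minimum over $(n,t)$. With that reading, the union bound is the only probabilistic ingredient, so I expect no real obstacle.
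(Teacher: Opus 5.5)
Your proof is correct and follows essentially the same route as the paper: you reduce safety to a lower bound on $\rho\,\zeta_\nt^\top\thetats$, control this Gaussian term with a tail bound and a union bound over the $\pG_1 T$ round--task pairs, and replace $\|\thetats\|$ by $\mu\sqrt{r/T}\,\sigma_{\max}^\star$ via incoherence. Your version is slightly cleaner, because you bound the lower tail of $\zeta_\nt^\top\thetats$ directly and state $\rbnt\geqslant 0$ explicitly, whereas the paper calls the condition with $|\zeta_\nt^\top\thetats|$ ``equivalent'' (it is only sufficient) and then controls it with a one-sided tail bound.
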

\begin{proof}
To determine the safety of the conservative feature vector, it is sufficient to demonstrate that
\begin{equation}
(1 - \rho) x_{b_\nt}^\top \thetats + \rho \zeta_\nt^\top \thetats \geqslant (1 - \alpha) \rbnt. \nonumber
\end{equation}
This is equivalent to satisfy
\begin{equation}
\rho \leqslant \frac{\alpha \rbnt}{\rbnt + |\zeta_\nt^\top \thetats|}. \label{eq:rho_ineq}
\end{equation}
Since $\zeta_\nt$ follows standard Gaussian distribution, $\zeta_\nt^\top \thetats\sim\N(0,\|\thetats\|^2)$. Utilizing Gaussian tail bounds and union bound over $T$ tasks and $\pG_1$ rounds, with probability at least $1-\delta$
$$
\zeta_\nt^\top \thetats\leqslant\sqrt{2\log\frac{\pG_1 T}{\delta}}\|\thetats\|.
$$
To prove the inequality Eq.~\eqref{eq:rho_ineq}, we combine this result with Assumption~\ref{assume:incoherence}. It is sufficient to show that
\begin{equation}
\rho\leqslant\frac{\alpha\rbnt}{\rbnt+\sqrt{2\frac{r}{T}\log\frac{\pG_1 T}{\delta}}\mu\sigma_{\max}^\star}. \nonumber
\end{equation}
This provides the sufficient condition for the range of $\rho$.
\end{proof}

\subsection{Proof of Theorem~\ref{T5}} \label{proof_T5}
Based on the analysis from Theorem~5.2 in \cite{OurICML}, we have
\begin{align*}
\|P \wB^+\| &\leqslant \|P B\| \|I - \gamma W W^\top\| + \frac{\gamma}{\pG_1} \|\bE[\GradB] - \GradB\|, \\
\SE(B^+, B^\star) &\leqslant \frac{\|P \widehat{B}^+\|}{1 - \frac{\gamma}{\pG_1} \|\bE[\GradB]\| - \frac{\gamma}{\pG_1} \|\GradB - \bE[\GradB]\|}.
\end{align*}
Using the result from Lemma~\ref{l2}, with probability at least $1 - 8 T \exp(r - c \pG_1) - 2 T \exp(r - \frac{c \epsilon_3^2 \pG_1}{\sigma_\eta^2})$, we obtain
\vspace{-2mm}
\begin{align*}
\lambda_{\min}(I - \gamma W W^\top) = 1 - \gamma \|W\|^2 \geqslant 1 - \gamma \Big(1 + \frac{0.04}{(1 - 2 \rho)^2}\Big)^2 \sigma_{\max}^{\star^2},
\end{align*}
\vspace{-6mm}
\begin{align*}
\|I - \gamma W W^\top\|&=\lambda_{\max}(I - \gamma W W^\top)\\
&\leqslant 1 - \gamma \Big(\sqrt{1 - 4 \times 10^{-4}} - \frac{0.04}{(1 - 2 \rho)^2}\Big)^2 \sigma_{\min}^{\star^2}. 
\end{align*}
Since $\gamma \leqslant \frac{1}{\left(1 + \frac{0.04}{(1 - 2 \rho)^2}\right)^2 \sigma_{\max}^{\star^2}}$, $I - \gamma W W^\top$ is positive semi-definite. Combining the above with Lemma~\ref{l4} and setting $\epsilon_4 = 0.01$, with probability at least $1 - 2 \exp(C (d + r) - \frac{c \epsilon_5^2 \pG_1 \sigma_{\min}^{\star^2}}{\sigma_\eta^2 \kappa^2}) - 4 \exp(C (d + r) - c \frac{\epsilon_4^2 \pG_1 T}{\mu^2 r \kappa^4}) - 8 T \exp(r - c \pG_1) - 2 T \exp(r - \frac{c \epsilon_3^2 \pG_1}{\sigma_\eta^2})$, we have
\begin{align}
\|P \wB^+\|&\leqslant\|P B\| \|I - \gamma W W^\top\| + \frac{\gamma}{\pG_1} \|\bE[\GradB] - \GradB\| \nonumber \\
&\leqslant\Big[1 - \gamma \big(\sqrt{1 - 4 \times 10^{-4}} - \frac{0.04}{(1 - 2 \rho)^2}\big)^2 \sigma_{\min}^{\star^2}\Big] \delt \nonumber \\
&+2\gamma \big(1 + \frac{0.04}{(1 - 2 \rho)^2}\big) \big(1 + \frac{2}{(1 - 2 \rho)^2}\big) \epsilon_4 \delt \sigma_{\min}^{\star^2}\nonumber
\end{align}
\begin{align}
&\leqslant\Big(1 - \big[(\sqrt{1 - 4 \times 10^{-4}} - \frac{0.04}{(1 - 2 \rho)^2})^2\nonumber\\
&-2(1+\frac{0.04}{(1 - 2 \rho)^2})(1 + \frac{2}{(1 - 2 \rho)^2})\epsilon_4\big]\gamma\sigma_{\min}^{\star^2}\Big)\delt\nonumber\\
&\leqslant(1-0.5\gamma\sigma_{\min}^{\star^2})\delt,\nonumber\\
&\Big(1 - \frac{\gamma}{\pG_1} \|\bE[\GradB]\| - \frac{\gamma}{\pG_1} \|\GradB - \bE[\GradB]\|\Big)^{-1} \nonumber
\end{align}
\begin{align}
&\leqslant\Big(1 - (2 \rho^2 - 2 \rho + 1) \gamma (1 + \frac{0.04}{(1 - 2 \rho)^2}) (1 + \frac{2}{(1 - 2 \rho)^2})\mu\nonumber\\
&\delt \sqrt{r} \sigma_{\max}^{\star^2}-2\gamma(1 + \frac{0.04}{(1 - 2 \rho)^2})\sigma_{\min}^{\star^2}(1 + \frac{2}{(1 - 2 \rho)^2})\epsilon_4\delt\Big)^{-1} \nonumber\\
&\leqslant 1 + 2 (2 \rho^2 - 2 \rho + 1) \gamma(1 + \frac{0.04}{(1 - 2 \rho)^2})(1 + \frac{2}{(1 - 2 \rho)^2})\mu\nonumber\\
&\delt\sqrt{r}\sigma_{\max}^{\star^2}+4 \gamma (1 + \frac{0.04}{(1 - 2 \rho)^2}) \sigma_{\min}^{\star^2} (1 + \frac{2}{(1 - 2 \rho)^2}) \epsilon_4 \delt \label{T5_1}\\
&\leqslant 1+ 0.04(1 + \frac{0.04}{(1 - 2 \rho)^2}) \Big[(2 \rho^2 - 2 \rho + 1)(1 + \frac{2}{(1 - 2 \rho)^2})\nonumber\\
&+2\epsilon_4(1 + \frac{2}{(1 - 2 \rho)^2})\Big]\gamma \sigma_{\min}^{\star^2}\leqslant 1 + 0.27 \gamma \sigma_{\min}^{\star^2}, \nonumber
\end{align}
where Eq.~\eqref{T5_1} is derived from $(1 - x)^{-1} \leqslant (1 + 2 x)$ if $|x| \leqslant \frac{1}{2}$ for $\rho \in [0, 0.25] \cup [0.75, 1]$ and $\gamma\leqslant\frac{1}{(1 + \frac{0.04}{(1 - 2 \rho)^2})^2 \sigma_{\max}^{\star^2}}$. Combining these results above, we conclude that
\begin{align*}
\scalemath{0.9}{\SE(B^+, B^\star)\leqslant\big(1-0.5\gamma\sigma_{\min}^{\star^2}\big)\big(1+0.27\gamma\sigma_{\min}^{\star^2}\big)\delt\leqslant\big(1-0.23\gamma\sigma_{\min}^{\star^2}\big)\delt.}
\end{align*}
To ensure high probability guarantees for all stated results in the first epoch, it is necessary to set the bounds for $\pG_1$ and $\pG_1 T$. These bounds must guarantee that the following probability is sufficiently high: $1 - 2 \exp(C (d + r) - \frac{c \epsilon_5^2 \pG_1 \sigma_{\min}^{\star^2}}{\sigma_\eta^2 \kappa^2}) - 4 \exp(C (d + r) - c \frac{\epsilon_4^2 \pG_1 T}{\mu^2 r \kappa^4}) - 8 T \exp(r - c \pG_1) - 2 T \exp(r - \frac{c \epsilon_3^2 \pG_1}{\sigma_\eta^2})$. This required that each exponential term be substantially smaller than or equal to $d^{-10}$. By considering $\epsilon_3 = \frac{c}{2} \mu \sqrt{\frac{r}{T}} \delt \sigma_{\max}^\star$, $\epsilon_4 = 0.01$, $\epsilon_5 = c \epsilon_4 \delt$, and $\delta_L = \sqrt{2 (\rho^2 - \rho + 1)}$, we obtain
\begin{align*}
&C (d + r) - \frac{c \epsilon_5^2 \pG_1 \sigma_{\min}^{\star^2}}{\sigma_\eta^2 \kappa^2} < - 10 \log d \Rightarrow \pG_1 T > C (d + r) \kappa^2 \NSR \\
&C (d + r) - c \frac{\epsilon_4^2 \pG_1 T}{\mu^2 r \kappa^4} < - 10 \log d \Rightarrow \pG_1 T > C (d + r) \mu^2 r \kappa^4
\end{align*}
\begin{align*}
&\log T + r - c \pG_1 < - 10 \log d \Rightarrow \pG_1 > C (r + \log T + \log d) \\
&\scalemath{0.95}{\log T + r - c \frac{\epsilon_3^2 \pG_1}{\sigma_\eta^2} < - 10 \log d \Rightarrow \pG_1 > \frac{C (r + \log T + \log d)}{\mu^2 r \kappa^2} \NSR}.
\end{align*}
Consequently, combining these results, we conclude that
\begin{align*}
&\pG_1 T > C (d + r) \kappa^2 \max(\mu^2 r \kappa^2, \NSR) \\
&\pG_1 > C (r + \log T + \log d) \max(1, \frac{\NSR}{\mu^2 r \kappa^2}). 
\end{align*}
Moreover, when the initialization phase is considered, and by combining the above bounds with $\pG_1 T > C \mu^2 \kappa^2 \left(d r \frac{\kappa^2}{\delta_0^2} + \frac{d}{\delta_0^2} \NSR\right) = C^\prime \mu^4 r \kappa^6 d \left(r \kappa^2 + \NSR\right)$ from Proposition~\ref{p6}, with the condition $d + r \leqslant d r$, we prove that
\begin{align*}
&\pG_1 T > C \mu^4 d r \kappa^6 \max(r \kappa^2, \NSR) \\
&\pG_1 > C (r + \log T + \log d) \max(1, \frac{\NSR}{\mu^2 r \kappa^2}). 
\end{align*}
This completes the proof that ensures error decay in subsequent GD iterations in epoch-$1$ (safe exploration).
\qed

\subsection{Proof of Theorem~\ref{l9}} \label{proof_l9}
To guarantee the chosen action $x_\nt$ is safe, it is necessary that the following inequality is held: 
$$
x_\nt^\top \thetats \geqslant (1 - \alpha) \rbnt. 
$$
Adding and subtracting $x_\nt^\top \thetahatt^{\sm}$, we derive
$$
x_\nt^\top \thetahatt^{\sm} + x_\nt^\top (\thetats - \thetahatt^{\sm}) \geqslant (1 - \alpha) \rbnt. 
$$
Given that $x_\nt$ follows a standard Gaussian distribution, it follows that $x_\nt^\top(\thetats-\thetahatt^{\sm})\sim\N(0,\|\thetats-\thetahatt^{\sm}\|^2)$. Utilizing Gaussian tail bounds and the union bound over $T$ tasks and $\pG_{m+1}-\pG_m$ rounds, with probability at least $1-\delta$, we have
$$
x_\nt^\top(\thetats-\thetahatt^{\sm})\leqslant\sqrt{2\log\frac{(\pG_{m+1}-\pG_m) T}{\delta}}\|\thetats-\thetahatt^{\sm}\|.
$$
To guarantee the validity of this inequality, we combine this result with $\|\thetahatt^{\sm} - \thetats\| \leqslant \|\thetahatt^{\sone} - \thetats\|$ for $m \geqslant 1$. Consequently, to ensure constraints are met, we need to show that the following inequality holds.
$$
x_\nt^\top\thetahatt^{\sm}\geqslant\sqrt{2\log\frac{(\pG_{m+1}-\pG_m) T}{\delta}}\|\thetahatt^{\sone}-\thetats\|+(1-\alpha)\rbnt. 
$$
Using Lemma~\ref{l2} and Theorem~\ref{T5}, the inequality reduces to
\begin{align*}
x_\nt^\top \thetahatt^{\sm}\geqslant&(1+\frac{2}{(1-2\rho)^2})\mu(1-0.23\gamma\sigma_{\min}^{\star^2})^L\delta_0\sigma_{\max}^\star\\
&\sqrt{\frac{2r}{T}\log\frac{(\pG_{m+1}-\pG_m) T}{\delta}}+(1-\alpha)\rbnt.
\end{align*}
Given that $\delta_0 < 1$, $\rho\in[0,0.25]\cup[0.75,1]$, and by setting $\gamma = \frac{c}{(1 + \frac{0.04}{(1 - 2 \rho)^2})^2 \sigma_{\max}^{\star^2}}$, it suffices to ensure that
\begin{align*}
x_\nt^\top\thetahatt^{\sm}&\geqslant(1+\frac{2}{(1-2\rho)^2})\mu(1-\frac{c}{\kappa^2})^L\sigma_{\max}^\star \\
&\sqrt{\frac{2r}{T}\log\frac{(\pG_{m+1}-\pG_m) T}{\delta}}+(1-\alpha)\rbnt.
\end{align*}
Given the fact that $x_\nt^\top \thetahatt^{\sm} \geqslant x_\nt^{\star^\top} \thetahatt^{\sm}$, to ensure safety (constraint satisfaction) it suffices to verify that
\begin{align*}
x_\nt^{\star^\top}\thetats+x_\nt^{\star^\top}(\thetahatt^{\sm}-\thetats)\geqslant&(1-\alpha)\rbnt+\big(1+\frac{2}{(1-2\rho)^2}\big)\mu\sigma_{\max}^\star\\
&\big(1-\frac{c}{\kappa^2}\big)^L\sqrt{\frac{2r}{T}\log\frac{(\pG_{m+1}-\pG_m) T}{\delta}}.
\end{align*}
We can rewrite the safety condition as
\begin{align*}
x_\nt^{\star^\top}\thetats&\geqslant2(1+\frac{2}{(1-2\rho)^2})\mu(1 - \frac{c}{\kappa^2})^L\sigma_{\max}^\star\\
&\sqrt{\frac{2r}{T}\log\frac{(\pG_{m+1}-\pG_m)T}{\delta}}+(1-\alpha)\rbnt.
\end{align*}
Applying $\log x \geqslant 1 - \frac{1}{x}$, for $x\in (0,1]$, it is equivalent to satisfy
\begin{align*}
\big(1-\frac{c}{\kappa^2}\big)^L&\leqslant\frac{\kbnt+\alpha\rbnt}{2 \big(1+\frac{2}{(1-2\rho)^2}\big)\mu\sigma_{\max}^\star\sqrt{\frac{2r}{T}\log\frac{NT}{\delta}}}\\
L&\geqslant\frac{\log\Big(\frac{\kbnt+\alpha\rbnt}{2\big(1+\frac{2}{(1-2\rho)^2}\big)\mu\sigma_{\max}^\star\sqrt{\frac{2r}{T}\log\frac{NT}{\delta}}}\Big)}{\log\big(1-\frac{c}{\kappa^2}\big)}\\
L&\geqslant C\kappa^2\log\Big(\frac{\kbnt+\alpha\rbnt}{2\big(1+\frac{2}{(1-2\rho)^2}\big)\mu\sigma_{\max}^\star\sqrt{\frac{2r}{T}\log\frac{NT}{\delta}}}\Big). 
\end{align*}
Thus, we obtained the bound on the number of GD iterations to ensure safety. This completes the proof. 
\qed

\subsection{Proof of Theorem~\ref{T11}}\label{proof_T11}
This theorem is derived using Theorem~2.3 from \cite{singh2024noisy}. To obtain our result, we set $\epsilon_2 = c \epsilon_1 \delt$ in the original proof. This modification guarantees an exponential decay in $\SE(B^+, B^\star)$ at every iteration, given the specified conditions. 
\qed

\subsection{Proof of Theorem~\ref{T10}}\label{proof_T10}
We start by determining the upper bound for the cumulative regret $\pR_\NT^1$ during the first epoch, and $\pR_\NT^2$ in the following epochs. In the first epoch, since both $x_{b_\nt}$ and $\zeta_\nt$ are i.i.d standard Gaussian vectors, it follows that $\sum_{n=1}^{\pG_1} \left(x_\nt^\star - (1 - \rho) x_{b_\nt} - \rho \zeta_\nt\right)^\top \thetats \sim \N(0, 2 \pG_1 (\rho^2 - \rho + 1) \|\thetats\|^2)$. Using the Chernoff bound for Gaussian stated in Proposition~\ref{proposition2}, with probability at least $1 - \delta$, we have
\begin{align}
\scalemath{0.95}{\sum_{n=1}^{\pG_1} \left(x_\nt^\star - (1 - \rho) x_{b_\nt} - \rho \zeta_\nt\right)^\top \thetats\leqslant 2 \sqrt{\pG_1 (\rho^2 - \rho + 1) \log \frac{1}{\delta}} \|\thetats\|}\label{eq:reg-new-addition}. 
\end{align}
By using the union bound and combining the result from Theorem~\ref{T5}, with probability at least $1 - \delta - L d^{-10}$, we have

\begin{align}
\pR_\NT^1&=\sum_{t=1}^T \sum_{n=1}^{\pG_1} x_\nt^{\star^\top} \thetats - x_\nt^\top \thetats \nonumber \\
&= \sum_{t=1}^T \sum_{n=1}^{\pG_1} (x_\nt^\star - (1 - \rho) x_{b_\nt} - \rho \zeta_\nt)^\top \thetats \nonumber \\
&\leqslant \sum_{t=1}^T 2 \sqrt{\pG_1 (\rho^2 - \rho + 1) \log \frac{1}{\delta}} \|\thetats\| \label{T10_1_new} \\
&\leqslant 2 \mu \sigma_{\max}^\star \sqrt{\pG_1 r T (\rho^2 - \rho + 1) \log \frac{1}{\delta}}, \label{T10_1}
\end{align}
where Eq.~\eqref{T10_1} is derived from Assumption~\ref{assume:incoherence} and Eq.~\eqref{T10_1_new} is from Eq.~\eqref{eq:reg-new-addition}. By applying $L = C \kappa^2 \log \Big(\frac{\sqrt{2 (\rho^2 - \rho + 1)}}{1 + \frac{2}{(1 - 2 \rho)^2}}\Big)$, we demonstrate $\SE(B_L, B^\star) \leqslant \frac{\sqrt{2 (\rho^2 - \rho + 1)}}{1 + \frac{2}{(1 - 2 \rho)^2}}$. Subsequently, using the result from Lemma~\ref{l2}, we obtain 
\begin{align*}
\|\thetats - \thetahat_{1, t}\|&\leqslant\big(1 + \frac{2}{(1 - 2 \rho)^2}\big) \mu \delta_L \sqrt{\frac{r}{T}} \sigma_{\max}^\star\\
&\leqslant\sqrt{2 (\rho^2 - \rho + 1)} \mu \sqrt{\frac{r}{T}} \sigma_{\max}^\star. 
\end{align*}
Given that $\rho\in[0,0.25]\cup[0.75,1]$, we have $\sqrt{2(\rho^2-\rho+1)}>1$. Thus,
\vspace{-2mm}
\begin{align*}
&C\kappa^2\log\Big(\frac{\sqrt{2(\rho^2-\rho+1)}}{1+\frac{2}{(1-2 \rho)^2}}\Big)> C\kappa^2\log\Big(\frac{1}{1+\frac{2}{(1-2\rho)^2}}\Big)\\
&=C\kappa^2\log\Big(\frac{2\mu\sigma_{\max}^\star\sqrt{\frac{2r}{T}\log\frac{NT}{\delta}}}{2\big(1+\frac{2}{(1-2\rho)^2}\big)\mu\sigma_{\max}^\star\sqrt{\frac{2r}{T}\log\frac{NT}{\delta}}}\Big)\\
&\geqslant C\kappa^2\log\Big(\frac{2x_\nt^{\star^\top}\thetats}{2\big(1+\frac{2}{(1-2\rho)^2}\big)\mu\sigma_{\max}^\star\sqrt{\frac{2r}{T}\log\frac{NT}{\delta}}}\Big) \\
&\geqslant C\kappa^2\log\Big(\frac{\kbnt+\alpha\rbnt}{2\big(1+\frac{2}{(1-2\rho)^2}\big)\mu\sigma_{\max}^\star\sqrt{\frac{2r}{T}\log\frac{NT}{\delta}}}\Big),
\end{align*}
where the first inequality is derived with at least probability $1-\delta$, $x_\nt^{\star^\top}\thetats\leqslant\mu\sigma_{\max}^\star\sqrt{\frac{2r}{T}\log\frac{NT}{\delta}}$. Thus, the inequality above shows that the bound on $L$ satisfies the condition for $L$ in Theorem~\ref{l9}. For subsequent epochs, given $x_\nt$ as an i.i.d standard Gaussian, we derive that $\sum_{m=2}^M \sum_{n=\pG_{m-1}+1}^{\pG_m} x_\nt^{\star^\top} (\thetats - \thetahatmt) - x_\nt^\top (\thetats - \thetahatmt) \sim \N(0, 2 (N - \pG_1) \|\thetats - \thetahatmt\|^2)$. Using the Chernoff bound for Gaussian stated in Proposition~\ref{proposition2}, with probability at least $1 - \delta$, we have
\begin{align*}
&\sum_{m=2}^M \sum_{n=\pG_{m-1}+1}^{\pG_m} x_\nt^{\star^\top} (\thetats - \thetahatmt) - x_\nt^\top (\thetats - \thetahatmt)\\
&\leqslant 2 \sqrt{(N - \pG_1) \log \frac{1}{\delta}} \|\thetats - \thetahatmt\|. 
\end{align*}
By applying the union bound and Lemma~\ref{l2} to our analysis, with probability at least $1 - \delta - (M - 1) L d^{-10}$, we have

\begin{align}
\pR_\NT^2&=\sum_{m=2}^M \sum_{t=1}^T \sum_{n=\pG_{m-1}+1}^{\pG_m} x_\nt^{\star^\top} \thetats - x_\nt^\top \thetats\nonumber
\end{align}
\begin{align}
&=\sum_{m=2}^M \sum_{t=1}^T \sum_{n=\pG_{m-1}+1}^{\pG_m} x_\nt^{\star^\top} \thetats - x_\nt^\top \thetats+x_\nt^{\star^\top} \thetahatmt - x_\nt^{\star^\top} \thetahatmt\nonumber\\
&\leqslant \sum_{t=1}^T 2 \sqrt{(N - \pG_1) \log \frac{1}{\delta}} \|\thetats - \thetahatmt\|\nonumber\\
&\leqslant \sum_{t=1}^T 2 \sqrt{(N - \pG_1) \log \frac{1}{\delta}} \|\thetats - \thetahat_{1, t}\|\label{T10_2}\\
&\leqslant2\mu\sigma_{\max}^\star\sqrt{2(N-\pG_1)(\rho^2-\rho+1)rT\log\frac{1}{\delta}},\nonumber
\end{align}
where Eq.~\eqref{T10_2} is derived from $\|\thetats - \thetahatmt\| \leqslant \|\thetats - \thetahat_{1, t}\|$. Finally, by applying the union bound across all epochs, with probability at least $1 - 2 \delta - M L d^{-10}$, the cumulative regret $\pR_\NT$ is bound as follows: 
\begin{align}
\pR_\NT&=\sum_{m=1}^M \sum_{t=1}^T \sum_{n=\pG_{m-1}+1}^{\pG_m} x_\nt^{\star^\top} \thetats - x_\nt^\top \thetats\nonumber\\
&=\sum_{t=1}^T \sum_{n=1}^{\pG_1} x_\nt^{\star^\top} \thetats - x_\nt^\top \thetats+\sum_{m=2}^M \sum_{t=1}^T \sum_{n=\pG_{m-1}+1}^{\pG_m} x_\nt^{\star^\top} \thetats - x_\nt^\top \thetats\nonumber
\end{align}
\begin{align}
&\leqslant2 \mu \sigma_{\max}^\star \sqrt{\pG_1 r T (\rho^2 - \rho + 1) \log \frac{1}{\delta}}\nonumber\\
&+2 \mu \sigma_{\max}^\star \sqrt{2 (N - \pG_1) (\rho^2 - \rho + 1) r T \log \frac{1}{\delta}}\nonumber\\
&= 2 \mu \sigma_{\max}^\star \sqrt{r T (\rho^2 - \rho + 1) \log \frac{1}{\delta}} \left(\sqrt{\pG_1} + \sqrt{2 (N - \pG_1)}\right) \nonumber \\
&\leqslant4\sqrt{2}\mu\sigma_{\max}^\star\sqrt{rNT(\rho^2-\rho+1)\log\frac{1}{\delta}},\label{T10_3}
\end{align}
where Eq.~\eqref{T10_3} is derived from $\sqrt{x} + \sqrt{y} \leqslant 2 \sqrt{x + y}$. 
\qed

\begin{wrapfigure}{l}{25mm} 
\includegraphics[width=1in,height=1.25in,clip,keepaspectratio]{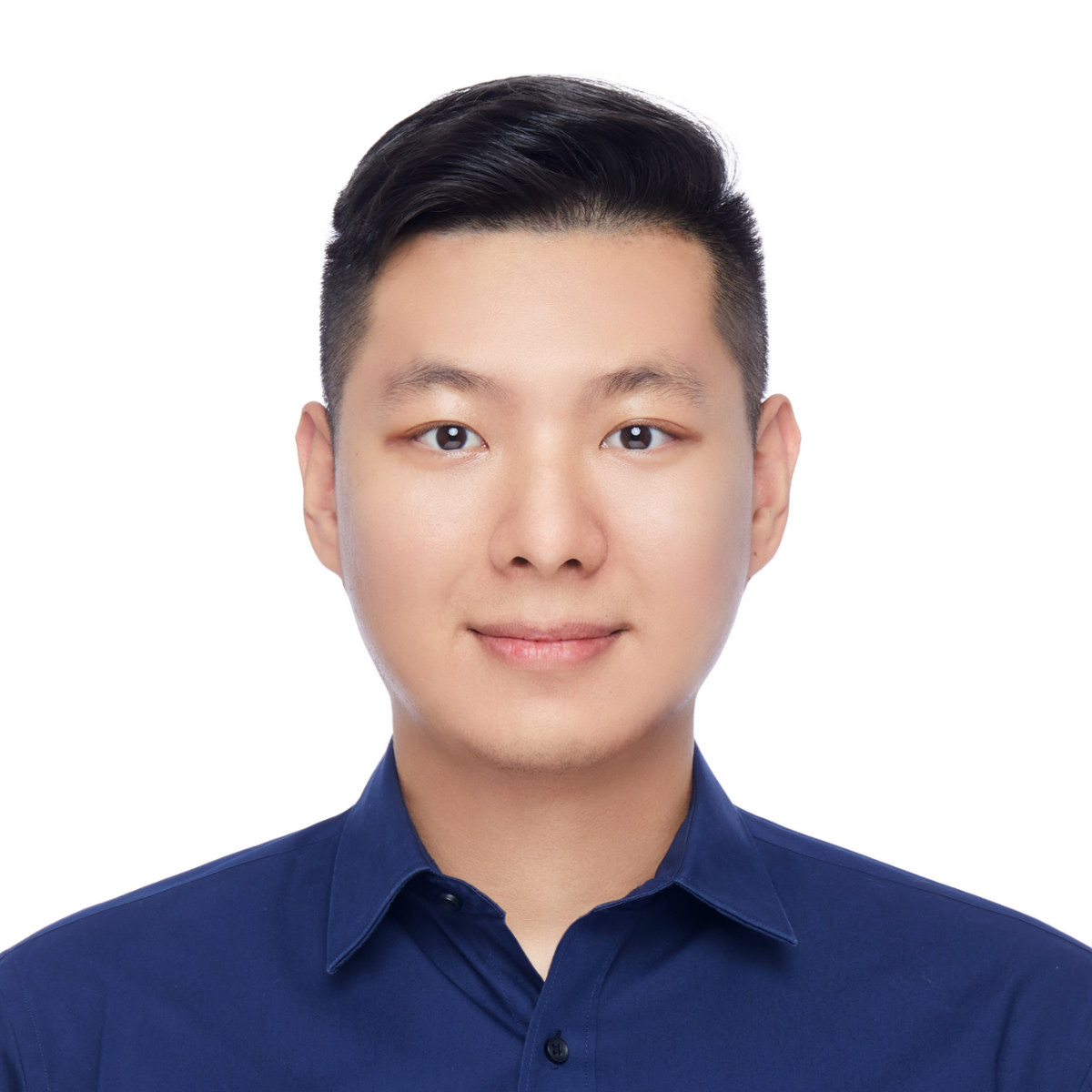}
\end{wrapfigure}\par
\textbf{Jiabin Lin} received his B.S. degree in Computer Science and Communication Engineering from Guangxi University of Science and Technology in 2019, and his M.S. degree in Electrical Engineering from the University of South Florida in 2021.  He is a graduate student in the Department of Electrical and Computer Engineering at Iowa State University. His research interests include bandit learning, multi-task learning, and representation learning.\par
\begin{wrapfigure}{l}{25mm} 
\includegraphics[width=1in,height=1.25in,clip,keepaspectratio]{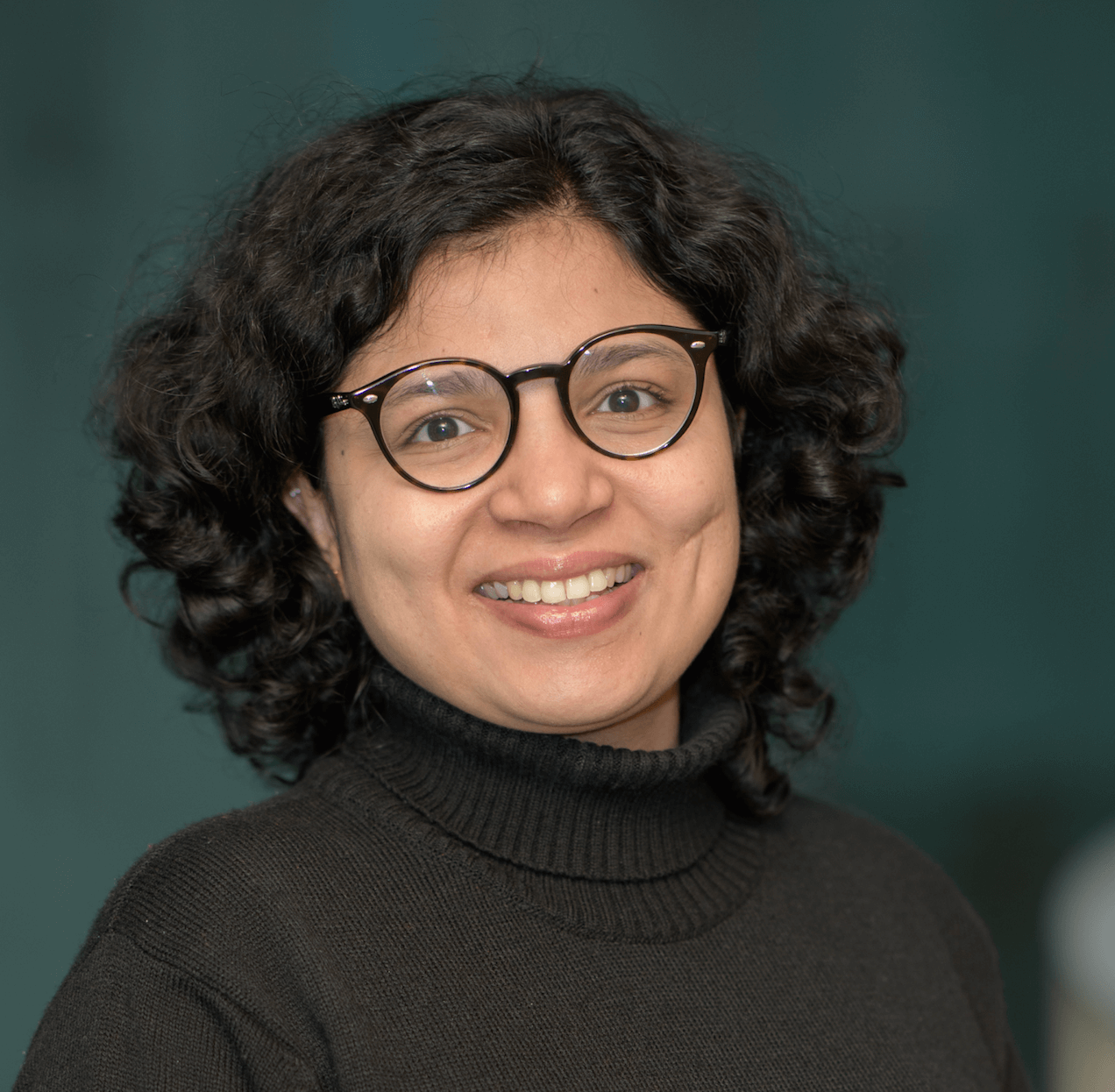}
\end{wrapfigure}\par
\textbf{Shana Moothedath} received her Ph.D. degree in Electrical Engineering from the Indian Institute of Technology Bombay (IITB) in 2018, and she was a postdoctoral scholar in Electrical and Computer Engineering at the University of Washington, Seattle (UW) till 2021. Currently, she is the Harpole-Pentair Assistant Professor in the Department of Electrical and Computer Engineering at Iowa State University. Her research focuses on distributed decision-making, learning and control, and security of cyber-physical systems. She was awarded ISU Award for Early Achievement in Research 2026, NSF CAREER Award 2025, the Best Research Thesis Award at IITB in 2019, and selected as a MIT-EECS Rising Star in 2019.\par

\end{document}